\newtheorem{theorem}{Theorem}
\newtheorem{lemma}{Lemma}
\newtheorem{remark}{Remark}
\newtheorem{assumption}{Assumption}
\newtheorem{corollary}[lemma]{Corollary}
\newtheorem{definition}{Definition}
\newenvironment{proof}[1][Proof]{\begin{trivlist}
\item[\hskip \labelsep {\bfseries #1}]$ $\newline }{\qed\end{trivlist}}
\newcommand*\del[0]{\partial}
\renewcommand*\div[0]{\textbf{div}}
\newcommand*\ddt[0]{\frac{d}{d t}}
\newcommand*\tr[0]{\text{tr}}
\newcommand*\lin[1]{\bm{\left\langle} #1 \bm{\right\rangle}}
\newcommand*\E[1]{\mathbb{E}\left[{#1}\right]}
\newcommand*\Ep[2]{\mathbb{E}_{#1}\left[#2\right]}
\newcommand*\Pspace[0]{\mathscr{P}}
\newcommand*\N[0]{\mathcal{N}}
\newcommand\numberthis{\addtocounter{equation}{1}\tag{\theequation}}
\newcommand*\lrb[1]{{\left[#1\right]}}
\newcommand*\lrbb[1]{\left\{#1\right\}}
\newcommand*\lrp[1]{{\left(#1\right)}}
\newcommand*\lrn[1]{{\left\|#1\right\|}}
\newcommand*\lrabs[1]{{\left|#1\right|}}
\newcommand*\cvec[2]{\begin{bmatrix} #1\\#2\end{bmatrix}}
\newcommand*\ind[1]{{\mathbbm{1}\lrbb{#1}}}
\newcommand*{\qed}{\hfill\ensuremath{\blacksquare}}
\renewcommand*{\Re}{\mathbb{R}}
\newcommand*{\R}{\mathcal{R}}
\newcommand*\circled[1]{\tikz[baseline=(char.base)]{
\node[shape=circle,draw,inner sep=1pt] (char) {#1};}}
\newcommand*{\twocase}[2]{\left\{\begin{array}{ll}
        #1\\
        #2
        \end{array}\right.}
\newcommand*{\threecase}[6]{\left\{\begin{array}{ll}
        #1, & \text{for } #2\\
        #3, & \text{for } #4\\
        #5, & \text{for } #6
        \end{array}\right.}
\newcommand*{\fourcase}[8]{\left\{\begin{array}{ll}
        #1, & \text{for } #2\\
        #3, & \text{for } #4\\
        #5, & \text{for } #6\\
        #7, & \text{for } #8
        \end{array}\right.}
\newcommand*{\bx}{\bar{x}}
\newcommand*{\by}{\bar{y}}
\newcommand*{\bw}{\bar{w}}
\newcommand{\aq}{\alpha_q}
\newcommand{\Rq}{{\mathcal{R}_q}}
\newcommand{\bo}{\bigotimes}
\newcommand{\cm}{c_m}
\newcommand{\LR}{L_R}
\newcommand{\LN}{L_N}
\renewcommand{\by}{\bar{y}}
\newcommand{\Law}{\mathsf{Law}}
\icmltitlerunning{Stochastic Gradient and Langevin Processes}
\begin{document}

\twocolumn[
\icmltitle{Stochastic Gradient and Langevin Processes}



\icmlsetsymbol{equal}{*}

\begin{icmlauthorlist}
\icmlauthor{Xiang Cheng}{uc}
\icmlauthor{Dong Yin}{uc}
\icmlauthor{Peter Bartlett}{uc}
\icmlauthor{Michael Jordan}{uc}
\end{icmlauthorlist}

\icmlcorrespondingauthor{Xiang Cheng}{x.cheng@berkeley.edu}

\icmlaffiliation{uc}{Department of Electrical Engineering and Computer Science, University of California, Berkeley}

\icmlkeywords{Machine Learning, ICML}

\vskip 0.3in
]



\printAffiliationsAndNotice{} 

\begin{abstract}
We prove quantitative convergence rates at which discrete Langevin-like processes converge to the invariant distribution of a related stochastic differential equation. We study the setup where the additive noise can be non-Gaussian and state-dependent and the potential function can be non-convex. We show that the key properties of these processes depend on the potential function and the second moment of the additive noise. We apply our theoretical findings to studying the convergence of Stochastic Gradient Descent (SGD) for non-convex problems and corroborate them with experiments using SGD to train deep neural networks on the CIFAR-10 dataset.
\end{abstract}
\vspace{-0.15in}
\begin{section}{Introduction}
Stochastic Gradient Descent (SGD) is one of the workhorses of modern machine learning. In many nonconvex optimization problems, such as training deep neural networks, SGD is able to produce solutions with good generalization error; indeed, there is evidence that the generalization error of an SGD solution can be significantly better than that of Gradient Descent (GD) \cite{keskar2016large, jastrzkebski2017three, he2019control}. 
This suggests that, to understand the behavior of SGD, it is not enough to consider the limiting cases such as small step size or large batch size where it degenerates to GD. In this paper, we take an alternate view of SGD as a sampling algorithm, and aim to understand its convergence to an appropriate stationary distribution.

There has been rapid recent progress in understanding the finite-time behavior of MCMC methods, by comparing them to stochastic differential equations (SDEs), such as the Langevin diffusion. It is natural in this context to think of SGD as a discrete-time approximation of an SDE.  There are, however, two significant barriers to extending previous analyses to the case of SGD. First, these analysis are often restricted to isotropic Gaussian noise, whereas the noise in SGD can be far from Gaussian. 
Second, the noise depends significantly on the current state (the optimization variable). For instance, if the objective is an average over training data with a nonnegative loss, as the objective approaches zero the variance of minibatch SGD goes to zero. Any attempt to cast SGD as an SDE must be able to handle this kind of noise.

This motivates the study of Langevin MCMC-like methods that have a state-dependent noise term:
\begin{align*}
w_{(k+1)\delta} = w_{k\delta} - \delta \nabla U(w_{k\delta}) + \sqrt{\delta} \xi(w_{k\delta}, \eta_k),
\numberthis \label{e:discrete-clt}
\end{align*}
where $ w_t \in \Re^d $ is the state variable at time $t$, $\delta$ is the step size, $U:\Re^d\to\Re$ is a (possibly nonconvex) potential, $\xi:\Re^d \times \Omega \to \Re^d$ is the {\em noise function}, and $\eta_k$ are sampled i.i.d.\ according to some distribution over $\Omega$ (for example, in minibatch SGD, $\Omega$ is the set of subsets of indices in the training sample).

Throughout this paper, we assume that $\Ep{\eta}{\xi(x,\eta)}=0$ for all $x$. We define a matrix-valued function $M(\cdot):\Re^d \to \Re^{d\times d}$ to be the square root of the covariance matrix of $\xi$; i.e., for all $x$, $M(x) := \sqrt{\Ep{\eta}{\xi(x,\eta) \xi(x,\eta)^T}}$, where for a positive semidefinite matrix $G$, $A=\sqrt{G}$ is the unique positive semidefinite matrix such that $A^2 = G$.

In studying the generalization behavior of SGD, earlier work~\cite{jastrzkebski2017three,he2019control} propose that \eqref{e:discrete-clt} be approximated by the stochastic process $y_{(k+1)\delta} = y_{k\delta} - \delta \nabla U(y_{k\delta}) + \sqrt{\delta} M(y_{k\delta}) \theta_k$ where $\theta_k\sim \N(0,I)$, or, equivalently:
\begin{align*}
&d y_t = -\nabla U(y_{k\delta}) dt + M(y_{k\delta}) dB_t
\numberthis \label{e:discrete-langevin}\\
&\quad \text{for } t\in[k\delta, (k+1)\delta],
\end{align*}
with $B_t$ denoting standard Brownian motion~\cite{karatzas1998brownian}. Specifically, the non-Gaussian noise $\xi(\cdot,\eta)$ is approximated by a Gaussian variable $M(\cdot) \theta$ with the same covariance, via an assumption that the minibatch size is large and an appeal to the central limit theorem.

The process in \eqref{e:discrete-langevin} can be seen as the Euler-Murayama discretization of the following SDE:
\begin{align*}
\numberthis \label{e:exact-sde}
d x_t = -\nabla U(x_t) dt + M(x_t) dB_t.
\end{align*}
We let $p^*$ denote the invariant distribution of \eqref{e:exact-sde}.

We prove quantitative bounds on the discretization error between \eqref{e:discrete-langevin}, \eqref{e:discrete-clt} and \eqref{e:exact-sde}, as well as convergence rates of \eqref{e:discrete-langevin} and \eqref{e:discrete-clt} to $p^*$. Our bounds are in Wasserstein-1 distance (denoted by $W_1(\cdot, \cdot)$ in the following). We present the full theorem statements in Section \ref{s:main_results}, and summarize our contributions below:

\begin{enumerate}[leftmargin=3mm]
\item
In Theorem \ref{t:main_gaussian}, we bound the discretization error between \eqref{e:discrete-langevin} and \eqref{e:exact-sde}. Informally, Theorem \ref{t:main_gaussian} states:
\begin{align*}
    &1.\ \textit{If $x_0 = y_0$, then for all $k$, $W_1(x_{k\delta}, y_{k\delta}) = O(\sqrt{\delta})$ }; \\
    &2.\ \textit{For $n \geq \tilde{O}\lrp{\frac{1}{\delta}}$, $W_1(p^*, \Law(y_{n\delta})) = O(\sqrt{\delta})$},
\end{align*}
where $\Law(\cdot)$ denotes the distribution of a random vector. This is a crucial intermediate result that allows us to prove the convergence of \eqref{e:discrete-clt} to \eqref{e:exact-sde}. We highlight that the variable diffusion matrix: 1) leads to a very large discretization error, due to the scaling factor of $\sqrt{\delta}$ in the $M(y_{k\delta}) \theta_k$ noise term, and 2) makes the stochastic process non-contractive (this is further compounded by the non-convex drift). Our convergence proof relies on a carefully constructed Lyapunov function together with a specific coupling. Remarkably, the $\epsilon$ dependence in our iteration complexity is the same as that in Langevin MCMC with constant isotropic diffusion \cite{durmus2016high}.

\item In Theorem \ref{t:main_nongaussian}, we bound the discretization error between \eqref{e:discrete-clt} and \eqref{e:exact-sde}.
Informally, Theorem \ref{t:main_nongaussian} states:
\begin{align*}
    &1.\ \textit{If $x_0 = w_0$, then for all $k$, $W_1(x_{k\delta}, w_{k\delta}) = O(\delta^{1/8})$ }; \\
    &2.\ \textit{For $n \geq \tilde{O}\lrp{\frac{1}{\delta}}$, $W_1(p^*, \Law(w_{n\delta})) = O(\delta^{1/8})$}.
\end{align*}
Notably, the noise in each step of \eqref{e:discrete-clt} may be far from Gaussian, but for sufficiently small step size, \eqref{e:discrete-clt} is nonetheless able to approximate \eqref{e:exact-sde}. This is a weaker condition than earlier work, which must assume that the batch size is sufficiently large so that CLT ensures that the per-step noise is approximately Gaussian.

\item 
Based on Theorem 2, we predict that for sufficiently small $\delta$, two different processes of the form \eqref{e:discrete-clt} will have similar distributions if their noise terms $\xi$ have the same covariance matrix, as that leads to the same limiting SDE \eqref{e:exact-sde}. In Section \ref{s:sgd}, we evaluate this claim empirically: we design a family of SGD-like algorithms and evaluate their test error at convergence. We observe that the noise covariance alone is a very strong predictor for the test error, regardless of higher moments of the noise. This corroborates our theoretical prediction that the noise covariance approximately determines the distribution of the solution. This is also in line with, and extends upon, observations in earlier work that the ratio of batch size to learning rate correlates with test error~\cite{jastrzkebski2017three,he2019control}.
\end{enumerate}
\end{section}

\begin{section}{Related Work}
Previous work has drawn connections between SGD noise and generalization~\cite{mandt2016variational, jastrzkebski2017three, he2019control, hoffer2017train,keskar2016large}. Notably, \citet{mandt2016variational, he2019control, jastrzkebski2017three} analyze favorable properties of SGD noise by arguing that in the neighborhood of a local minimum, \eqref{e:discrete-langevin} is roughly the discretization of an Ornstein-Uhlenbeck (OU) process, and so the distribution of $y_{k\delta}$ approximates is approximately Gaussian. However, empirical results \cite{keskar2016large, hoffer2017train} suggest that SGD generalizes better by finding better local minima, which may require us to look beyond the ``OU near local minimum'' assumption to understand the global distributional properties of SGD. Indeed, \citet{hoffer2017train} suggest that SGD performs a random walk on a random loss landscape, \citet{kleinberg2018alternative} propose that SGD noise helps smoooth out ``sharp minima.'' \citet{jastrzkebski2017three} further note the similarity between \eqref{e:discrete-clt} and an Euler-Murayama approximation of \eqref{e:exact-sde}. \citet{chaudhari2018stochastic} also made connections between SGD and SDE. Our work tries to make these connections rigorous, by quantifying the error between \eqref{e:exact-sde}, \eqref{e:discrete-langevin} and \eqref{e:discrete-clt}, without any assumptions about \eqref{e:exact-sde} being close to an OU process or being close to a local minimum.

Our work builds on a long line of work establishing the convergence rate of Langevin MCMC in different settings \cite{dalalyan2017theoretical,durmus2016high,ma2018sampling,gorham2016measuring,cheng2018sharp,erdogdu2018global,li2019stochastic}. We will discuss our rates in relation to some of this work in detail following our presentation of Theorem \ref{t:main_gaussian}. We note here that some of the techniques used in this paper were first used by \citet{eberle2011reflection, gorham2016measuring}, who analyzed the convergence of \eqref{e:exact-sde} to $p^*$ without log-concavity assumptions. \citet{erdogdu2018global} studied processes of the form \eqref{e:discrete-langevin} as an approximation to \eqref{e:exact-sde} under a distant-dissipativity assumption, which is similar to the assumptions made in this paper. For the sequence \eqref{e:discrete-langevin}, they prove an $O(1/\epsilon^2)$ iteration complexity to achieve $\epsilon$ integration error for any pseudo-Lipschitz loss $f$ with polynomial growth derivatives up to fourth order. In comparison, we prove $W_1$ convergence between $\Law(y_{k\delta})$ and $p^*$, which is equivalent to $\sup_{\lrn{\nabla f}_{\infty} \leq 1} \lrabs{\E{f(y_{k\delta})} - \Ep{y\sim p^*}{f(y)}}$, also with rate $\tilde{O}(1/\epsilon^2)$. By smoothing the $W_1$ test function, we believe that the results by~\citet{erdogdu2018global} can imply a qualitatively similar result to Theorem \ref{t:main_gaussian}, but with a worse dimension and $\epsilon$ dependence.

In concurrent work by~\citet{li2019stochastic}, the authors study a process based on a stochastic Runge-Kutta discretization scheme of \eqref{e:exact-sde}. They prove an $\tilde{O}\lrp{\frac{d}{\epsilon^{-2/3}}}$ iteration complexity to achieve $\epsilon$ error in $W_2$ for an algorithm based on Runge-Kutta discretization of \eqref{e:exact-sde}. They make a strong assumption of \textit{uniform dissipativity} (essentially assuming that the process \eqref{e:exact-sde} is uniformly contractive), which is much stronger than the assumptions in this paper, and may be violated in the settings of interest considered in this paper.

There has been a number of work \cite{chen2016statistical,li2018statistical,anastasiou2019normal} which establish CLT results for SGD with very small step size (rescaled to have constant variance). These work generally focus on the setting of "OU process near a local minimum", in which the diffusion matrix is constant.

Finally, a number of authors have studied the setting of heavy-tailed gradient noise in neural network training. \cite{zhang2019adam} showed that in some cases, the heavy-tailed noise can be detrimental to training, and a clipped version of SGD performs much better. \cite{simsekli2019tail} argue that when the SGD noise is heavy-tailed, it should not be modelled as a Gaussian random variable, but instead as an $\alpha$-stable random variable, and propose a Generalized Central Limit Theorem to analyze the convergence in distribution. Our paper does not handle the setting of heavy-tailed noise; our theorems require that the norm of the noise term uniformly bounded, which will be satisfied, for example, if gradients are explicitly clipped at a threshold, or if the optimization objective has Lipschitz gradients and the SGD iterates stay within a bounded region.
\end{section}

\begin{section}{Motivating Example}
     It is generally difficult to write down the invariant distribution of \eqref{e:exact-sde}. In this section, we consider a very simple one-dimensional setting which does admit an explicit expression for $p^*$, and serves to illustrate some remarkable properties of anisotropic diffusion matrices.

    Let us define $D(x) := M^2(x)$. Our analysis will be based on the Fokker-Planck equation, which states that $p^*$ is the invariant distribution of \eqref{e:exact-sde} if
    \begin{align*}
    0 = \div(p^*(x) \nabla U(x)) + \div\lrp{p^*(x) \Gamma(x) + D(x) \nabla p^*(x)},
    \numberthis \label{e:fokker_planck}
    \end{align*}
    where $\Gamma(x)$ is a vector whose $i^{th}$ coordinate equals $\sum_{j=1}^d \frac{\del}{\del x_j} \lrb{D(x)}_{i,j}$.
    In the one-dimensional setting, we can explicitly write down the density of $p^*(x)$. Note that in this case, $\Gamma(x) = \nabla D(x)$.
    Let $V(x) := \int_{0}^{x} \lrp{\frac{\nabla U(x)}{D(x)} + \frac{\nabla D(X)}{D(x)}} dx = \int_{0}^{x} \lrp{\frac{\nabla U(x)}{D(x)}} dx + \log D(x) - \log D(0)$. We can verify that $p^*(x) \propto e^{-V(x)}$ satisfies  \eqref{e:fokker_planck}.

    For a concrete example, let the potential $U(x)$ and the diffusion function $M(x)$ be defined as
    \begin{align*}
    & U(x) := \threecase{\frac{1}{2} x^2}{x\in[-1,4]}{\frac{1}{2} (x+2)^2 - 1}{x\leq -1}{\frac{1}{2} (x-8)^2 -16}{x\geq 4}  \\
    &  M(x) = \threecase
    {\frac{1}{2} (x+2)}{x\in[-2,8]}
    {1}{x \leq -2}
    {6}{x \geq 8}.
    \end{align*}

\begin{figure*}[h!]
  \centering
  \begin{subfigure}[b]{0.23\linewidth}
    \includegraphics[scale=0.2]{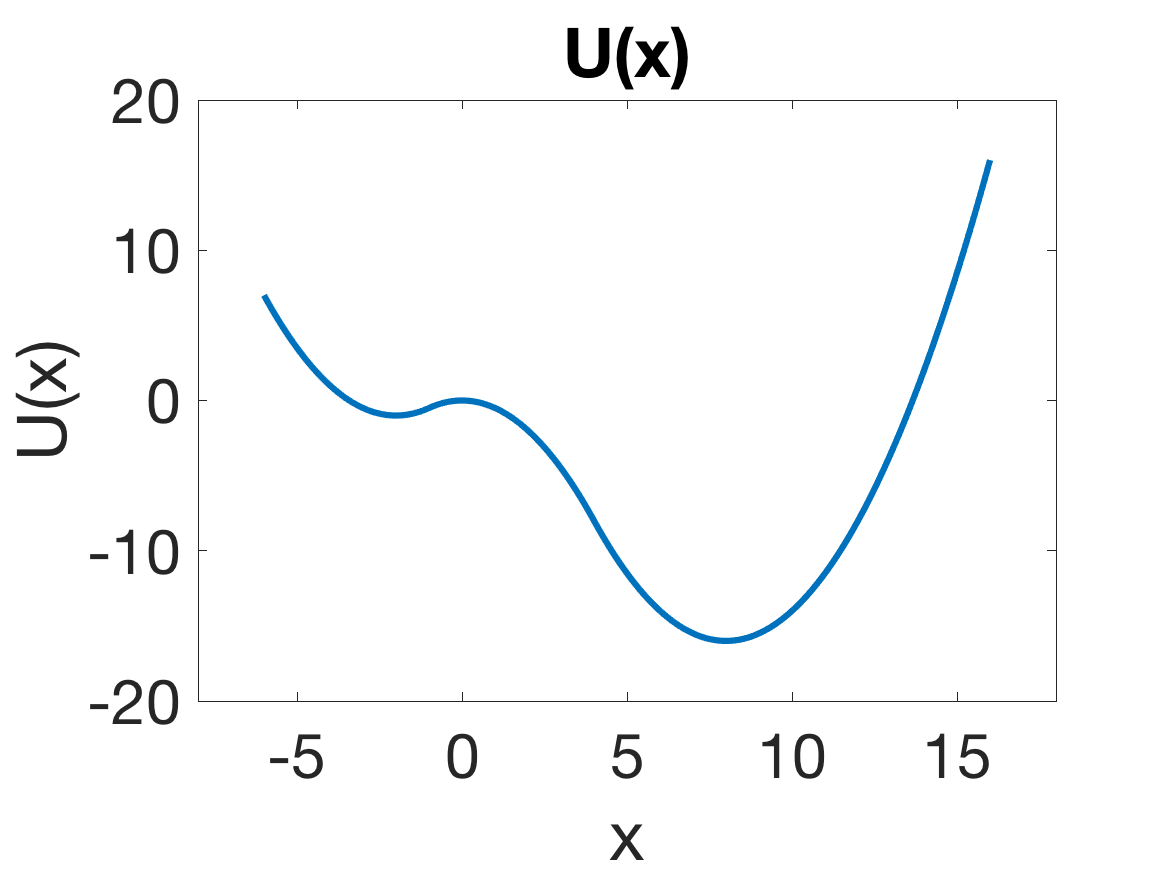}
     \caption{$U(x)$}\label{fig:1d_ux}
  \end{subfigure}
  ~
  \begin{subfigure}[b]{0.23\linewidth}
    \includegraphics[scale=0.2]{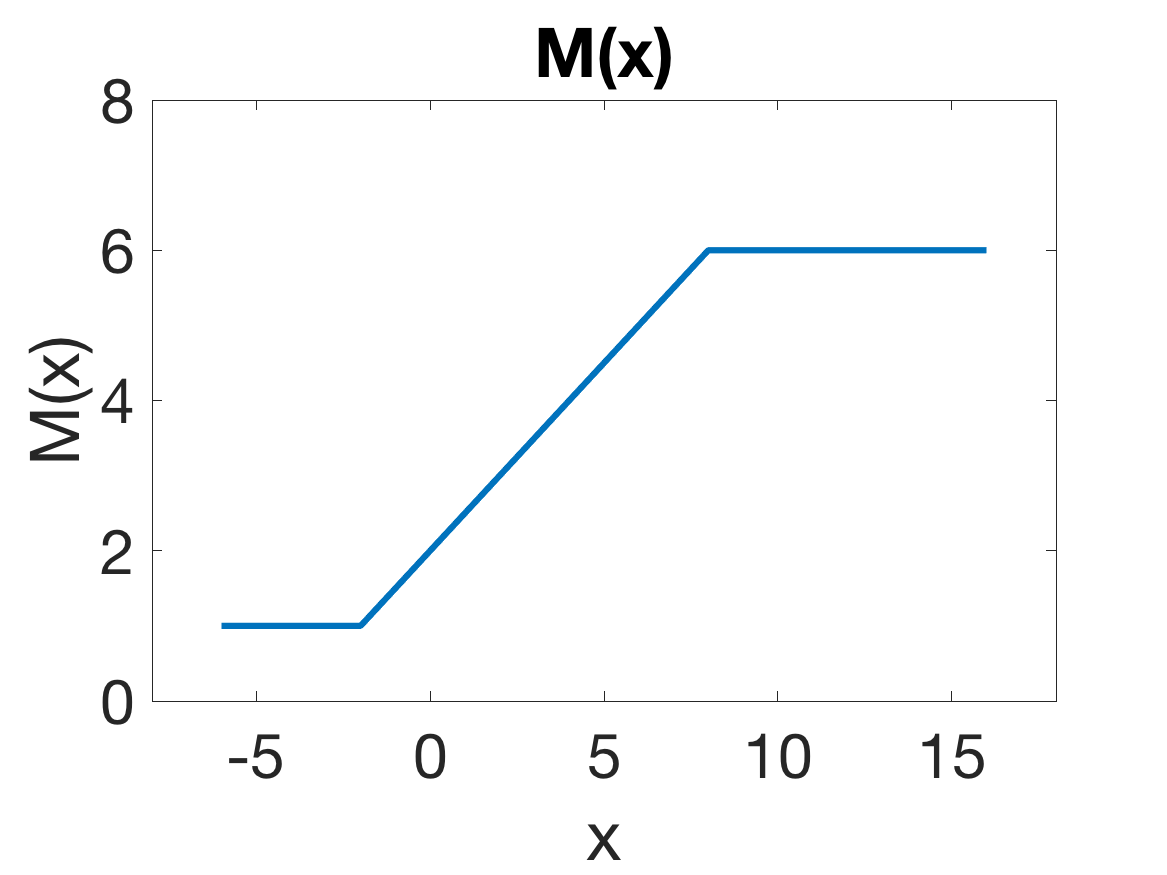}
    \caption{$M(x)$}\label{fig:1d_mx}
  \end{subfigure}
  ~
  \begin{subfigure}[b]{0.23\linewidth}
    \includegraphics[scale=0.2]{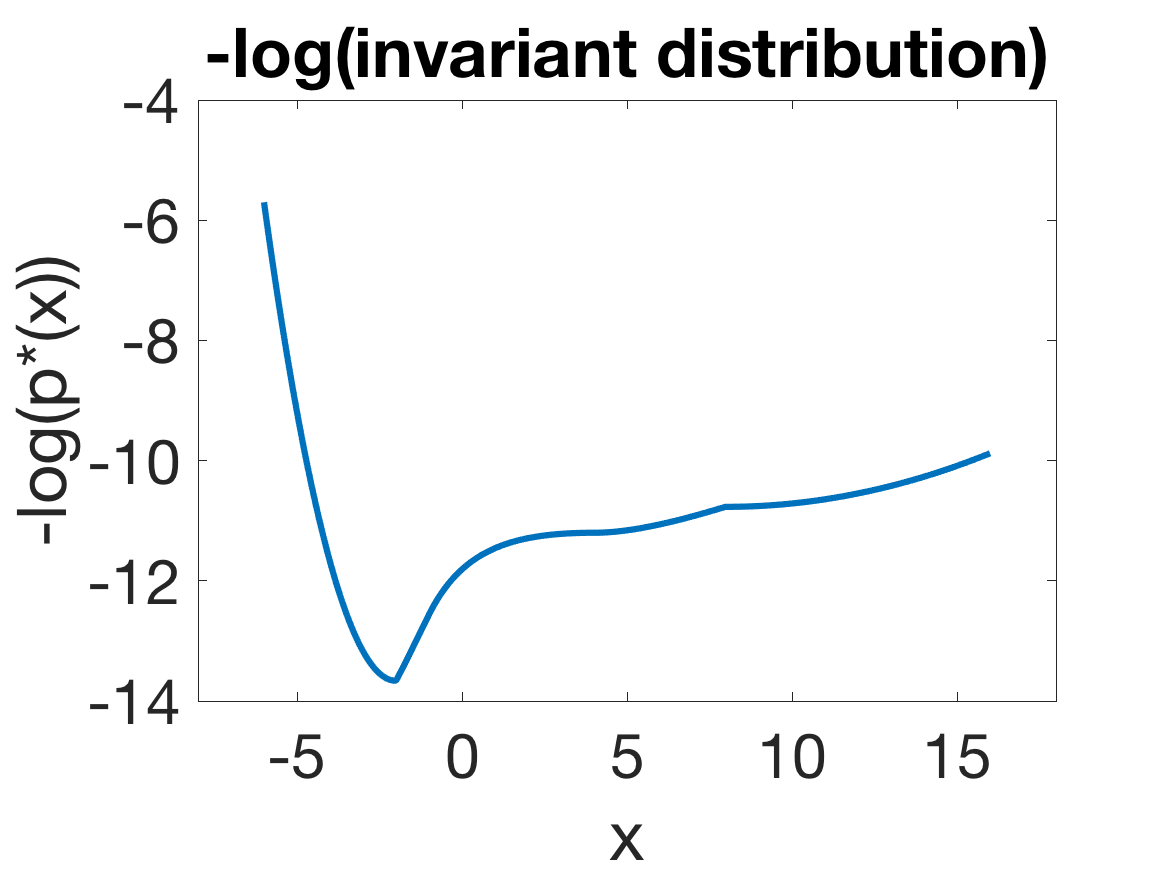}
    \caption{$V(x)$}\label{fig:1d_vx}
  \end{subfigure}
  ~
  \begin{subfigure}[b]{0.23\linewidth}
    \includegraphics[scale=0.2]{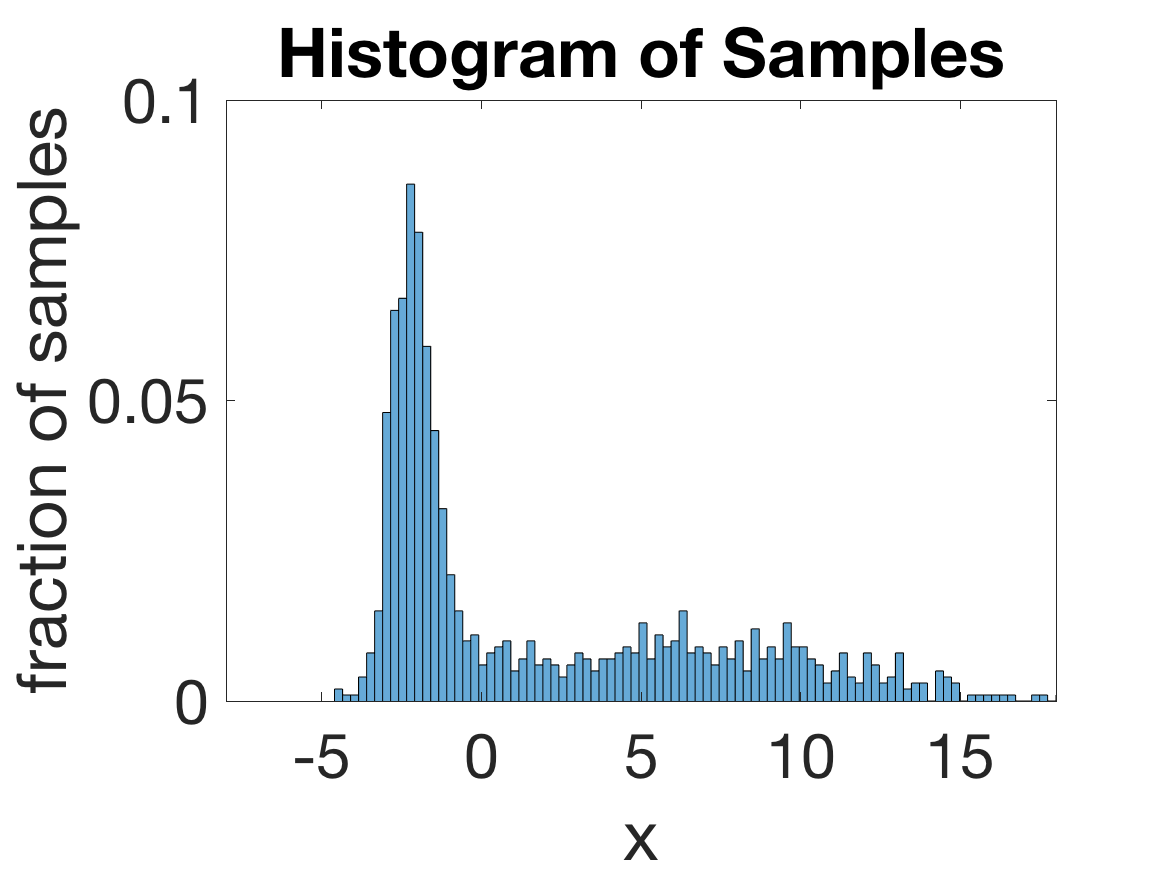}
    \caption{Samples}\label{fig:1d_histo}
  \end{subfigure}
  \caption{One-dimensional example exhibiting the importance of state-dependent noise: A simple construction showing how $M(x)$ can affect the shape of the invariant distribution. While $U(x)$ has two local minima, $V(x)$ only has the smaller minimum at $x=-2$. Figure~\ref{fig:1d_histo} represents samples obtained from simulating using the process~\eqref{e:discrete-langevin}. We can see that most of the samples concentrate around $x=-2$.}
  	\label{fig:1d example}
\end{figure*}

We plot $U(x)$ in Figure \ref{fig:1d_ux}. Note that $U(x)$ has two local minima: a shallow minimum at $x=-2$ and a deeper minimum at $x=8$. A plot of $M(x)$ can be found in Figure \ref{fig:1d_mx}. $M(x)$ is constructed to have increasing magnitude at larger values of $x$. This has the effect of biasing the invariant distribution towards smaller values of $x$.

We plot $V(x)$ in Figure \ref{fig:1d_vx}. Remarkably, $V(x)$ has only one local minimum at $x=-2$. The larger minimum of $U(x)$ at $x=8$ has been smoothed over by the effect of the large diffusion $M(x)$. This is very different from when the noise is homogeneous (e.g., $M(x)=I$), in which case $p^*(x) \propto e^{-U(x)}$. We also simulate \eqref{e:exact-sde} (using \eqref{e:discrete-langevin}) for the given $U(x)$ and $M(x)$ for 1000 samples (each simulated for 1000 steps), and plot the histogram in Figure \ref{fig:1d_histo}.

\end{section}

\begin{section}{Assumptions and Definitions}
    In this section, we state the assumptions and definitions that we need for our main results in Theorem \ref{t:main_gaussian} and Theorem \ref{t:main_nongaussian}.
    \label{ss:ass}
        \begin{assumption}
            We assume that $U(x)$ satisfies
            \label{ass:U_properties}
            \begin{enumerate}
                \item The function $U(x)$ is continuously-differentiable on $\mathbb{R}^d$ and has
                Lipschitz continuous gradients; that is, there exists a positive constant $L\geq0$ such that \\
                for all $x,y\in \mathbb{R}^d$,
                $
                \lVert \nabla U(x) - \nabla U(y) \rVert_2 \le L \lVert x-y\rVert_2.
                $ 
                \item $U$ has a stationary point at zero: $\nabla U(0) = 0.$
                \item There exists
                a constant $m>0, \LR, R$ such that for all $\lrn{x-y}_2 \geq R$,
                \begin{equation}
                \label{e:convexity_outside_ball}
                \lin{ \nabla U(x) - \nabla U(y),x-y} \geq m \lrn{x-y}_2^2.
                \end{equation}
                and for all $\lrn{x-y}_2 \leq R$, 
                $
                \lVert \nabla U(x) - \nabla U(y) \rVert_2 \le \LR \lVert x-y\rVert_2.
                $
            \end{enumerate}
        \end{assumption}

        \begin{remark}
        This assumption, and minor variants, is common in the nonconvex sampling literature \cite{eberle2011reflection,eberle2016reflection,cheng2018sharp,ma2018sampling,erdogdu2018global,gorham2016measuring}.
        \end{remark}

        \begin{assumption}\label{ass:xi_properties}
            We make the following assumptions on $\xi$ and $M$:
            \begin{enumerate}
                \item For all $x$, 
                $\E{\xi(x,\eta)}=0$.
                \item For all $x$, $\lrn{\xi(x,\eta)}_2 \leq \beta$ almost surely.
                \item For all $x,y$,
                $\lrn{\xi(x,\eta) - \xi(y,\eta)}_2\leq L_{\xi} \lrn{x-y}_2$ almost surely.
                \item There is a positive constant $\cm$ such that for all $x$, $2 \cm I \prec M(x)$.
            \end{enumerate}
        \end{assumption}

        \begin{remark}
            We discuss these assumptions in a specific setting in Section~\ref{ss:example_ass}.
        \end{remark}

        For convenience we define a matrix-valued function $N(\cdot) : \Re^d \to \Re^{d\times d}$:
        \begin{align*}
        N(x) := \sqrt{M(x)^2 - \cm^2 I}.
        \numberthis
        \label{d:N}
        \end{align*}
        Under Assumption \ref{ass:U_properties}, we can prove that $N(x)$ and $M(x)$ are bounded and Lipschitz (see Lemma \ref{l:M_is_regular} and \ref{l:N_is_regular} in Appendix \ref{ss:mnregularity}). These properties will be crucial in ensuring convergence.
        
        Given an arbitrary sample space $\Omega$ and any two distribution $p\in \Pspace\lrp{\Omega}$ and $q\in \Pspace\lrp{\Omega}$, a joint distribution $\zeta \in \Pspace\lrp{\Omega \times \Omega}$ is a \textit{coupling} between $p$ and $q$ if its marginals are equal to $p$ and $q$ respectively.
        
        For a matrix, we use $\lrn{G}_2$ to denote the operator norm:
        $
        \lrn{G}_2 = \sup_{v\in \Re^d, \|v\|_2 =1} \lrn{Gv}_2.
        $.


            Finally, we define a few useful constants which will be used throughout the paper:
            \begin{align*}
            &\LN := \frac{4\beta L_\xi}{\cm}, \ \  \aq:=\frac{\LR+L_N^2}{2\cm^2},  \\ &\Rq:=\max\lrbb{R,{\frac{16\beta^2 L_N}{m\cdot \cm}}}  \\
            &\lambda :=\min\lrbb{\frac{m}{2}, \frac{2\cm^2}{32\Rq^2}}\exp\lrp{-\frac{7}{3}\aq\Rq^2}.
            \numberthis \label{d:constants}
            \end{align*}

            $\LN$ is the smoothness parameter of the matrix $N(x)$, and we show in Lemma \ref{l:N_is_regular} that $\tr\lrp{\lrp{N(x) - N(y)}^2} \leq \LN^2\lrn{x-y}_2^2$. The constants $\aq$ and $\Rq$ are used to define a Lyapunov function $q$ in Appendix \ref{ss:defining-q}. A key step in our proof uses the fact that, under the dynamics \eqref{e:discrete-langevin}, $q$ contracts at a rate of $e^{-\lambda}$, plus discretization error.

\end{section}

\begin{section}{Main Results}\label{s:main_results}
In this section, we present our main convergence results beginning with convergence under Gaussian noise and proceeding to the non-Gaussian case.

    \begin{theorem}
        \label{t:main_gaussian}
        Let $x_t$ and $y_t$ have dynamics as defined in \eqref{e:exact-sde} and \eqref{e:discrete-langevin} respectively, and suppose that the initial conditions satisfy $\E{\lrn{x_0}_2^2}\leq R^2 + \beta^2/m$ and $\E{\lrn{y_0}_2^2}\leq R^2 + \beta^2/m$. Let $\hat{\epsilon}$ be a target accuracy satisfying $\hat{\epsilon} \leq  \lrp{\frac{16\lrp{L + \LN^2}}{\lambda}} \cdot \exp\lrp{7\aq\Rq/3} \cdot \frac{\Rq}{\aq\Rq^2 + 1}$. Let $\delta$ be a step size satisfying
        \begin{align*}
        \delta \leq \min\twocase{\frac{\lambda^2 \hat{\epsilon}^2}{512 \beta^2\lrp{L^2 + \LN^4}\exp\lrp{\frac{14\aq\Rq^2}{3}}}}{\frac{2\lambda \hat{\epsilon}}{(L^2+\LN^4)\exp\lrp{\frac{7\aq\Rq^2}{3}}\sqrt{R^2 + \beta^2/m}}}.
        \end{align*}

        If we assume that $x_0 = y_0$, then there exists a coupling between $x_t$ and $y_t$ such that for any $k$,
        \begin{align*}
        \E{\lrn{x_{k\delta} - y_{k\delta}}_2} \leq \hat{\epsilon}
        \end{align*}

        Alternatively, if we assume $n \geq \frac{ 3\aq\Rq^2}{\delta} \log \frac{R^2 + \beta^2/m}{\hat{\epsilon}}$, then
        \begin{align*}
        W_1\lrp{p^*, p^y_{n\delta}} \leq 2\hat{\epsilon}
        \end{align*}
        where $p^y_t := \Law(y_t)$.
    \end{theorem}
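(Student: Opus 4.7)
The plan is a coupling argument built around the Lyapunov function $q$ (defined in Appendix~\ref{ss:defining-q}) combined with a reflection coupling that converts the non-convexity-outside-a-ball condition of Assumption~\ref{ass:U_properties} into a global contraction at rate $\lambda$. Using the decomposition $M(x)^2 = \cm^2 I + N(x)^2$ from \eqref{d:N}, I would drive both $x_t$ and $y_t$ by two independent Brownian motions: an isotropic piece coupled via reflection (for the $\cm I$ part of the diffusion) and a second piece coupled synchronously (for the $N$ part). Writing $e_t := (x_t-y_t)/\lrn{x_t-y_t}_2$, the reflection injects an effective radial noise $2\cm e_t e_t^\top dB_t$ into $d(x_t-y_t)$ with variance $4\cm^2$, which is exactly what allows the concave function $q$ --- equivalent to the Euclidean distance but with $q''$ absorbing the Itô correction from this radial noise --- to contract at rate $\lambda$ along \eqref{e:exact-sde}, even though $\nabla U$ may fail to be monotone inside the ball of radius $R$.

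For part 1, I would apply Itô's formula to $q(\lrn{x_t - y_t}_2)$ on each interval $[k\delta,(k+1)\delta]$. The drift of this expression splits into a \emph{contraction term} of size $-\lambda\, q(\lrn{x_t-y_t}_2)$ inherited from the continuous dynamics, plus a \emph{discretization term} measuring the mismatches $\nabla U(y_t) - \nabla U(y_{k\delta})$ and $M(y_t) - M(y_{k\delta})$ used in \eqref{e:discrete-langevin}. Using the short-time bound $\lrn{y_t-y_{k\delta}}_2 = O(\sqrt{\delta})$ together with the regularity statements in Lemma~\ref{l:M_is_regular} and Lemma~\ref{l:N_is_regular}, the discretization term contributes at most $O(\sqrt{\delta})$ per unit time. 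Taking expectations and applying discrete Gr\"onwall, abbreviating $q_t := q(\lrn{x_t-y_t}_2)$, yields a recursion $\mathbb{E}[q_{(k+1)\delta}] \le (1-c\lambda\delta)\,\mathbb{E}[q_{k\delta}] + C\delta^{3/2}$ whose fixed point is $O(\sqrt{\delta}/\lambda)$. The two cases in the upper bound on $\delta$ in the theorem correspond to separately forcing (i) this stationary discretization error and (ii) the transient coming from the assumed second moment of $y_0$ to lie below $\hat\epsilon$.

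For part 2, I would introduce a stationary copy $\tilde{x}_t$ of \eqref{e:exact-sde} started from $p^*$ (so $\Law(\tilde{x}_t) = p^*$ for every $t$) and couple it with $y_t$ via the same reflection-plus-synchronous coupling, so that
\begin{align*}
W_1\lrp{p^*, p^y_{n\delta}} \le \mathbb{E}\lrn{\tilde{x}_{n\delta} - y_{n\delta}}_2.
\end{align*}
The same Itô calculation now gives $\mathbb{E}[q(\lrn{\tilde{x}_{n\delta}-y_{n\delta}}_2)] \le e^{-c\lambda n\delta}\,\mathbb{E}[q(\lrn{\tilde{x}_0-y_0}_2)] + \hat\epsilon$. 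The initial term is controlled using the assumed second moment of $y_0$ and a stationary moment bound for $p^*$ (derivable from strong convexity of $U$ outside the ball of radius $R$ and boundedness of $M$ via Lemma~\ref{l:M_is_regular}). The prescribed $n \ge \frac{3\aq\Rq^2}{\delta}\log\frac{R^2+\beta^2/m}{\hat\epsilon}$ is precisely what is required to drive the exponentially decaying initial term below $\hat\epsilon$, and equivalence between $q$ and $\lrn{\cdot}_2$ translates the $q$-bound back to the claimed $W_1$ estimate.

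The main technical obstacle is the diffusion-mismatch term $[M(x_t) - M(y_{k\delta})]\,dB_t$: the $\sqrt{\delta}$ scaling of the noise means that a crude bound produces a discretization error difficult to balance against the $-\lambda q$ contraction, especially in the regime where $\lrn{x_t-y_t}_2$ is small and $q$ behaves linearly. Crafting $q$ to be concave with $q''$ chosen so that the Itô correction from the reflected noise cancels the worst-case drift inside the non-convex region, while keeping $q$ equivalent to the Euclidean norm, is the heart of the argument; this is where the constants $\aq, \Rq$ from \eqref{d:constants} and the factor $\exp(-7\aq\Rq^2/3)$ in $\lambda$ will appear, and why those same exponentials reappear in the step-size condition on $\delta$.
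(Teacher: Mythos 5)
Your high-level plan matches the paper's proof almost exactly: the paper also splits the diffusion using $M^2 = \cm^2 I + N^2$, couples the $\cm$-part by reflection and the $N$-part synchronously, applies Itô's formula to a concave Lyapunov transform of the distance, derives a per-step recursion of the form $\mathbb{E}[f_{(k+1)\delta}] \le e^{-\lambda\delta}\,\mathbb{E}[f_{k\delta}] + C\delta(L+\LN^2)\epsilon$, and closes the Wasserstein bound by starting a stationary copy from $p^*$. The constants $\aq$, $\Rq$, $\lambda$ and the role of $\exp(-7\aq\Rq^2/3)$ all enter the way you describe.

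There is, however, one substantive gap in the way you set up the Itô computation. You propose applying Itô's formula directly to $q(\lrn{x_t-y_t}_2)$ under a reflection coupling that is active everywhere. Two things go wrong if you do this literally. First, $z\mapsto\lrn{z}_2$ is not twice differentiable at $z=0$, and its Hessian $\frac{1}{\lrn{z}_2}\bigl(I - \tfrac{zz^\top}{\lrn{z}_2^2}\bigr)$ blows up as $z\to 0$. The reflected isotropic noise is handled gracefully because it only picks up the radial component ($\circled{4} = 2\cm^2\,q''$), but the synchronous $N$-part produces a discretization-mismatch term of size $q'(g(z_t))\,\tr\bigl((N(y_t)-N(y_0))^2\bigr)/\lrn{z_t}_2$, which is unbounded near $z_t=0$ even though the numerator is only $O(\delta)$. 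Second, one wants to \emph{turn the reflection off} both for $\lrn{z_t}_2$ small and for $\lrn{z_t}_2\ge\Rq$: near zero, because the reflected Itô correction is no longer needed and the unbounded geometry would be harmful; far out, because Assumption~\ref{ass:U_properties}.3 already makes the drift contract so the synchronous coupling suffices. The paper addresses both points by replacing $\lrn{\cdot}_2$ with a smoothed norm $g$ (Lemma~\ref{l:gproperties}), defining $f := q\circ g$ with $\lrn{\nabla^2 f}_2\le 2/\epsilon$, and by truncating $\gamma_t$ to the annulus $\lrn{z_t}_2\in[2\epsilon,\Rq)$ via the indicator in \eqref{d:gammat}. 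Your recursion, and the two branches of the step-size condition, only emerge once this smoothing and truncation are in place. You flag the small-$\lrn{z}_2$ regime as the main obstacle, which is the right instinct, but the proposal does not say how to fix it; as written, the Itô step fails there.
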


    \begin{remark}
Note that $m,L,R$ are from Assumption \ref{ass:U_properties}, $L_N$ is from \eqref{d:constants}, $\cm, \beta,L_\xi$ are from Assumption \ref{ass:xi_properties}).
\end{remark}
\begin{remark}
Finding a suitable $y_0$ can be done very quickly using gradient descent wrt $U(\cdot)$. The convergence rate to the ball of radius $R$ is very fast, due to Assumption \ref{ass:U_properties}.3.
\end{remark}

After some algebraic simplifications, we see that for a sufficiently small $\hat{\epsilon}$, achieving $W_1(p^y_{n\delta}, p^*) \leq \hat{\epsilon}$ requires number of steps
\begin{align*}
n = \tilde{O}\left(\frac{\beta^2}{\hat{\epsilon}^2} \cdot \exp\left(\frac{14}{3} \cdot \lrp{\frac{\LR}{\cm^2} + \frac{16\beta^2 L_\xi^2}{\cm^4}} \right. \right.\\
\left.\left. \cdot\max\lrbb{R^2, \frac{2^{12} \beta^6 L_\xi^2}{m^2 \cm^4}}\right)\right).
\end{align*}

\begin{remark}
The convergence rate contains a term $e^{R^2}$; this term is also present in all of the work cited in the previous section under Remark 1. Given our assumptions, in particular  \ref{e:convexity_outside_ball}, this dependence is unavoidable as it describes the time to transit between two modes of the invariant distribution.  It can be verified to be tight by considering a simple double-well potential.
\end{remark}
\begin{remark}
As illustrated in Section \ref{ss:example_ass}, the $m$ from Assumption \ref{ass:xi_properties}.3 should be thought of as a regularization term which can be set arbitrarily large. In the following discussion, we will assume that $ \max\lrbb{R^2, \frac{\beta^6 L_\xi^2}{m^2 \cm^4}}$ is dominated by the $R^2$ term.
\end{remark}

To gain intuition about this term, let's consider what it looks like under a sequence of increasingly weaker assumptions:


\textbf{a. Strongly convex, constant noise}: $U(x)$ $m$-strongly convex, $L$-smooth, $\xi(x,\eta)\sim \N(0,I)$ for all $x$. (In reality we need to consider a truncated Gaussian so as not to violate Assumption \ref{ass:xi_properties}.2, but this is a minor issue). In this case, $L_\xi=0$, $\cm = 1$, $R=0$, $\beta = \tilde{O}(\sqrt{d})$, so $k=O(\frac{d}{\hat{\epsilon}^2})$.  This is the same rate as obtained by \citet{durmus2016high}. We remark that \citet{durmus2016high} obtain a $W_2$ bound which is stronger than our $W_1$ bound.

\textbf{b. Non-convex, constant noise}: $U(x)$ not strongly convex but satisfies Assumption \ref{ass:U_properties}, and $\xi(x,\eta) \sim \N(0,I)$. In this case, $L_\xi=0$, $\cm=1$, $\beta = \tilde{O}(\sqrt{d})$
This is the setting studied by \citet{cheng2018sharp} and \citet{ma2018sampling}. The rate we recover is $k=\tilde{O}\lrp{\frac{d}{\hat{\epsilon^2}} \cdot \exp\lrp{\frac{14}{3}LR^2}}$, which is in line with \citet{cheng2018sharp}, and is the best $W_1$ rate obtainable from \citet{ma2018sampling}.

\textbf{c. Non-convex, state-dependent noise}: $U(x)$ satisfies Assumption \ref{ass:U_properties}, and $\xi$ satisfies Assumption \ref{ass:xi_properties}. To simplify matters, suppose the problem is rescaled so that $\cm =1$. Then the main additional term compared to setting b. above is $\exp\lrp{\frac{64\beta^2 L_\xi^2 R^2}{\cm^4}}$. This suggests that the effect of a $L_\xi$-Lipschitz noise can play a similar role in hindering mixing as a $\LR$-Lipschitz nonconvex drift.


When the dimension is high, computing $M(y_k)$ can be difficult, but if for each $x$, one has access to samples whose covariance is $M(x)$, then one can approximate $M(y_k) \theta_k$ via the central limit theorem by drawing a sufficiently large number of samples. The proof of Theorem \ref{t:main_gaussian} can be readily modified to accommodate this (see Appendix \ref{ss:simlutating_discrete_sde}).

We now turn to the non-Gaussian case.
 \begin{theorem}
            \label{t:main_nongaussian}
            Let $x_t$ and $w_t$ have dynamics as defined in \eqref{e:exact-sde} and \eqref{e:discrete-clt} respectively, and suppose that the initial conditions satisfy $\E{\lrn{x_0}_2^2}\leq R^2 + \beta^2/m$ and $\E{\lrn{w_0}_2^2}\leq R^2 + \beta^2/m$. Let $\hat{\epsilon}$ be a target accuracy satisfying $\hat{\epsilon} \leq  \lrp{\frac{16\lrp{L + \LN^2}}{\lambda}} \cdot \exp\lrp{7\aq\Rq/3} \cdot \frac{\Rq}{\aq\Rq^2 + 1}$. Let $\epsilon:= \frac{\lambda}{16 (L+\LN^2)} \exp\lrp{-\frac{7\aq\Rq^2}{3}} \hat{\epsilon}$. Let $T:= \min\lrbb{\frac{1}{16L}, \frac{\beta^2}{8L^2\lrp{R^2 + \beta^2/m}}, \frac{\epsilon}{32\sqrt{L} \beta}, \frac{\epsilon^2}{128\beta^2}, \frac{\epsilon^4 \LN^2}{2^{14}\beta^2 \cm^2}}$ and let $\delta$ be a step size satisfying
            \begin{align*}
            \delta \leq \min\lrbb{\frac{T\epsilon^2L}{36 d\beta^2\log \lrp{ \frac{36 d\beta^2}{\epsilon^2L}}},  \frac{T\epsilon^4L^2} {2^{14} d\beta^4\log\lrp{\frac{2^{14} d\beta^4}{\epsilon^4L^2}}}}.
            \end{align*}

            If we assume that $x_0 = w_0$, then there exists a coupling between $x_t$ and $w_t$ such that for any $k$,
            \begin{align*}
            \E{\lrn{x_{k\delta} - w_{k\delta}}_2} \leq \hat{\epsilon}.
            \end{align*}

            Alternatively, if we assume that
            $n \geq \frac{3\aq\Rq^2}{\delta} \cdot  \log \frac{R^2 + \beta^2/m}{\hat{\epsilon}}$, then
            \begin{align*}
            W_1\lrp{p^*, p^w_{n\delta}} \leq 2\hat{\epsilon},
            \end{align*}
            where $p^w_t := \Law(w_t)$.
        \end{theorem}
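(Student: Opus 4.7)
The plan is to reduce Theorem~\ref{t:main_nongaussian} to Theorem~\ref{t:main_gaussian} by inserting the Gaussian-noise process $y$ as an intermediary between the non-Gaussian discrete process $w$ and the SDE $x$. By the triangle inequality,
\[W_1(\Law(w_{k\delta}),\Law(x_{k\delta})) \leq W_1(\Law(w_{k\delta}),\Law(y_{k\delta})) + W_1(\Law(y_{k\delta}),\Law(x_{k\delta})),\]
and analogously with $p^*$ replacing $\Law(x_{k\delta})$. Theorem~\ref{t:main_gaussian} controls the second term once $\delta$ is taken below its threshold, which is compatible with the smaller $\delta$ required here. What remains is to exhibit a coupling between $w$ and $y$, started from a common point, for which $\mathbb{E}\lrn{w_{k\delta}-y_{k\delta}}_2$ stays $O(\hat{\epsilon})$ uniformly in $k$; combined with Theorem~\ref{t:main_gaussian} this yields both the coupling statement and the convergence statement.

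To build this coupling I would partition the time axis into windows of length $T$, each containing $T/\delta$ steps of $w$ and of $y$. Inside the $j$-th window $[jT,(j+1)T]$, the boundedness $\lrn{\xi}_2\leq\beta$, the Lipschitz bound on $\xi$ with constant $L_\xi$, and the $L$-smoothness of $\nabla U$ imply that the iterate displacement is of order $\sqrt{T}\beta + T L \lrn{w_{jT}}_2$. This allows a freezing approximation: replace $\xi(w_{jT+i\delta},\eta_i)$ by $\xi(w_{jT},\eta_i)$, absorbing a Lipschitz error of order $L_\xi\sqrt{T}\beta$ per summand. After freezing, the block noise is a clean sum $\sqrt{\delta}\sum_{i=0}^{T/\delta-1}\xi(w_{jT},\eta_i)$ of $T/\delta$ i.i.d.\ bounded mean-zero vectors with covariance $M(w_{jT})^2$, to which I would apply a quantitative multidimensional Wasserstein CLT (Stein or Berry--Esseen type) to couple the sum with $\sqrt{T}\,M(w_{jT})\theta$, $\theta\sim\N(0,I)$. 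Matching this coarse-grained Gaussian to the analogous block-increment of $y$, namely $-T\nabla U(y_{jT}) + \sqrt{T}\,M(y_{jT})\theta$ up to Euler discretization error already controlled as in Theorem~\ref{t:main_gaussian}, yields a per-window $W_1$ bound. The per-window discrepancies are then glued together using the $e^{-\lambda}$ contraction of the Lyapunov function $q$ developed in the proof of Theorem~\ref{t:main_gaussian}: that contraction depends only on matching drift and on the one-step coupling error, not on the distributional form of the noise.

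The main obstacle is picking $T$ and $\delta$ to simultaneously satisfy several competing constraints: (i) $T/\delta$ must be large so that the Wasserstein CLT error, which scales like a power of $\delta/T$ times $\beta\sqrt{d}$, is small; (ii) $T$ itself must be small so that the freezing error $L_\xi \sqrt{T}\beta$ and the drift discretization error $T L (R+\beta/\sqrt{m})$ are both small; (iii) $T$ must be compatible with the contraction rate $\lambda$ so that the geometric decay absorbs the accumulated per-window errors; and (iv) the covariance-matching terms involving $\LN$ and $\cm$ that arise when we compare $M(w_{jT})$ with $M(y_{jT})$ must stay controlled. Balancing these four error sources is what forces the many-term minimum in the definition of $T$ and what degrades the effective rate from $O(\sqrt{\delta})$ in Theorem~\ref{t:main_gaussian} to $O(\delta^{1/8})$ here: distributing a single factor of $\delta^{1/2}$ across four competing error contributions gives $\delta^{1/8}$. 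Once the per-window coupling is in place, the final statements follow by summing the per-window errors against the Lyapunov contraction, combining with Theorem~\ref{t:main_gaussian} via the triangle inequality, and taking $n\geq\tilde O(1/\delta)$.
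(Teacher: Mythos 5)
Your proposal identifies the right ingredients: the epoch decomposition into windows of length $T$, the freezing of $\xi(\cdot,\eta)$ at the window start-point, the quantitative Wasserstein CLT to couple the frozen block sum with a Gaussian increment, and the use of the Lyapunov contraction from the Gaussian-noise analysis to glue windows together. These correspond directly to the paper's construction of the auxiliary processes $v$ and $y$ in \eqref{e:coupled_4_processes_v}--\eqref{e:coupled_4_processes_y}, the CLT bound of Corollary~\ref{c:clt_sum}, and the recursion in Corollary~\ref{c:main_nongaussian:1}. Your intuition for the degraded $\delta^{1/8}$ rate (stacking the constraint $T\lesssim\epsilon^4$ from Lemma~\ref{l:non_gaussian_contraction_anisotropic} against $\delta\lesssim T\epsilon^4$ from the CLT step) is also in line with the paper.

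There is, however, a gap in your gluing step as stated. You phrase the argument as a triangle inequality $W_1(\Law(w_{k\delta}),\Law(x_{k\delta}))\leq W_1(\Law(w_{k\delta}),\Law(y_{k\delta}))+W_1(\Law(y_{k\delta}),\Law(x_{k\delta}))$, bound the second term by Theorem~\ref{t:main_gaussian}, and claim that $\E\lrn{w_{k\delta}-y_{k\delta}}_2$ stays $O(\hat\epsilon)$ uniformly because the Lyapunov contraction of $q$ ``depends only on matching drift and on the one-step coupling error, not on the distributional form of the noise.'' That last claim is not correct, and it is precisely what makes the non-Gaussian case harder. The contraction in Lemma~\ref{l:gaussian_contraction} (the $2\cm^2 q''(g(z_t))$ term that is strictly negative when $\lrn{z}_2\in[2\epsilon,\Rq)$) comes entirely from the \emph{reflection coupling} of the shared Brownian increment of a continuous-time SDE and the isotropic $\cm I$ piece of the discrete Gaussian step; it requires the $\cm\lrp{I-2\gamma_t\gamma_t^T}dV_t$ structure of \eqref{e:coupled_2_processes}. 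There is no such reflection coupling between two discrete processes $w$ and $y$ when one of them has genuinely non-Gaussian per-step increments, so $\E\lrn{w_{k\delta}-y_{k\delta}}_2$ does not contract along your proposed coupling and you cannot simply ``sum the per-window errors against the Lyapunov contraction'' on the $(w,y)$ pair. The paper avoids this by never tracking $W_1(w,y)$ over the whole trajectory; instead it tracks $\E{f(x_{kT}-w_{kT})}$ directly, \emph{restarting} the auxiliary processes $y$ and $v$ at $w_{kT}$ at the beginning of each epoch, so that the contraction always occurs between the continuous SDE $x$ and an epoch-local Gaussian process $y$ (Lemma~\ref{l:gaussian_contraction}), while the CLT and freezing errors are handled as Taylor-expansion \emph{differences} $\E{f(x_T-v_T)}-\E{f(x_T-y_T)}$ and $\E{f(x_T-w_T)}-\E{f(x_T-v_T)}$ in Lemmas~\ref{l:non_gaussian_contraction_stationary} and~\ref{l:non_gaussian_contraction_anisotropic}, which only require bounds on $\nabla f$ and $\nabla^2 f$ and no contraction whatsoever. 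With that reorganization of the accounting your proposal becomes the paper's argument.
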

        \begin{remark}
        To achieve $W_1(p^*, p^w_{n\delta}) \leq \hat{\epsilon}$, the number of steps needed is of order $n= \tilde{O}\lrp{\frac{1}{\hat{\epsilon}^8} \cdot e^{29 \aq \Rq^2}}$. The $\hat{\epsilon}$ dependency is considerably worse than in Theorem \ref{t:main_gaussian}. This is because we need to take many steps of \eqref{e:discrete-clt} in order to approximate a single step of \eqref{e:discrete-langevin}. For details, see the coupling construction in equations~\eqref{e:coupled_4_processes_x}--\eqref{e:coupled_4_processes_w} of Appendix \ref{s:nongaussianproof}.
        \end{remark}

\end{section}

\begin{section}{Application to Stochastic Gradient Descent}\label{s:sgd}

In this section, we will cast SGD in the form of \eqref{e:discrete-clt}. We consider an objective of the form
\begin{align*}
\numberthis
\label{e:example_U}
U(w) = \frac{1}{n} \sum_{i=1}^n U_i(w).
\end{align*}
We reserve the letter $\eta$ to denote a random minibatch from $\lrbb{1, \ldots, n}$, sampled with replacement, and define $\zeta(w,\eta)$ as follows:
\begin{equation}\label{e:def_zeta}
\zeta(w,\eta) := \nabla U(w) - \frac{1}{\lrabs{\eta}} \sum_{i\in \eta} \nabla U_i(w)
\end{equation}
For a sample of size one, i.e. $\lrabs{\eta}=1$, we define
\begin{align*}
H(w) := \E{\zeta(w,\eta) \zeta(w,\eta)^T }
\numberthis\label{e:m0}
\end{align*}
as the covariance matrix of the difference between the true gradient and a single sampled gradient at $w$. A standard run of SGD, with minibatch size $b := \lrabs{\eta_k} $, then has the following form:
\begin{align*}
w_{k+1} &= w_k  - \delta \frac{1}{b} \sum_{i\in\eta_k} \nabla U_{i}(w_k)\\
&= w_k  - \delta \nabla U(w_k) + \sqrt{\delta}\lrp{\sqrt{\delta} \zeta (w_k, \eta_k)}.
\numberthis \label{e:sgd}
\end{align*}
We refer to an SGD algorithm with step size $\delta$ and minibatch size $b$ a $(\delta, b)$-SGD. Notice that $\eqref{e:sgd}$ is in the form of \eqref{e:discrete-clt}, with $\xi(w,\eta) = \sqrt{\delta} \zeta (w,\eta)$. The covariance matrix of the noise term is
\begin{equation}\label{e:noise_cov_sgd}
    \E{\xi (w,\eta)\xi (w,\eta)^T} = \frac{\delta}{b} H(w).
\end{equation}
Because the magnitude of the noise covariance scales with $\sqrt{\delta}$, it follows that as $\delta \to 0$, \eqref{e:sgd} converges to deterministic gradient flow. However, the loss of randomness as $\delta\to 0$ is not desirable as it has been observed that as SGD approaches GD, through either small step size or large batch size, the generalization error goes up \cite{jastrzkebski2017three,he2019control,keskar2016large,hoffer2017train}; this is also consistent with our experimental observations in Section~\ref{ss:acc_rel_var}.

Therefore, a more meaningful way to take the limit of SGD is to hold the noise term constant in \eqref{e:sgd}. More specifically, we define the \textbf{\textit{constant-noise limit }} of \eqref{e:sgd} as
\begin{equation*}
d x_t = - \nabla U(x_t) dt +  M(x_t) dB_t,
\numberthis
\label{e:sgd-limit}
\end{equation*}
where $M(x) := \sqrt{\frac{\delta}{b}H(x)}$. Note that this is in the form of \eqref{e:exact-sde}, with noise covariance $M(x_t)^2$ matching that of SGD in~\eqref{e:sgd}. Using Theorem~\ref{t:main_nongaussian}, we can bound the $W_1$ distance between the SGD iterates $w_{k}$ from ~\eqref{e:sgd}, and the continuous-time SDE $x_t$ from ~\eqref{e:sgd-limit}.

\subsection{Importance of Noise Covariance}\label{ss:noise_covariance}
We highlight the fact that the limiting SDE of a discrete process,
\begin{equation}\label{e:disc-mcmc-new}
    w_{k+1} = w_k - s\nabla U(w_k) + \sqrt{s} \xi(w_k, \eta_k),
\end{equation}
depends only on the covariance matrix of $\xi$. More specifically, as long as $\xi$ satisfies $\sqrt{\E{\xi(w, \eta)\xi(w, \eta)^T}} = M(w)$, \eqref{e:disc-mcmc-new} will have \eqref{e:sgd-limit} as its limiting SDE, \textit{regardless of higher moments of $\xi$}. This fact, combined with Theorem \ref{t:main_nongaussian}, means that in the limit of $\delta \to 0$ and $k\to \infty$, the distribution of $w_k$ will be determined by the covariance of $\xi$ alone. An immediate consequence is the following: \emph{at convergence, the test performance of any Langevin MCMC-like algorithm is almost entirely determined by the covariance of its noise term.}

Returning to the case of SGD algorithms, since the noise covariance is $M(x)^2 = \frac{\delta}{b} H(x)$ (see \eqref{e:noise_cov_sgd}), we know that the ratio of step size $\delta$ to batch size $b$ is an important quantity which can dictate the test error of the algorithm; this observation has been made many times in prior work~\cite{jastrzkebski2017three, he2019control}, and our results in this paper are in line with these observations. Here, we move one step further, and provide experimental evidence to show that more fundamentally, it is the noise covariance in the constant-noise limit that controls the test error.

To verify this empirically, we propose the following algorithm called \emph{large-noise SGD}.

\begin{definition}\label{def:large_noise_sgd}
An $(s, \sigma, b_1, b_2)$-large-noise SGD is an algorithm that aims to minimize~\eqref{e:example_U} using the following updates:
\begin{align*}\numberthis\label{e:sgd-noisy}
    & w_{k+1} = w_k - \frac{s}{b_1} \sum_{i \in \eta_k} \nabla U_i(w_k) \\
    & \qquad + \frac{\sigma \sqrt{s}}{b_2} \left( \sum_{i\in \eta_k'}\nabla U_i(w_k) - \sum_{i\in\eta_k''}\nabla U_i(w_k) \right),
\end{align*}
where $\eta_k$, $\eta_k'$, and $\eta_k''$ are minibatches of sizes $b_1$, $b_2$, and $b_2$, sampled uniformly at random from $\{1, \ldots, n\}$ with replacement. The three minibatches are sampled independently and are also independent of other iterations.
\end{definition}

Intuitively, an $(s, \sigma, b_1, b_2)$-large-noise SGD should be considered as an SGD algorithm with step size $s$ and minibatch size $b_1$ and an additional noise term. The noise term computes the difference of two independent and unbiased estimates of the full gradient $\nabla U(w_k)$, each using a batch of $b_2$ data points. Using the definition of $\zeta$ in~\eqref{e:def_zeta}, we can verify that the update~\eqref{e:sgd-noisy} is equivalent to
\begin{align*}\numberthis\label{e:sgd-noisy-2}
    & w_{k+1} = w_k - s\nabla U(w_k) + s\zeta(w_k, \eta_k) \\
    &\qquad + \sigma \sqrt{s}(\zeta(w_k, \eta_k'') - \zeta(w_k, \eta_k')),
\end{align*}
which is in the form of \eqref{e:discrete-clt}, with
\begin{align*}
\xi(w, \tilde{\eta}) = \sqrt{s} \zeta(w, \eta) + \sigma \lrp{\zeta(w, \eta'')-\zeta(w, \eta')},
\numberthis
\label{e:sgd-noisy-xi}
\end{align*}
where $\tilde{\eta} = (\eta, \eta', \eta'')$, and $|\eta| = b_1$, $|\eta'|=|\eta''|=b_2$. Further, the noise covariance matrix is
\begin{equation}\label{e:noise_cov_large_noise_sgd}
    \E{\xi(w, \tilde{\eta}) \xi(w, \tilde{\eta})^T} = (\frac{s}{b_1} + \frac{2\sigma^2}{b_2}) H(w).
\end{equation}
Therefore, if we have
\begin{equation}\label{e:var_matching}
    \frac{s}{b_1} + \frac{2\sigma^2}{b_2} = \frac{\delta}{b},
\end{equation}
then an $(s, \sigma, b_1, b_2)$-large-noise SGD should have the same noise covariance as a $(\delta, b)$-SGD \textit{(but very different higher noise moments due to the injected noise)}, and based on our theory, the large-noise SGD should have similar test error to that of the SGD algorithm, even if the step size and batch size are different. In Section~\ref{subsection:experiments}, we verify this experimentally. 
We stress that we are not proposing the large-noise SGD as a practical algorithm. The reason that this algorithm is interesting is that it gives us a family of $\lrp{w_k}_{k=1,2,\ldots}$ which converges to \eqref{e:sgd-limit}, and is implementable in practice. Thus this algorithm helps us uncover the importance of noise covariance (and the unimportance of higher noise moments) in Langevin MCMC-like algorithms. We also remark that \citet{hoffer2017train} proposed a different way of injecting noise, multiplying the sampled gradient with a suitably scaled Gaussian noise.

\begin{subsection}{Satisfying the Assumptions}\label{ss:example_ass}

Before presenting the experimental results, we remark on a particular way that a function $U(w)$ defined in \eqref{e:example_U}, along with the stochastic sequence $w_k$ defined in \eqref{e:sgd-noisy}, can satisfy the assumptions in Section \ref{ss:ass}.

Suppose first that we shift the coordinate system so that $\nabla U(0)=0$. Let us additionally assume that for each $i$, $U_i(w)$ has the form
\begin{align*}
U_i(w) = U_i'(w) + V(w),
\end{align*}
where $V(w):= m \lrp{\|x\|_2-R/2}^2$ is a $m$-strongly convex regularizer outside a ball of radius $R$, and each $U_i'(w)$ has $\LR$-Lipschitz gradients. Suppose further that $m \geq 4 \cdot \LR$. These additional assumptions make sense when we are only interested in $U(w)$ over $B_R(0)$, so $V(w)$ plays the role of a barrier function that keeps us within $B_R(0)$. Then, it can immediately be verified that $U(w)$ satisfies Assumption \ref{ass:U_properties} with $L=m+\LR$.

The noise term $\xi$ in \eqref{e:sgd-noisy-xi} satisfies Assumption \ref{ass:xi_properties}.1 by definition, and satisfies Assumption \ref{ass:xi_properties}.3 with $L_\xi = \lrp{\sqrt{s} + 2\sigma} L$. Assumption \ref{ass:xi_properties}.2 is satisfied if $\zeta(w,\eta)$ is bounded for all $w$, i.e. the sampled gradient does not deviate from the true gradient by more than a constant. We will need to assume directly Assumption \ref{ass:xi_properties}.4, as it is a property of the distribution of $\nabla U_i(w)$ for $i=1, \ldots, n$. 

\end{subsection}

\begin{figure}[t]
\centering
\includegraphics[width=1\linewidth]{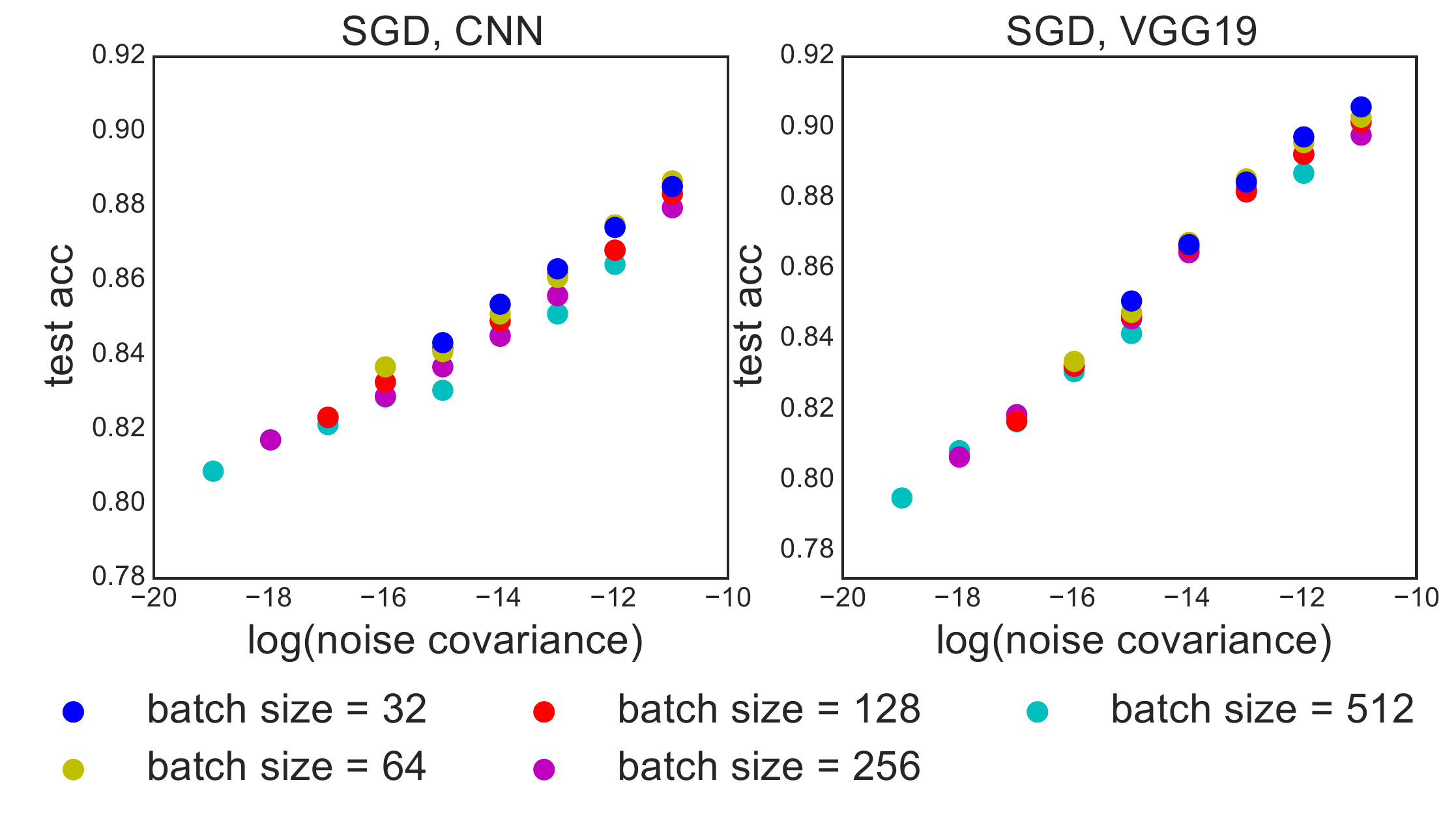}
\caption{Relationship between test accuracy and the noise covariance of SGD algorithm. In each plot, the dots with the same color correspond to SGD runs with the same batch size but different step sizes.}
\label{fig:const_lr_acc_vs_var}
\end{figure}

\begin{subsection}{Experiments}\label{subsection:experiments}

\begin{figure*}[t]
\centering
\includegraphics[width=1.0\linewidth]{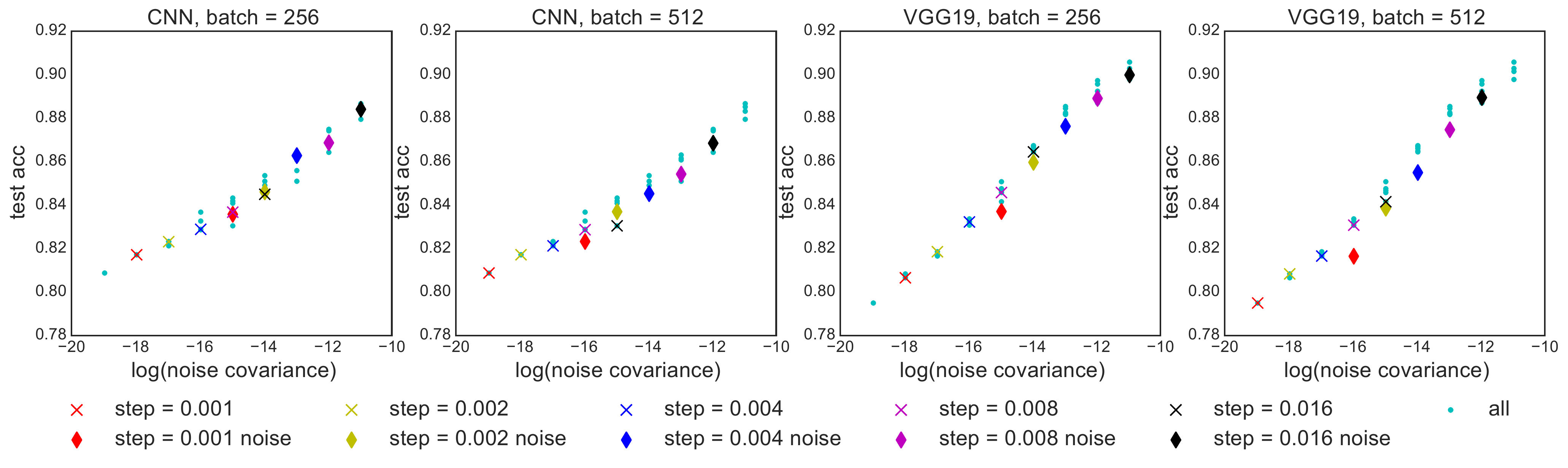}
\caption{
Large-noise SGD. Small dots correspond to all the baseline SGD runs in Figure~\ref{fig:const_lr_acc_vs_var}. Each $\times$ corresponds to a baseline SGD run whose step size is specified in the legend and batch size is specified in the title. Each $\diamond$ corresponds to a large-noise SGD run whose noise covariance is $8$ times of that of the $\times$ with the same color. As we can see, injecting noise improves test accuracy, and the large-noise SGD runs fall close to the linear trend.
}\label{fig:lr_matching}
\end{figure*}

\begin{figure*}[t]
\centering
\includegraphics[width=1.0\linewidth]{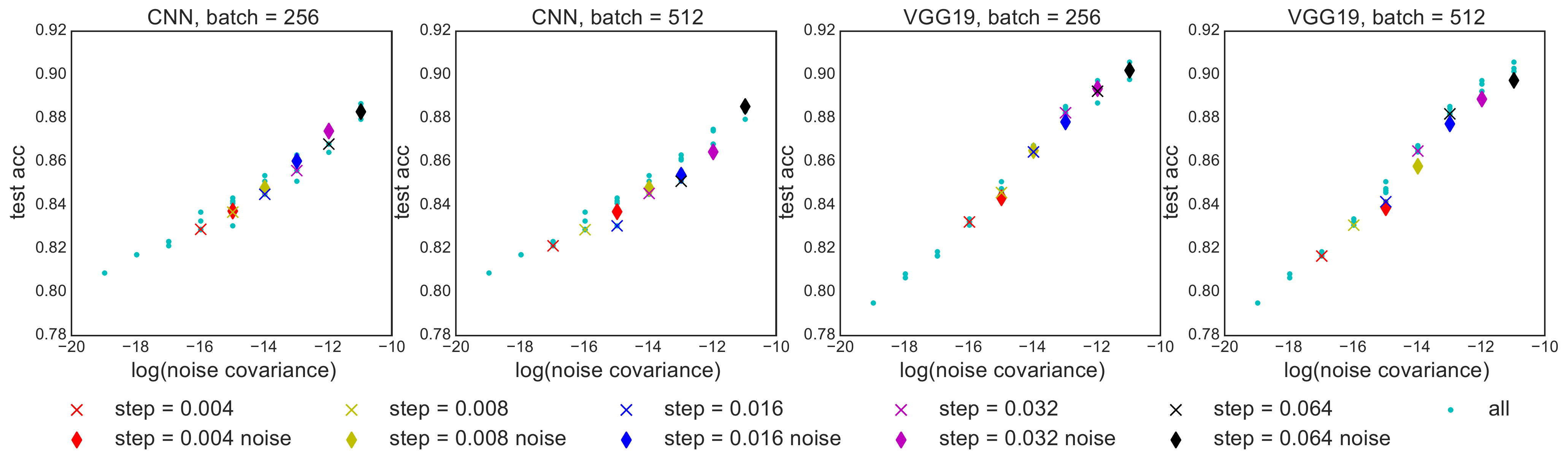}
\caption{Large-noise SGD. Batch size in the titles represents the batch size of $\times$ runs. Each $\diamond$ corresponds to a large-noise SGD run whose noise covariance matches that of a baseline SGD run whose step size is the same as the $\times$ run with the same color and batch size is $128$. Again, large-noise SGD falls close to the linear trend.}\label{fig:batch_matching}
\end{figure*}

In this section, we present experimental results that validate the importance of noise covariance in predicting the test error of Langevin MCMC-like algorithms. Our experiment code can be found at \url{https://github.com/dongyin92/noise_covariance}.

In all experiments, we use two different neural network architectures on the CIFAR-10 dataset~\cite{krizhevsky2009learning} with the standard test-train split. The first architecture is a simple convolutional neural network, which we call CNN in the following,\footnote{We provide details of this CNN architecture in Appendix~\ref{apx:experiments}.} and the other is the VGG19 network~\cite{simonyan2014very}. To make our experiments consistent with the setting of SGD, we do not use batch normalization or dropout, and use constant step size. In all of our experiments, we run SGD algorithm $2000$ epochs such that the algorithm converges sufficiently. Since in most of our experiments, the accuracies on the training dataset are almost $100\%$, we use the test accuracy to measure the generalization performance.

Recall that according to~\eqref{e:noise_cov_sgd} and~\eqref{e:noise_cov_large_noise_sgd}, for both SGD and large-noise SGD, the noise covariance is a scalar multiple of $H(w)$. For simplicity, in the following, we will slightly abuse our terminology and call this scalar the \emph{noise covariance}; more specifically, for $(\delta, b)$-SGD, the noise covariance is $\delta / b$, and for an $(s, \sigma, b_1, b_2)$-large-noise SGD, the noise covariance is $\frac{s}{b_1} + \frac{2\sigma^2}{b_2}$.

\begin{subsubsection}{Accuracy vs Noise Covariance}
\label{ss:acc_rel_var}

In our first experiment, we focus on the SGD algorithm, and show that there is a positive correlation between the noise covariance and the final test accuracy of the trained model. One major purpose of this experiment is to establish baselines for our experiments on large-noise SGD.

We choose constant step size $\delta$ from
\begin{equation*}
\{0.001, 0.002, 0.004, 0.008, 0.016, 0.032, 0.064, 0.128\}
\end{equation*}
and minibatch size $b$ from $\{32, 64, 128, 256, 512\}$.
For each (step size, batch size) pair, we plot its final test accuracy against its noise covariance in Figure~\ref{fig:const_lr_acc_vs_var}. From the plot, we can see that higher noise covariance leads to better final test accuracy, and there is a linear trend between the test accuracy and the logarithm. We also highlight the fact that conditioned on the noise covariance, the test accuracy is not significantly correlated with either the step size or the minibatch size. In other words, similar to the observations in prior work~\cite{jastrzkebski2017three, he2019control}, there is a strong correlation between relative variance of an SGD sequence and its test accuracy, regardless of the combination of minibatch size and step size.

\end{subsubsection}

\begin{subsubsection}{Large-Noise SGD}
\label{ss:inject}

In this section, we implement and examine the performance of the large-noise SGD algorithm proposed in \eqref{e:sgd-noisy}. We select a subset of SGD runs with relatively small noise covariance in the experiment in the previous section (we call them \emph{baseline SGD runs}), and implement large-noise SGD by injecting noise. Our goal is to see, for a particular noise covariance, whether large-noise SGD has test accuracy that is similar to SGD, \textit{in spite of significant differences in third-and-higher moments of the noise in large-noise SGD compared to standard SGD}.

Our first experiment is to add noise with the same minibatch size to the $(\delta, b)$ baseline SGD run such that the new noise covariance matches that of an $(8\delta, b)$-SGD (an SGD run with larger step size). In other words, we implement $(\delta, \sqrt{7\delta / 2}, b, b)$-large-noise SGD, whose noise covariance is $8$ times of that of the baseline. Our results are shown in Figure~\ref{fig:lr_matching}. Our second experiment is similar: we add noise with minibatch size $128$ to the $(\delta, b)$ baseline SGD run with $b \in \{256, 512\}$ such that the new noise covariance matches that of a $(\delta, 128)$-SGD (an SGD run with smaller batch size). More specifically, we implement $(\delta, \sqrt{\frac{1}{2} (1-\frac{128}{b})\delta }, b, 128)$-large-noise SGD runs. The results are shown in Figure~\ref{fig:batch_matching}. In these figures, each $\times$ denotes a baseline SGD run, with step size specified in the legend and minibatch size specified by plot title. For each baseline SGD run, we have a corresponding large-noise SGD run, denoted by $\diamond$ with the same color. As mentioned, these $\diamond$ runs are designed to match the noise covariance of SGD with larger step size or smaller batch size. In addition to $\times$ and $\diamond$, we also plot using a small teal marker all the other runs from Section \ref{ss:acc_rel_var}. This helps highlight the linear trend between the logarithm of noise covariance and test accuracy that we observed in Section \ref{ss:acc_rel_var}.

As can be seen, the (noise variance, test accuracy) values for the $\diamond$ runs fall close to the linear trend. More specifically, a run of large-noise SGD produces similar test accuracy to vanilla SGD runs with the same noise variance. We highlight two potential implications: First, just like in Section \ref{ss:acc_rel_var}, we observe that the test accuracy strongly correlates with relative variance, even for noise of the form \eqref{e:sgd-noisy-xi}, which can have rather different higher moments than $\zeta$ (standard SGD noise); Second, since the $\diamond$ points fall close to the linear trend, we hypothesize that the constant-noise limit SDE \eqref{e:sgd-limit} should also have similar test error. If true, then this implies that we only need to study the potential $U(x)$ and noise covariance $M(x)$ to explain the generalization properties of SGD.
\end{subsubsection}
\end{subsection}
\end{section}

\begin{section}{Acknowledgements}
We wish to acknowledge support by the Army Research Office (ARO) under contract W911NF-17-1-0304 
under the Multidisciplinary University Research Initiative (MURI).
\end{section}

\bibliography{references}
\bibliographystyle{icml2020}

\onecolumn
\appendix
\newpage
\section*{Appendix}
\begin{section}{Proofs for Convergence under Gaussian Noise (Theorem \ref{t:main_gaussian})}
    \begin{subsection}{Proof Overview}
    The main proof of Theorem \ref{t:main_gaussian} is contained in Appendix \ref{ss:proof:t:main_gaussian}. 
    
    Here, we outline the steps of our proof:
    \begin{enumerate}
    \item In Appendix \ref{ss:coupling_construction}, we construct a coupling between $\eqref{e:exact-sde}$ and $\eqref{e:discrete-langevin}$ over a single step (i.e. for $t\in[k\delta, (k+1)\delta]$, for some $k$ and $\delta$).
    \item Appendix \ref{ss:step_gaussian}, we prove Lemma \ref{l:gaussian_contraction}, which shows that under the coupling constructed in Step 1, a Lyapunov function $f(x_T - y_T)$ contracts exponentially with rate $\lambda$, plus a discretization error term. The function $f$ is defined in Appendix \ref{s:defining-q}, and sandwiches $\lrn{x_T - y_T}_2$. In Corollary \ref{c:main_gaussian:1}, we apply the results of Lemma \ref{l:gaussian_contraction} recursively over multiple steps to give a bound on $f(x_{k\delta}-y_{k\delta})$ for all $k$, and for sufficiently small $\delta$.
    \item Finally, in Appendix \ref{ss:proof:t:main_gaussian}, we prove Theorem \ref{t:main_gaussian} by applying the results of Corollary \ref{c:main_gaussian:1}, together with the fact that $f(z)$ upper bounds $\lrn{z}_2$ up to a constant factor.
    \end{enumerate}
    \end{subsection}
    
    \begin{subsection}{A coupling construction}
        \label{ss:coupling_construction}
        In this subsection, we will study the evolution of \eqref{e:exact-sde} and \eqref{e:discrete-langevin} over a small time interval. Specifically, we will study
        \begin{align*}
        \numberthis
        \label{e:marginal_x}
        d x_t =& - \nabla U(x_t) dt + M(x_t) dB_t\\
        \numberthis
        \label{e:marginal_y}
        d y_t =& - \nabla U(y_0) dt + M(y_0) dB_t
        \end{align*}
        One can verify that \eqref{e:marginal_x} is equivalent to \eqref{e:exact-sde}, and \eqref{e:marginal_y} is equivalent to a single step of \eqref{e:discrete-langevin} (i.e. over an interval $t\leq \delta$).

        We first give the explicit coupling between \eqref{e:marginal_x} and \eqref{e:marginal_y}: ( A similar coupling in the continuous-time setting is first seen in \cite{gorham2016measuring} in their proof of contraction of \eqref{e:exact-sde}.)

        Given arbirary $(x_0,y_0)$, define $(x_t,y_t)$ using the following coupled SDE:
        \begin{align*}
        \numberthis \label{e:coupled_2_processes}
        x_t =& x_0 + \int_0^t -\nabla U(x_s) ds + \int_0^t \cm dV_s + \int_0^t N(x_s) dW_s\\
        y_t =& y_0 + \int_0^t -\nabla U(y_0) dt + \int_0^t \cm  \lrp{I - 2\gamma_s \gamma_s^T} dV_s + \int_0^t N(y_0) dW_s
        \end{align*}

        Where $dV_t$ and $dW_t$ are two independent standard Brownian motion, and
        \begin{align*}
        \gamma_t := \frac{x_t-y_t}{\|x_t-y\|_2} \cdot \ind{\|x_t-y_t\|_2 \in [2\epsilon, \Rq)}
        \numberthis \label{d:gammat}
        \end{align*}

        By Lemma \ref{l:marginal_of_coupling}, we show that \eqref{e:marginal_x} has the same distribution as $x_t$ in \eqref{e:coupled_2_processes}, and \eqref{e:marginal_y} has the same distribution as $y_t$ in \eqref{e:coupled_2_processes}. Thus, for any $t$, the process $(x_t,y_t)$ defined by \eqref{e:coupled_2_processes} is a valid coupling for \eqref{e:marginal_x} and \eqref{e:marginal_y}.
    \end{subsection}

    \begin{subsection}{One step contraction}
    \label{ss:step_gaussian}
    \begin{lemma}
        \label{l:gaussian_contraction}
        Let $f$ be as defined in Lemma \ref{l:fproperties} with parameters $\epsilon$ satisfying $\epsilon \leq \frac{\Rq}{\aq\Rq^2 + 1}$. Let $x_t$ and $y_t$ be as defined in \eqref{e:coupled_2_processes}.
        If we assume that $\E{\lrn{y_0}_2^2} \leq 8\lrp{R^2 + \beta^2/m}$ and $T\leq \min\lrbb{\frac{\epsilon^2}{\beta^2}, \frac{\epsilon}{6 L\sqrt{R^2 + \beta^2/m}}}$, then
        \begin{align*}
        \E{f(x_T - y_T)}
        \leq e^{-\lambda T} \E{f(x_0 - y_0)} + 3T (L+\LN^2) \epsilon
        \end{align*}
    \end{lemma}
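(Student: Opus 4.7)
The plan is to derive the one-step contraction by applying It\^o's formula to $f(x_s - y_s)$ under the coupling~\eqref{e:coupled_2_processes}, then splitting the resulting generator into an ``exact'' piece, as if both marginals solved~\eqref{e:exact-sde}, and a ``freezing error'' piece, coming from~\eqref{e:marginal_y} using $\nabla U(y_0), N(y_0)$ rather than $\nabla U(y_s), N(y_s)$. The exact piece will be controlled by the Lyapunov properties of $f$ encapsulated in Lemma~\ref{l:fproperties}, while the freezing piece is a discretization error that is small on the short time window $T$.

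First I would write the SDE for $z_s := x_s - y_s$: its drift is $-(\nabla U(x_s) - \nabla U(y_0))$; its $V_s$-diffusion coefficient reduces to $2\cm\gamma_s\gamma_s^T$, i.e.\ the reflection-coupling noise; and its $W_s$-diffusion coefficient is $N(x_s) - N(y_0)$. Taking expectation of It\^o, with the stochastic integrals being genuine martingales because $\nabla f$ is bounded (Lemma~\ref{l:fproperties}), yields
\begin{align*}
\frac{d}{ds}\E{f(z_s)} = \E{\lin{\nabla f(z_s),\; -\nabla U(x_s) + \nabla U(y_0)}} + \frac{1}{2}\E{\tr\lrp{D^2 f(z_s)\, \Sigma_s}},
\end{align*}
with $\Sigma_s := 4\cm^2\gamma_s\gamma_s^T + (N(x_s)-N(y_0))(N(x_s)-N(y_0))^T$. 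I then add and subtract $\nabla U(y_s)$ in the drift and $N(y_s)$ in the diffusion, so that the right-hand side becomes the ``exact generator'' acting on $f$ at $z_s$ plus a freezing residual.

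For the exact part, I would invoke Lemma~\ref{l:fproperties}, which is designed precisely so that the reflection It\^o correction on $[2\epsilon,\Rq]$ (contributing a term proportional to $\cm^2 f''(\lrn{z_s}_2)$, with $f''$ strongly negative by the engineered concavity of $f$) dominates the possibly non-contractive drift bound $\lin{\nabla f, \nabla U(x_s)-\nabla U(y_s)} \leq \LR\, f'(\lrn{z_s}_2)\,\lrn{z_s}_2$; the dissipativity~\eqref{e:convexity_outside_ball} together with $f'\geq 0$ handles the tail $\lrn{z_s}_2>\Rq$; and the inner regime $\lrn{z_s}_2<2\epsilon$ contributes at most $O(\epsilon)$, which is absorbed into the additive error. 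For the freezing residual, I would use $|\lin{\nabla f, \nabla U(y_s)-\nabla U(y_0)}| \leq L\lrn{y_s-y_0}_2$ and a trace bound of order $\LN^2\lrn{y_s-y_0}_2^2$ on the $N$-difference contribution (via Lemma~\ref{l:N_is_regular}), together with the standard Euler moment estimate
\begin{align*}
\E{\lrn{y_s-y_0}_2^2} \leq s^2\E{\lrn{\nabla U(y_0)}_2^2} + s\beta^2 \leq 8 s^2 L^2 (R^2+\beta^2/m) + s\beta^2,
\end{align*}
where $\lrn{\nabla U(y_0)}_2 \leq L\lrn{y_0}_2$ follows from $\nabla U(0)=0$ and Assumption~\ref{ass:U_properties}, and the initial moment hypothesis bounds $\E{\lrn{y_0}_2^2}$. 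The two upper bounds on $T$ stated in the lemma are exactly what is needed so that each summand above contributes at most $O(\epsilon)$ to $\E{\lrn{y_s-y_0}_2}$, capping the freezing residual at $3(L+\LN^2)\epsilon$.

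Combining, I would arrive at the differential inequality $\frac{d}{ds}\E{f(z_s)} \leq -\lambda \E{f(z_s)} + 3(L+\LN^2)\epsilon$, and Gr\"onwall integrates this to $\E{f(z_T)} \leq e^{-\lambda T}\E{f(z_0)} + 3T(L+\LN^2)\epsilon$, matching the claim. The main obstacle I anticipate is the exact-piece analysis: rigorously applying It\^o to the radial Lyapunov $f$, which is non-smooth at $\lrn{z}_2\in\{2\epsilon, \Rq\}$ (the boundaries where the reflection indicator $\gamma_s$ switches on and off), and then matching the reflection It\^o correction against the engineered concavity of $f$ to recover the rate $\lambda$ of~\eqref{d:constants} in the middle regime. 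This is the heart of Eberle's reflection argument, here adapted to the variable-diffusion setting where $N(x_s) - N(y_s)$ enters the second-order Itô term alongside the reflection noise and interacts nontrivially with $D^2 f$.
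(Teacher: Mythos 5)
Your proposal is correct and follows essentially the same route as the paper: Itô on $f(z_s)$ under the coupling~\eqref{e:coupled_2_processes}, the decomposition into the exact reflection/dissipativity terms (handled by the case analysis $\lrn{z_s}_2 \leq 2\epsilon$, $\in(2\epsilon,\Rq)$, $\geq \Rq$ together with the Lyapunov properties of $q$ and $g$) versus the freezing residual $\nabla U(y_0)-\nabla U(y_s)$ and $N(y_0)-N(y_s)$, then the short-time Euler moment bound of Lemma~\ref{l:divergence_yt} and Gr\"onwall. The only cosmetic difference is that the paper integrates the Gr\"onwall step pathwise through an auxiliary process $\mathcal{L}_t$ before taking expectations, whereas you take expectations first; both are valid here since $\nabla f$ and the diffusion coefficients are bounded, so the stochastic integral is a true martingale.
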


    \begin{remark}
        For ease of reference: $m,L,\LR, R$ are from Assumption \ref{ass:U_properties}, $\cm, \beta$ are from Assumption \ref{ass:xi_properties}, $\aq, \Rq, \LN, \lambda$ are defined in \eqref{d:constants}.
    \end{remark}

    \begin{proof}[Proof of Lemma \ref{l:gaussian_contraction}]
        For notational convenience, for the rest of this proof, let us define $z_t := x_t - y_t$ and $\nabla_t := \nabla U(x_t) - \nabla U(y_t)$, $\Delta_t := \nabla U(y_0) - \nabla U(y_t)$ $N_t := N(x_t) - N(y_t)$.

        It follows from \eqref{e:coupled_2_processes} that
        \begin{align*}
        d z_t = - \nabla_t dt + \Delta_t dt + 2 \cm \gamma_t \gamma_t^T dV_t + \lrp{N_t + N(y_t) - N(y_0)} dW_t
        \numberthis \label{e:coupled_difference_sde}
        \end{align*}

        Using Ito's Lemma, the dynamics of $f(z_t)$ is given by
        \begin{align*}
        &d f(z_t)\\
        =& {\lin{\nabla f(z_t), dz_t}}
        + {2\cm^2\tr\lrp{\nabla^2 f(z_t) \lrp{\gamma_t \gamma_t^T}}} dt
        + {\frac{1}{2}\tr\lrp{\nabla^2 f(z_t) \lrp{N_t+ N(y_t) - N(y_0)}^2}} dt\\
        =& \underbrace{-\lin{\nabla f(z_t), \nabla_t}}_{\circled{1}} dt + \underbrace{\lin{\nabla f(z_t), \Delta_t}}_{\circled{2}} dt  +  \underbrace{\lin{\nabla f(z_t), 2 \cm \gamma_t \gamma_t^T dV_t + \lrp{N_t + N(y_t) - N(y_0)} dW_t }}_{\circled{3}}\\
        &\quad + \underbrace{2\cm^2\tr\lrp{\nabla^2 f(z_t) \lrp{\gamma_t \gamma_t^T}}}_{\circled{4}} dt
        + \underbrace{\frac{1}{2}\tr\lrp{\nabla^2 f(z_t) \lrp{N_t+ N(y_t) - N(y_0)}^2}}_{\circled{5}} dt
        \numberthis \label{e:t:asldka}
        \end{align*}

        $\circled{3}$ goes to $0$ when we take expectation, so we will focus on $\circled{1}, \circled{2}, \circled{4}, \circled{5}$. We will consider 3 cases

        \textbf{Case 1: $\|z_t\|_2 \leq 2\epsilon$}\\
        From item 1(c) of Lemma \ref{l:fproperties}, $\lrn{\nabla f(z)}_2 \leq 1$. Using Assumption \ref{ass:U_properties}.1, $\lrn{\nabla_t} \leq L\lrn{z_t}_2$, so that
        \begin{align*}
        \circled{1} \leq \lrn{\nabla_t}_2 \leq L \lrn{z_t}_2 \leq 2 L \epsilon
        \end{align*}

        Also by Cauchy Schwarz,
        \begin{align*}
        \circled{2}
        =& \lin{\nabla f(z_t), \Delta_t}
        \leq \lrn{\Delta_t}_2 \leq L \lrn{y_t - y_0}_2
        \end{align*}

        Since $\gamma_t = 0$ in this case by definition in \eqref{d:gammat}, $\circled{4}=0$.

        Using Lemma \ref{l:fproperties}.2.c. $ \lrn{\nabla^2 f(z_t)}_2 \leq \frac{2}{\epsilon}$, so that
        \begin{align*}
        \circled{5}
        \leq& \frac{1}{\epsilon} \lrp{\tr\lrp{N_t^2 + N(y_t) - N(y_0)}^2}\\
        \leq& \frac{2}{\epsilon} \lrp{\tr\lrp{N_t^2} + \tr\lrp{\lrp{N(y_t) - N(y_0)}^2}}\\
        \leq& \frac{2L_N^2}{\epsilon} \lrp{\lrn{z_t}_2^2 + \lrn{y_t - y_0}_2^2}\\
        \leq& 4L_N^2 \epsilon + \frac{2L_N^2}{\epsilon} \|y_t-y_0\|_2^2
        \end{align*}

        Where the second inequality is by Young's inequality, the third inequality is by item 2 of Lemma \ref{l:N_is_regular}, the fourth inequality is by our assumption that $\lrn{z_t}_2 \leq 2\epsilon$.

        Summing these,
        \begin{align*}
        \circled{1} + \circled{2} + \circled{4} + \circled{5}
        \leq 4 \lrp{L + L_N^2}\epsilon + L \lrn{y_t - y_0}_2 + \frac{2L_N^2}{\epsilon}\|y_t-y_0\|_2^2
        \end{align*}

        \textbf{Case 2: $\|z_t\|_2\in (2\epsilon, \Rq)$}\\
        In this case, $\gamma_t = \frac{z_t}{\|z_t\|_2}$. Let $q$ be as defined in \eqref{d:f} and $g$ be as defined in Lemma \ref{l:gproperties}.
        By items 1(b) and 2(b) of Lemma \ref{l:fproperties} and items 1(b) and 2(b) of Lemma \ref{l:gproperties},
        \begin{align*}
        \nabla f(z_t)
        =& q'(g(z_t)) \nabla g(z_t) \\
        =& q'(g(z_t)) \frac{z_t}{\lrn{z_t}_2}\\
        \nabla^2 f(z_t)
        =& q''(g(z_t))\nabla g(z_t) \nabla g(z_t)^T + q'(g(z_t))\nabla^2 g(z_t)\\
        =& q''(g(z_t)) \frac{z_t z_t^T}{\|z_t\|_2^2} + q'(g(z_t)) \frac{1}{\|z_t\|_2} \lrp{I - \frac{z_tz_t^T}{\|z_t\|_2^2}}
        \end{align*}

        Once again, by Assumption \ref{ass:U_properties}.3,
        \begin{align*}
        \circled{1} \leq q'(g(z_t))\lrn{\nabla_t}_2 \leq q'(g(z_t)) \cdot \LR \cdot \|z_t\|_2 \leq& L \cdot q'(g(z_t)) g(z_t) + 2 L \epsilon
        \end{align*}
        Where the last inequality uses Lemma \ref{l:gproperties}.4.
        We can also verify that
        \begin{align*}
        \circled{2} \leq L \lrn{y_t - y_0}_2
        \end{align*}

        Using the expression for $\nabla^2 f(z_t)$,
        \begin{align*}
        \circled{4}
        = 2\cm^2 \tr\lrp{\nabla^2 f(z_t) \gamma_t \gamma_t^T} = 2\cm^2 \cdot q''(g(z_t))
        \end{align*}

        Finally,
        \begin{align*}
        \circled{5}
        =& \frac{1}{2}\tr\lrp{\nabla^2 f(z_t) \lrp{N_t+ N(y_t) - N(y_0)}^2}\\
        =& \frac{1}{2}\tr\lrp{\lrp{q''(g(z_t)) \frac{z_t z_t^T}{\|z_t\|_2^2} + q'(g(z_t)) \frac{1}{\|z_t\|_2} \lrp{I - \frac{z_tz_t^T}{\|z_t\|_2^2}}} \lrp{N_t+ N(y_t) - N(y_0)}^2}\\
        \leq& \frac{1}{2}\tr\lrp{\lrp{ q'(g(z_t)) \frac{1}{\|z_t\|_2} \lrp{I - \frac{z_tz_t^T}{\|z_t\|_2^2}}} \lrp{N_t+ N(y_t) - N(y_0)}^2}\\
        \leq& \frac{q'(g(z_t))}{\|z_t\|_2}\cdot \lrp{\tr\lrp{N_t^2} + \tr\lrp{\lrp{N(y_t) - N(y_0)}^2}}\\
        \leq& q'(g(z_t)) \cdot L_N^2 \|z_t\|_2 + \frac{L_N^2\|y_t-y_0\|_2^2}{2\epsilon}\\
        \leq& q'(g(z_t)) \cdot L_N^2 g(z_t) + \frac{L_N^2\|y_t-y_0\|_2^2}{2\epsilon} + 2 L_N^2 \epsilon
        \end{align*}

        The above uses multiples times the fact that  $0 \leq q' \leq 1$ and $q'' \leq 0$ (proven in items 3 and 4 of Lemma \ref{l:qproperties}). The second inequality is by Young's inequality, the third inequality is by item 2 of Lemma \ref{l:N_is_regular}, the fourth inequality uses item 4 of Lemma \ref{l:gproperties}.

        Summing these,
        \begin{align*}
        \circled{1} + \circled{2} + \circled{4} + \circled{5}
        \leq& \lrp{\LR + L_N^2}  q'(g(z_t)) g(z_t) + 2\cm^2 q''(g(z_t)) + \frac{L_N^2\|y_t-y_0\|_2^2}{2\epsilon} + 2 \lrp{L + \LN^2}\epsilon\\
        \leq&  - \frac{2\cm^2\exp\lrp{-\frac{7\aq\Rq^2}{3}}}{32\Rq^2} q(g(z_t)) + \frac{L_N^2\|y_t-y_0\|_2^2}{2\epsilon} + 2 \lrp{L + \LN^2}\epsilon\\
        \leq& - \lambda q(g(z_t)) + \frac{L_N^2\|y_t-y_0\|_2^2}{2\epsilon} + 2 (L+L_N^2) \epsilon\\
        =& - \lambda f(z_t) + \frac{L_N^2\|y_t-y_0\|_2^2}{2\epsilon} + 2 (L+L_N^2) \epsilon + L \lrn{y_t - y_0}_2
        \end{align*}

        Where the last inequality follows from Lemma \ref{l:qproperties}.\ref{f:contraction} and the definition of $\lambda$ in \eqref{d:constants}.

        \textbf{Case 3: $\|z_t\|_2 \geq \Rq$}\\
        In this case, $\gamma_t=0$. Similar to case 2,
        \begin{align*}
        \nabla f(z_t) = q'(g(z_t)) \frac{z_t}{\|z_t\|_2}
        \end{align*}

        Thus by Assumption \ref{ass:U_properties}.3,
        \begin{align*}
        \circled{1}
        =& \lin{q'(g(z_t)) \frac{z_t}{\|z_t\|_2}, - \nabla_t}\\
        \leq& -m q'(g(z_t)) \lrn{z_t}_2
        \end{align*}
        Where the inequality is by Assumption \ref{ass:U_properties}.3.

        For identical reasons as in Case 1, $\circled{2}\leq \LR \lrn{y_t-y_0}_2$, and $\circled{4} = 0$.
        Finally,
        \begin{align*}
        \circled{5}
        =& \frac{1}{2}\tr\lrp{\nabla^2 f(z_t) \lrp{N_t+ N(y_t) - N(y_0)}^2}\\
        =& \frac{1}{2}\tr\lrp{\lrp{q''(g(z_t)) \frac{z_t z_t^T}{\|z_t\|_2^2} + q'(g(z_t)) \frac{1}{\|z_t\|_2} \lrp{I - \frac{z_tz_t^T}{\|z_t\|_2^2}}} \lrp{N_t+ N(y_t) - N(y_0)}^2}\\
        \leq& \frac{1}{2}\tr\lrp{\lrp{ q'(g(z_t)) \frac{1}{\|z_t\|_2} \lrp{I - \frac{z_tz_t^T}{\|z_t\|_2^2}}} \lrp{N_t+ N(y_t) - N(y_0)}^2}\\
        \leq& \frac{q'(g(z_t))}{\|z_t\|_2}\cdot \lrp{\tr\lrp{N_t^2} + \tr\lrp{\lrp{N(y_t) - N(y_0)}^2}}
        \end{align*}

        Where the first inequality is because $q''\leq 0$ from item 4 of Lemma \ref{l:qproperties}, the second inequality is by Young's inequality. (These steps are identical to Case 2). Continuing from above, and using item 2 and 3 of Lemma \ref{l:N_is_regular},
        \begin{align*}
        \circled{5}
        \leq& q'(g(z_t)) \cdot \lrp{\frac{8\beta^2 \LN}{\cm} + \frac{\LN^2\|y_t-y_0\|_2^2}{\epsilon}}\\
        \leq& q'(g(z_t)) \cdot \lrp{\frac{m}{2} \|z_t\|_2}
        + q'(g(z_t)) \cdot \lrp{ \frac{\LN^2\|y_t-y_0\|_2^2}{\epsilon}}
        \end{align*}
        Where the second inequality is by our definition of $\Rq$ in the Lemma statement, which ensures that $\frac{8\beta^2 \LN}{\cm} \leq \frac{m}{2} \Rq \leq \frac{m}{2} \|z_t\|_2 $.

        Thus
        \begin{align*}
        & \circled{1} + \circled{2} + \circled{4} + \circled{5}\\
        \leq& -m q'(g(z_t)) \|z_t\|_2  + \LR \lrn{y_t - y_0}_2 + \frac{m}{2} q'(g(z_t)) \|z_t\|_2 + q'(g(z_t)) \cdot \lrp{ \frac{\LN^2\|y_t-y_0\|_2^2}{\epsilon}}\\
        \leq& -\frac{m}{2} q'(g(z_t)) \|z_t\|_2 + \frac{\LN^2}{\epsilon} \lrn{y_t-y_0}_2^2 + L \lrn{y_t - y_0}_2\\
        \leq& - \lambda f(z_t) + \frac{\LN^2}{\epsilon} \lrn{y_t-y_0}_2^2 + L \lrn{y_t - y_0}_2
        \end{align*}

        where the second inequality uses $q'\leq 1$ from item 3 of Lemma \ref{l:qproperties}, the third inequality uses our definition of $\lambda$ in \eqref{d:constants}.
        
        Combining the three cases, \eqref{e:t:asldka} can be upper bounded with probability 1:
        \begin{align*}
            &d f(z_t)
            \leq - \lambda f(z_t) + \frac{\LN^2}{\epsilon} \lrn{y_t-y_0}_2^2 + L \lrn{y_t - y_0}_2 + \lin{\nabla f(z_t), 2 \cm \gamma_t \gamma_t^T dV_t + \lrp{N_t + N(y_t) - N(y_0)} dW_t }
        \end{align*}
        
        To simplify notation, let us define $G_t \in \Re^{1\times 2d}$ as $G_t:= \lrb{\nabla f(z_t)^T 2\cm \gamma_t \gamma_t^T, \nabla f(z_t)^T \lrp{N_t  + N(y_t) - N(y_0)}}$, and let $A_t$ be a $2d$-dimensional Brownian motion from concatenating $A_t = \cvec{V_t}{W_t}$. Thus
        \begin{align*}
            d f(z_t) \leq -\lambda f(z_t) dt + \lrp{\frac{L_N^2}{\epsilon}\lrn{y_t - y_0}_2^2 + L \lrn{y_t - y_0}_2} + G_t dA_t.
        \end{align*}
        We will study the Lyapunov function
        \begin{align*}
            \mathcal{L}_t:= f(z_t) - \int_0^t e^{-\lambda(t-s)}\lrp{\frac{L_N^2}{\epsilon}\lrn{y_s - y_0}_2^2 + L \lrn{y_s - y_0}_2} ds - \int_0^t e^{-\lambda(t-s)} G_s dA_s.
        \end{align*}
        By taking derivatives, we see that
        \begin{align*}
            d\mathcal{L}_t 
            \leq& -\lambda f(z_t) dt + \lrp{\frac{L_N^2}{\epsilon}\lrn{y_t - y_0}_2^2 + L \lrn{y_t - y_0}_2}dt + G_t dA_t\\
            &\qquad + \lambda \lrp{\int_0^t e^{-\lambda(t-s)}\lrp{\frac{L_N^2}{\epsilon}\lrn{y_s - y_0}_2^2 + L \lrn{y_s - y_0}_2} ds} dt - \lrp{\frac{L_N^2}{\epsilon}\lrn{y_t - y_0}_2^2 + L \lrn{y_t - y_0}_2} dt\\
            &\qquad + \lambda \lrp{\int_0^t e^{-\lambda(t-s)} G_s dA_s} dt - G_t dA_t\\
            =& -\lambda \mathcal{L}_t dt
        \end{align*}
        
        We can then apply Gronwall's Lemma to $\mathcal{L}_t$, so that
        \begin{align*}
            \mathcal{L}_T \leq e^{-\lambda T} \mathcal{L}_0,
        \end{align*}
        which is equivalent to
        \begin{align*}
            f(z_T) - \int_0^T e^{-\lambda(T-s)}\lrp{\frac{L_N^2}{\epsilon}\lrn{y_s - y_0}_2^2 + L \lrn{y_s - y_0}_2} ds - \int_0^T e^{-\lambda(t-s)} G_s dA_s \leq e^{-\lambda T} f(z_0).
        \end{align*}
        Observe that $G_s$ is measurable wrt the natural filtration generated by $A_s$, so that $\int_0^T e^{-\lambda (T-s)} G_s dA_s$ is a martingale. Thus taking expectations,
        \begin{align*}
            \E{f(z_T)} \leq e^{-\lambda T} \E{f(z_0)} + \int_0^T \frac{L_N^2}{\epsilon} \E{\lrn{y_s - y_0}_2^2} + L \E{\lrn{y_s - y_0}_2} ds
        \end{align*}
        By Lemma \ref{l:divergence_yt}, $\E{\lrn{y_t - y_0}_2^2} \leq t^2 L^2 \E{\lrn{y_0}_2^2} + t\beta^2$, so that 
        \begin{align*}
        & \int_0^T \frac{L_N^2}{\epsilon} \E{\lrn{y_s - y_0}_2^2} ds \leq \frac{T^3 L_N^2 L^2}{\epsilon}\E{\lrn{y_0}_2^2} +  \frac{T^2L_N^2}{\epsilon}\beta^2\\
        & L \E{\lrn{y_s - y_0}_2} \leq T^2 L^2 \sqrt{\E{\lrn{y_0}_2^2}} + T^{3/2} L \beta
        \end{align*}
        Furthermore, using our assumption in the Lemma statement that $T \leq \min\lrbb{\frac{\epsilon^2}{\beta^2}, \frac{\epsilon}{{6L \sqrt{R^2 + \beta^2/m}}}}$ and $\E{\lrn{y_0}_2^2} \leq 8 \lrp{R^2 + \beta^2/m}$, we can verify that
        \begin{align*}
            & \int_0^T \frac{L_N^2}{\epsilon} \E{\lrn{y_s - y_0}_2^2} ds \leq \frac{1}{4} TL_N^2 \epsilon + TL_N^2 \epsilon\\
            & L \E{\lrn{y_s - y_0}_2} \leq \frac{1}{2} TL\epsilon + TL\epsilon
        \end{align*}
        Combining the above gives 
        \begin{align*}
            \E{f(z_T)} \leq e^{-\lambda T} \E{f(z_0)} + 3 T\lrp{L+L_N^2} \epsilon
        \end{align*}
    \end{proof}
    
    \begin{corollary}
        \label{c:main_gaussian:1}
        Let $f$ be as defined in Lemma \ref{l:fproperties} with parameter $\epsilon$ satisfying $\epsilon \leq \frac{\Rq}{\aq\Rq^2 + 1}$.

        Let $\delta\leq \min\lrbb{\frac{\epsilon^2}{\beta^2}, \frac{\epsilon}{8 L\sqrt{R^2 + \beta^2/m}}}$, and let $\bx_t$ and $\by_t$ have dynamics as defined in \eqref{e:exact-sde} and \eqref{e:discrete-langevin} respectively, and suppose that the initial conditions satisfy $\E{\lrn{\bx_0}_2^2}\leq R^2 + \beta^2/m$ and $\E{\lrn{\by_0}_2^2}\leq R^2 + \beta^2/m$. Then there exists a coupling between $\bx_t$ and $\by_t$ such that
        \begin{align*}
        \E{f(\bx_{i\delta} - \by_{i\delta})} \leq e^{-\lambda i\delta}  \E{f(\bx_{0} - \by_{0})} + \frac{6}{\lambda}  \lrp{L + \LN^2} \epsilon
        \end{align*}
    \end{corollary}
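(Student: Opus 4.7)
The plan is to apply Lemma \ref{l:gaussian_contraction} iteratively over the intervals $[i\delta, (i+1)\delta]$, concatenating the one-step couplings from \eqref{e:coupled_2_processes} with independent driving Brownian motions across intervals. For this to work, the two hypotheses of the lemma must hold at every step: the step-size condition $T=\delta \leq \min\{\epsilon^2/\beta^2,\ \epsilon/(6L\sqrt{R^2+\beta^2/m})\}$, which is immediate from our tighter hypothesis $\delta \leq \min\{\epsilon^2/\beta^2,\ \epsilon/(8L\sqrt{R^2+\beta^2/m})\}$; and the moment bound $\E{\lrn{\by_{i\delta}}_2^2} \leq 8(R^2+\beta^2/m)$ at the start of each step.

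The main obstacle is the uniform moment bound. Starting from $\E{\lrn{\by_0}_2^2} \leq R^2+\beta^2/m$, I would combine the dissipativity in Assumption \ref{ass:U_properties}.3 (which, together with $\nabla U(0)=0$ and Lipschitzness, yields a lower bound of the form $\lin{\nabla U(y),y} \geq (m/2)\lrn{y}_2^2 - C(R^2+\beta^2/m)$) with the bounded-noise hypothesis $\lrn{\xi}_2 \leq \beta$ from Assumption \ref{ass:xi_properties}.2 to derive a one-step recursion
\begin{align*}
\E{\lrn{\by_{(i+1)\delta}}_2^2} \leq (1 - m\delta/2)\,\E{\lrn{\by_{i\delta}}_2^2} + C'\delta\,(R^2+\beta^2/m),
\end{align*}
valid whenever $\delta$ is small enough that the quadratic $\delta^2 L^2\lrn{\by_{i\delta}}_2^2$ term is absorbed, which our step-size hypothesis guarantees. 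Iterating this recursion from $\E{\lrn{\by_0}_2^2} \leq R^2+\beta^2/m$ produces a uniform bound $\E{\lrn{\by_{i\delta}}_2^2} \leq 8(R^2+\beta^2/m)$ for all $i$. The analogous Ito computation controls $\E{\lrn{\bx_t}_2^2}$ for the continuous SDE \eqref{e:exact-sde}, so both processes remain in the regime where Lemma \ref{l:gaussian_contraction} is applicable.

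With both hypotheses verified at every step, each application of Lemma \ref{l:gaussian_contraction} with $T=\delta$ gives the one-step contraction
\begin{align*}
\E{f(\bx_{(i+1)\delta} - \by_{(i+1)\delta})} \leq e^{-\lambda\delta}\,\E{f(\bx_{i\delta} - \by_{i\delta})} + 3\delta\,(L+\LN^2)\,\epsilon.
\end{align*}
Unrolling this linear recursion yields
\begin{align*}
\E{f(\bx_{i\delta} - \by_{i\delta})} \leq e^{-\lambda i\delta}\,\E{f(\bx_0 - \by_0)} + 3\delta(L+\LN^2)\epsilon \sum_{j=0}^{i-1} e^{-\lambda j\delta}.
\end{align*}
Bounding the geometric sum by $\sum_{j=0}^{\infty} e^{-\lambda j\delta} \leq 1/(1-e^{-\lambda\delta}) \leq 2/(\lambda\delta)$ (using $\lambda\delta \leq 1$, which holds for the step sizes under consideration since $\lambda$ is at most a constant and $\delta$ is taken small relative to $\epsilon$) converts the additive term into $\frac{6}{\lambda}(L+\LN^2)\epsilon$, which matches the stated bound.
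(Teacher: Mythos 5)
Your proposal is correct and follows essentially the same strategy as the paper: establish uniform second-moment bounds on both processes, invoke the one-step contraction Lemma~\ref{l:gaussian_contraction} on each interval $[k\delta,(k+1)\delta)$ with the coupling~\eqref{e:coupled_2_processes}, and unroll the resulting linear recursion to obtain the $\frac{6}{\lambda}(L+\LN^2)\epsilon$ term. The only organizational difference is that the paper offloads the uniform moment control to the already-proven Lemmas~\ref{l:energy_x} and~\ref{l:energy_y} (which use the shifted potential $a(y)=(\lrn{y}_2-R)_+^2$ to handle the region $\lrn{y}_2<R$ cleanly), whereas you sketch a direct Lyapunov recursion on $\lrn{y}_2^2$; both routes require the implicit smallness condition of the form $\delta\lesssim m/L^2$, which the paper also leaves unstated here. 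You are also slightly more careful than the paper's exposition in bounding the geometric sum by $2/(\lambda\delta)$, which is needed to land exactly on the stated constant.
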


    \begin{proof}[Proof of Corollary \ref{c:main_gaussian:1}]
        From Lemma \ref{l:energy_x} and \ref{l:energy_y}, our initial conditions imply that for all $t$, $\E{\|\bx_t\|_2^2} \leq 6\lrp{R^2 + \frac{\beta^2}{m}}$ and $\E{\|\by_{k\delta}\|_2^2} \leq 8 \lrp{R^2 + \frac{\beta^2}{m}}$.

        Consider an arbitrary $k$, and for $t\in[k\delta,(k+1)\delta)$, define
        \begin{align*}
        x_t := \bx_{k\delta+t} \quad \text{and} \quad
        y_t := \by_{k\delta+t}
        \end{align*}
        Under this definition, $x_t$ and $y_t$ have dynamics described in \eqref{e:marginal_x} and \eqref{e:marginal_y}. Thus the coupling in \eqref{e:coupled_2_processes}, which describes a coupling between $x_t$ and $y_t$, equivalently describes a coupling between $\bx_t$ and $\by_t$ over $t\in[k\delta, (k+1)\delta)$.

        We now apply Lemma \ref{l:gaussian_contraction}. Given our assumed bound on $\delta$ and our proven bounds on $\E{\lrn{\bx_t}_2^2}$ and $\E{\lrn{\by_t}_2^2}$,
        \begin{align*}
        & \E{f(\bx_{(k+1)\delta} - \by_{(k+1)\delta})}\\
        =& \E{f(x_{\delta} - y_{\delta})}\\
        \leq& e^{-\lambda \delta} \E{f(x_0 - y_0)} + 6\delta (L+\LN^2) \epsilon\\
        =& e^{-\lambda \delta} \E{f(\bx_{k\delta} - \by_{k\delta})} + 6\delta (L+\LN^2) \epsilon
        \end{align*}
        Applying the above recursively gives, for any $i$
        \begin{align*}
        \E{f(\bx_{i\delta} - \by_{i\delta})} \leq e^{-\lambda i\delta}  \E{f(\bx_{0} - \by_{0})} + \frac{6}{\lambda}  \lrp{L + \LN^2} \epsilon
        \end{align*}
    \end{proof}
    
    \end{subsection}

    \begin{subsection}{Proof of Theorem \ref{t:main_gaussian}}\label{ss:proof:t:main_gaussian}
    
    For ease of reference, we re-state Theorem \ref{t:main_gaussian} below as Theorem \ref{t:main_gaussian:restated} below. We make a minor notational change: using the letters $\bx_t$ and $\by_t$ in Theorem \ref{t:main_gaussian:restated}, instead of the letters $x_t$ and $y_t$ in Theorem \ref{t:main_gaussian}. This is to avoid some notation conflicts in the proof.

    \begin{theorem} [Equivalent to Theorem \ref{t:main_gaussian}]
        \label{t:main_gaussian:restated}
        Let $\bx_t$ and $\by_t$ have dynamics as defined in \eqref{e:exact-sde} and \eqref{e:discrete-langevin} respectively, and suppose that the initial conditions satisfy $\E{\lrn{\bx_0}_2^2}\leq R^2 + \beta^2/m$ and $\E{\lrn{\by_0}_2^2}\leq R^2 + \beta^2/m$. Let $\hat{\epsilon}$ be a target accuracy satisfying $\hat{\epsilon} \leq  \lrp{\frac{16\lrp{L + \LN^2}}{\lambda}} \cdot \exp\lrp{7\aq\Rq/3} \cdot \frac{\Rq}{\aq\Rq^2 + 1}$. Let $\delta$ be a step size satisfying
        \begin{align*}
        \delta \leq \min\twocase{\frac{\lambda^2 \hat{\epsilon}^2}{512 \beta^2\lrp{L^2 + \LN^4}\exp\lrp{\frac{14\aq\Rq^2}{3}}}}{\frac{2\lambda \hat{\epsilon}}{(L^2+\LN^4)\exp\lrp{\frac{7\aq\Rq^2}{3}}\sqrt{R^2 + \beta^2/m}}}.
        \end{align*}

        If we assume that $\bx_0 = \by_0$, then there exists a coupling between $\bx_t$ and $\by_t$ such that for any $k$,
        \begin{align*}
        \E{\lrn{\bx_{k\delta} - \by_{k\delta}}_2} \leq \hat{\epsilon}
        \end{align*}

        Alternatively, if we assume $k \geq \frac{ 3\aq\Rq^2}{ \delta} \log \frac{R^2 + \beta^2/m}{\hat{\epsilon}}$, then
        \begin{align*}
        W_1\lrp{p^*, p^y_{k\delta}} \leq 2\hat{\epsilon}
        \end{align*}
        where $p^y_t := \Law(\by_t)$.
    \end{theorem}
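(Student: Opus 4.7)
\textbf{Proof Proposal for Theorem \ref{t:main_gaussian:restated}.}
The plan is to reduce the theorem to a direct application of Corollary \ref{c:main_gaussian:1}, by choosing the auxiliary accuracy parameter $\epsilon$ of the Lyapunov function $f$ so that the residual discretization term in the corollary is $O(\hat{\epsilon})$, and then translating the bound on $\E{f(\bx_{k\delta}-\by_{k\delta})}$ back into a bound on $\E{\lrn{\bx_{k\delta}-\by_{k\delta}}_2}$ using the sandwich properties of $f$ (Lemma \ref{l:fproperties}).

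First I would set $\epsilon := \frac{\lambda}{16(L+\LN^2)}\exp\!\lrp{-7\aq\Rq^2/3}\,\hat{\epsilon}$. The hypothesis $\hat{\epsilon}\le\lrp{16(L+\LN^2)/\lambda}\exp(7\aq\Rq/3)\,\Rq/(\aq\Rq^2+1)$ is precisely what guarantees $\epsilon\le \Rq/(\aq\Rq^2+1)$, which is the admissibility condition for $f$ in Lemma \ref{l:fproperties}. Next I would verify that the stated step-size bound on $\delta$ in the theorem implies the two conditions $\delta\le\epsilon^2/\beta^2$ and $\delta\le \epsilon/(8L\sqrt{R^2+\beta^2/m})$ required by Corollary \ref{c:main_gaussian:1}; this is an elementary computation using the definition of $\epsilon$ in terms of $\hat{\epsilon}$ and the two-case $\min$ in the $\delta$-bound.

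For the first claim, initialize with the perfect coupling $\bx_0=\by_0$, so $f(\bx_0-\by_0)=0$, and Corollary \ref{c:main_gaussian:1} yields
\begin{align*}
\E{f(\bx_{k\delta}-\by_{k\delta})}\le \frac{6(L+\LN^2)}{\lambda}\,\epsilon
\le \tfrac{3}{8}\exp\!\lrp{-7\aq\Rq^2/3}\,\hat{\epsilon}.
\end{align*}
By Lemma \ref{l:fproperties}, $f$ lower-bounds the norm in the sense that $\lrn{z}_2 \lesssim \exp(7\aq\Rq^2/3)\, f(z)$ (this is the role that the exponential factor plays in the definition of $\lambda$ and in the constants $\aq,\Rq$). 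Taking expectations and substituting gives $\E{\lrn{\bx_{k\delta}-\by_{k\delta}}_2}\le\hat{\epsilon}$, which immediately implies the coupling bound.

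For the second claim I would start the continuous process from stationarity, $\bx_0\sim p^*$. By Lemma \ref{l:energy_x} the invariant distribution satisfies $\E{\lrn{\bx_0}_2^2}\le R^2+\beta^2/m$, so Corollary \ref{c:main_gaussian:1} applies to the coupling of $(\bx_t,\by_t)$ with these initial laws. Stationarity gives $\bx_{k\delta}\sim p^*$ for every $k$, so $W_1(p^*,p^y_{k\delta})\le\E{\lrn{\bx_{k\delta}-\by_{k\delta}}_2}$. The corollary yields
\begin{align*}
\E{f(\bx_{k\delta}-\by_{k\delta})}\le e^{-\lambda k\delta}\,\E{f(\bx_0-\by_0)} + \tfrac{3}{8}\exp\!\lrp{-7\aq\Rq^2/3}\,\hat{\epsilon}.
\end{align*}
Using the complementary upper sandwich from Lemma \ref{l:fproperties}, $f(z)\lesssim \lrn{z}_2$ (up to a constant absorbed into $\aq,\Rq$), and the moment bounds on $\bx_0,\by_0$, one has $\E{f(\bx_0-\by_0)}\le C\sqrt{R^2+\beta^2/m}$. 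Plugging in $\lambda\ge \exp(-7\aq\Rq^2/3)/(32\Rq^2)$ from \eqref{d:constants}, the choice $k\ge\frac{3\aq\Rq^2}{\delta}\log\frac{R^2+\beta^2/m}{\hat{\epsilon}}$ makes $e^{-\lambda k\delta}\E{f(\bx_0-\by_0)}\le\tfrac{5}{8}\exp(-7\aq\Rq^2/3)\,\hat{\epsilon}$; converting back to $\lrn{\cdot}_2$ via the sandwich then gives $W_1(p^*,p^y_{k\delta})\le 2\hat{\epsilon}$.

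The routine parts are the two algebraic verifications (that the stated $\delta$ and $k$ bounds imply the slightly cleaner conditions of Corollary \ref{c:main_gaussian:1}); the only nontrivial step is ensuring that the two-sided sandwich between $f(z)$ and $\lrn{z}_2$ carries the correct $\exp(7\aq\Rq^2/3)$ factors so that the bound on $\lambda$ in \eqref{d:constants} produces exactly the $3\aq\Rq^2$ prefactor in the iteration-complexity claim. The main obstacle, as emphasized in Appendix \ref{ss:step_gaussian}, is really hidden in the contraction lemma already established; once that machinery is in hand, the theorem follows by parameter tuning and the stationarity argument above.
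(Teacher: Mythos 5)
Your proposal follows the paper's own proof essentially verbatim: you define $\epsilon$ with the same factor $\frac{\lambda}{16(L+\LN^2)}\exp(-7\aq\Rq^2/3)$, invoke Corollary~\ref{c:main_gaussian:1} for the exponential-contraction-plus-discretization bound, convert back via the two-sided sandwich of Lemma~\ref{l:fproperties} (item 4), and handle the second claim by initializing $\bx_0\sim p^*$ with Lemma~\ref{l:energy_x} and stationarity. The approach and all key ingredients match; only the bookkeeping constants differ slightly from the paper's numerics, and neither affects the conclusion.
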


    \begin{proof}[Proof of Theorem \ref{t:main_gaussian:restated}]

        Let $\epsilon := \frac{\lambda}{16 (L+\LN^2)} \exp\lrp{-\frac{7\aq\Rq^2}{3}} \hat{\epsilon}$. Let $f$ be defined as in Lemma \ref{l:fproperties} with the parameter $\epsilon$.

        \begin{align*}
        & \E{\lrn{\bx_{i\delta} - \by_{i\delta}}_2}\\
        \leq& 2\exp\lrp{\frac{7\aq\Rq^2}{3}}\E{f(\bx_{i\delta} - \by_{i\delta})} + 2\exp\lrp{\frac{7\aq\Rq^2}{3}}\epsilon\\
        \leq& 2\exp\lrp{\frac{7\aq\Rq^2}{3}}\lrp{e^{-\lambda i\delta}  \E{f(\bx_{0} - \by_{0})} + \frac{6}{\lambda}  \lrp{L + \LN^2} \epsilon} + 2\exp\lrp{\frac{7\aq\Rq^2}{3}}\epsilon\\
        \leq& 2\exp\lrp{\frac{7\aq\Rq^2}{3}}e^{-\lambda i\delta}  \E{f(\bx_{0} - \by_{0})} + \frac{16\lrp{L+\LN^2}}{\lambda}\exp\lrp{\frac{7\aq\Rq^2}{3}} \cdot \epsilon
        \numberthis \label{e:t:asdkja:1}\\
        =& 2\exp\lrp{\frac{7\aq\Rq^2}{3}}e^{-\lambda i\delta}  \E{f(\bx_{0} - \by_{0})} + \hat{\epsilon}
        \end{align*}
        where the first inequality is by item 4 of Lemma \ref{l:fproperties}, the second inequality is by Corollary \ref{c:main_gaussian:1} (notice that $\delta$ satisfies the requirement on $T$ in Theorem \ref{t:main_gaussian}, for the given $\epsilon$). The third inequality uses the fact that $1\leq L/m \leq \frac{\lrp{L+\LN^2}}{\lambda}$.

        The first claim follows from substituting $\bx_0 = \by_0$ into \eqref{e:t:asdkja:1}, so that the first term is $0$, and using the definition of $\epsilon$, so that the second term is $0$.

        For the second claim, let $\bx_0 \sim p^*$, the invariant distribution of \eqref{e:exact-sde}. From Lemma \ref{l:energy_x}, we know that $\bx_0$ satisfies the required initial conditions in this Lemma. Continuing from \eqref{e:t:asdkja:1},
        \begin{align*}
        & \E{\lrn{\bx_{i\delta} - \by_{i\delta}}_2}\\
        \leq& 2\exp\lrp{\frac{7\aq\Rq^2}{3}}\lrp{2e^{-\lambda i\delta}  \E{\lrn{\bx_0}_2^2 + \lrn{\by_0}_2^2} + \frac{6}{\lambda}  \lrp{L + \LN^2} \epsilon} + \epsilon\\
        \leq& 2\exp\lrp{\frac{7\aq\Rq^2}{3}}\lrp{2e^{-\lambda i\delta}  \lrp{R^2 + \beta^2/m}}  + \frac{16}{\lambda}  \exp\lrp{2\frac{7\aq\Rq^2}{3}}\lrp{L + \LN^2} \epsilon\\
        =& 4\exp\lrp{\frac{7\aq\Rq^2}{3}}\lrp{e^{-\lambda i\delta}  \lrp{R^2 + \beta^2/m}} + \hat{\epsilon}
        \end{align*}
        By our assumption that $i\geq \frac{1}{\delta} \cdot 3\aq\Rq^2 \log \frac{R^2 + \beta^2/m}{\hat{\epsilon}}$, the first term is also bounded by $\hat{\epsilon}$, and this proves our second claim.
    \end{proof}

    \end{subsection}

    \begin{subsection}{Simulating the SDE}
    \label{ss:simlutating_discrete_sde}
    One can verify that the SDE in \eqref{e:discrete-langevin} can be simulated (at discrete time intervals) as follows:
    \begin{align*}
    y_{(k+1) \delta} = y_{k\delta} - \delta \nabla U(y_{k\delta}) + \sqrt{\delta} M(y_{k\delta}) \theta_k
    \end{align*}
    Where $\theta_k \sim \N(0,I)$. This however requires access to $M(y_{k,\delta})$, which may be difficult to compute.

    If for any $y$, one is able to draw samples from some distribution $p_y$ such that
    \begin{enumerate}
    \item $\Ep{\xi \sim p_y}{\xi}=0$
    \item $\Ep{\xi\sim p_y}{\xi \xi^T}=M(y)$
    \item $\lrn{\xi}_2 \leq \beta$ almost surely, for some $\beta$.
    \end{enumerate}

    then one might sample a noise that is $\delta$ close to $M(y_{k\delta}) \theta_k$ through Theorem \ref{t:zhai}.

    Specifically, if one draws $n$ samples $\xi_1...\xi_n\overset{iid}{\sim} p_y$, and let $S_n := \frac{1}{\sqrt{n}}\sum_{i=1}^n \xi_i$, Theorem \ref{t:zhai} guarantees that \\$W_2 \lrp{S_n, M(y) \theta} \leq \frac{6\sqrt{d}\beta\sqrt{\log n}}{\sqrt{n}}$. We remark that the proof of Theorem \ref{t:main_gaussian} can be modified to accommodate for this sampling error. The number of samples needed to achieve $\epsilon$ accuracy will be on the order of $n \approxeq O(\delta \epsilon)^{-2} = O(\epsilon^{-6})$.
    \end{subsection}

\end{section}

\begin{section}{Proofs for Convergence under Non-Gaussian Noise (Theorem \ref{t:main_nongaussian})}
\label{s:nongaussianproof}
    \begin{subsection}{Proof Overview}
    The main proof of Theorem \ref{t:main_nongaussian} is contained in Appendix \ref{ss:proof:t:main_nongaussian}. 
    
    Here, we outline the steps of our proof:
    \begin{enumerate}
    \item In Appendix \ref{ss:4_coupling}, we construct a coupling between $\eqref{e:exact-sde}$ and $\eqref{e:discrete-clt}$ over an epoch which consists of an interval $[k\delta , (k+n)\delta)$ for some $k$. The coupling in \eqref{ss:4_coupling} consists of four processes $(x_t,y_t,v_t,w_t)$, where $y_t$ and $v_t$ are auxiliary processes used in defining the coupling. Notably, the process $(x_t,y_t)$ has the same distribution over the epoch as \eqref{e:coupled_2_processes}.
    \item In Appendix \ref{ss:epoch_nongaussian}, we prove Lemma \ref{l:non_gaussian_contraction_stationary} and Lemma \ref{l:non_gaussian_contraction_anisotropic}, which, combined with Lemma \ref{l:gaussian_contraction} from Appendix \ref{ss:step_gaussian}, show that under the coupling constructed in Step 1, a Lyapunov function $f(x_T - w_T)$ contracts exponentially with rate $\lambda$, plus a discretization error term. In Corollary \ref{c:main_nongaussian:1}, we apply the results of Lemma \ref{l:gaussian_contraction}, Lemma \ref{l:non_gaussian_contraction_stationary} and Lemma \ref{l:non_gaussian_contraction_anisotropic} recursively over multiple steps to give a bound on $f(x_{k\delta}-w_{k\delta})$ for all $k$, and for sufficiently small $\delta$.
    \item Finally, in Appendix \ref{ss:proof:t:main_nongaussian}, we prove Theorem \ref{t:main_nongaussian} by applying the results of Corollary \ref{c:main_nongaussian:1}, together with the fact that $f(z)$ upper bounds $\lrn{z}_2$ up to a constant.
    \end{enumerate}
    \end{subsection}

    \begin{subsection}{Constructing a Coupling}
        \label{ss:4_coupling}
        In this subsection, we construct a coupling between \eqref{e:discrete-clt} and \eqref{e:exact-sde}, given arbitrary initialization $(x_0,w_0)$. We will consider a finite time $T=n\delta$, which we will refer to as an \textit{epoch}.

        \begin{enumerate}
            \item Let $V_t$ and $W_t$ be two independent Brownian motion.
            
            \item Using $V_t$ and $W_t$, define
            \begin{align*}
            \numberthis \label{e:coupled_4_processes_x}
            x_t =& x_0 + \int_0^t -\nabla U(x_s) ds  + \int_0^t \cm dV_s + \int_0^t N(w_0) dW_s
            \end{align*}
            
            \item Using the same $V_t$ and $W_t$ in \eqref{e:coupled_4_processes_x}, we will define $y_t$ as
            \begin{align*}
            \numberthis \label{e:coupled_4_processes_y}
            y_t =& w_0 + \int_0^t -\nabla U(w_0) ds + \int_0^t \cm  \lrp{I - 2\gamma_s \gamma_s^t} dV_s + \int_0^T N(x_s) dW_s
            \end{align*}
            
            Where $\gamma_t := \frac{x_t - y_t}{\|x_t-y_t\|_2} \cdot \ind{\|x_t-y_t\|_2 \in [2\epsilon, \Rq)}$.
            The coupling $(x_t,y_t)$ defined in \eqref{e:coupled_4_processes_x} and \eqref{e:coupled_4_processes_y} is identical to the coupling in \eqref{e:coupled_2_processes} (with $y_0 = w_0$).

            \item We now define a process $v_{k\delta}$ for $k=0...n$:
            \begin{align*}
            \numberthis \label{e:coupled_4_processes_v}
            v_{k\delta} =& w_0 + \sum_{i=0}^{k-1} - \delta \nabla U(w_0) + {\sqrt{\delta}} \sum_{i=0}^{k-1} \xi(w_0,\eta_i)
            \end{align*}
            where marginally, the variables $\lrp{\eta_0...\eta_{n-1}}$ are drawn $i.i.d$ from the same distribution as in \eqref{e:discrete-clt}.
            
            Notice that $y_T - w_0 - T \nabla U(w_0) = \int_0^T \cm dB_t + \int_0^T N(w_0) dW_t$, so that $\Law(y_T - w_0 - T \nabla U(w_0)) = \N(0, T M(w_0)^2)$. Notice also that $v_T - w_0 - T\nabla U(w_0) = \sqrt{\delta} \sum_{i=0}^{n-1} \xi(w_0, \eta_i)$. By Corollary \ref{c:clt_sum},  $W_2(y_T - w_0 - T \nabla U(w_0), v_T - w_0 - T \nabla U(w_0)) = 6 \sqrt{d\delta}\beta \sqrt{\log n}$. Let the joint distribution between \eqref{e:coupled_4_processes_v} and \eqref{e:coupled_4_processes_y} be the one induced by the optimal coupling between $y_T - w_0 - T \nabla U(w_0)$ and $v_T - w_0 - T \nabla U(w_0)$, so that
            \begin{align*}
            & \sqrt{\E{\lrn{y_T - v_T}_2^2}} \\
            =& \sqrt{\E{\lrn{y_T - T \nabla U(w_0) - v_T + T \nabla U(w_0)}_2^2}} \\
            =& W_2(y_T - w_0 - T\nabla U(w_0), v_T - w_0 - T\nabla U(w_0)) \\
            \leq& 6 \sqrt{d\delta}\beta \sqrt{\log n}
            \numberthis \label{e:yt-vt}
            \end{align*}
            where the last inequality is by Corollary \ref{c:clt_sum}.

            \item Given the sequence $\lrp{\eta_0...\eta_{n-1}}$ from \eqref{e:coupled_4_processes_v}, we can define
            \begin{align*}
            \numberthis \label{e:coupled_4_processes_w}
            w_{k\delta} =& w_0 + \sum_{i=0}^{k-1} -\delta\nabla U(w_{i\delta}) + {\sqrt{\delta}} \sum_{i=0}^{k-1} \xi(w_{i\delta},\eta_i)
            \end{align*}
            specifically, $(w_0 ... w_{n\delta})$ in \eqref{e:coupled_4_processes_w} and $(v_0...v_{n\delta})$ in \eqref{e:coupled_4_processes_v} are coupled through the shared $(\eta_0...\eta_{n-1})$ variables.
        \end{enumerate}
        
        For convenience, we will let $v_t := v_{i \delta}$ and $w_t := w_{i\delta}$, where $i$ is the unique integer satisfying $t\in[i\delta, (i+1)\delta)$.
        
        We can verify that, marginally, the process $x_t$ in \eqref{e:coupled_4_processes_x} has the same distribution as \eqref{e:exact-sde}, using the proof as Lemma \ref{l:marginal_of_coupling}. It is also straightforward to verify that $w_{k\delta}$, as defined in \eqref{e:coupled_4_processes_w}, has the same marginal distribution as \eqref{e:discrete-clt}, due to the definition of $\eta_i$ in \eqref{e:coupled_4_processes_v}.
    \end{subsection}

    \begin{subsection}{One Epoch Contraction}
        \label{ss:epoch_nongaussian}
        In Lemma \ref{l:non_gaussian_contraction_stationary}, we prove a discretization error bound between $f(x_T - y_T)$ and $f(x_T - v_T)$, for the coupling defined in \eqref{e:coupled_4_processes_x}, \eqref{e:coupled_4_processes_y} and \eqref{e:coupled_4_processes_v}.
        
        In Lemma \ref{l:non_gaussian_contraction_anisotropic}, we prove a discretization error bound between $f(x_T - v_T)$ and $f(x_T - w_T)$, for the coupling defined in \eqref{e:coupled_4_processes_x}, \eqref{e:coupled_4_processes_v} and \eqref{e:coupled_4_processes_w}.

        \begin{lemma}
            \label{l:non_gaussian_contraction_stationary}
            Let $f$ be as defined in Lemma \ref{l:fproperties} with parameter $\epsilon$ satisfying $\epsilon \leq  \frac{\Rq}{\aq\Rq^2 + 1}$. Let $x_t$, $y_t$ and $v_t$ be as defined in \eqref{e:coupled_4_processes_x}, \eqref{e:coupled_4_processes_y}, \eqref{e:coupled_4_processes_v}. Let $n$ be any integer and $\delta$ be any step size, and let $T:= n\delta$.

            If $\E{\lrn{x_0}_2^2} \leq  8 \lrp{R^2 + \beta^2/m}$, $\E{\lrn{y_0}_2^2} \leq  8 \lrp{R^2 + \beta^2/m}$ and $T\leq \min\lrbb{\frac{1}{16L}, \frac{\beta^2}{8L^2\lrp{R^2 + \beta^2/m}}}$ and
            \begin{align*}
            \delta \leq \min\lrbb{\frac{T\epsilon^2L}{36 d\beta^2\log \lrp{ \frac{36 d\beta^2}{\epsilon^2L}}},  \frac{T\epsilon^4L^2} {2^{14} d\beta^4\log\lrp{\frac{2^{14} d\beta^4}{\epsilon^4L^2}}}}
            \end{align*}
            Then
            \begin{align*}
            & \E{f(x_T - v_T)} - \E{f(x_T - y_T)} \leq 4TL\epsilon
            \end{align*}

        \end{lemma}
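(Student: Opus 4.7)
The plan is to deduce the lemma from two ingredients already sitting in the construction: (i) the global $1$-Lipschitz bound $\lrn{\nabla f}_2 \leq 1$ on the Lyapunov function $f$ (item 1(c) of Lemma \ref{l:fproperties}), and (ii) the $W_2$-optimal coupling between $y_T$ and $v_T$ that was installed in step 4 of Section \ref{ss:4_coupling}. Because that coupling is chosen to achieve the $W_2$ distance between the Gaussian random part of $y_T$ and the $n$-term sum defining $v_T$, Corollary \ref{c:clt_sum} (a multidimensional Zhai-type CLT) already delivers the quantitative bound recorded in \eqref{e:yt-vt}:
\[ \sqrt{\E{\lrn{y_T - v_T}_2^2}} \leq 6\sqrt{d\delta}\,\beta\sqrt{\log n}. \]
Essentially all of the probabilistic content of the lemma is packaged into that one inequality; everything else is deterministic size-control.

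\textbf{Execution.} First, $1$-Lipschitzness of $f$ gives the pathwise inequality
\[ f(x_T - v_T) - f(x_T - y_T) \leq \lrn{v_T - y_T}_2. \]
Next, take expectations and use Jensen's inequality to pass from $L^1$ to $L^2$:
\[ \E{f(x_T - v_T)} - \E{f(x_T - y_T)} \leq \sqrt{\E{\lrn{v_T - y_T}_2^2}} \leq 6\sqrt{d\delta}\,\beta\sqrt{\log n}, \]
where the last step invokes \eqref{e:yt-vt}. It then remains only to verify the purely deterministic inequality $6\sqrt{d\delta}\,\beta\sqrt{\log n} \leq 4TL\epsilon$ from the stated hypotheses on $\delta$ and $T$.

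\textbf{Main obstacle.} The real work lies in this final algebraic check, because $n = T/\delta$ so the $\log n$ factor is implicit in the choice of $\delta$. The trick is to invert the stated $\delta$-bound: it forces $T/\delta$ to exceed the large expression inside the matching logarithm, which lets us control $\log(T/\delta)$ by, up to a constant factor, the same $\log\!\bigl(\frac{36 d\beta^2}{\epsilon^2 L}\bigr)$ or $\log\!\bigl(\frac{2^{14} d\beta^4}{\epsilon^4 L^2}\bigr)$ that sits in the denominator of the $\delta$-hypothesis, so the two log factors essentially cancel. Substituting then reduces the target to a scalar inequality relating $T$, $\epsilon$, $\beta$, and $L$, which is enforced by the upper bounds on $T$ in the statement (and in particular by the smallness of $T$ coming from the outer Theorem \ref{t:main_nongaussian}). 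The two branches of the $\min$ defining $\delta$ do complementary work in different scaling regimes (roughly, one branch controls the $L$-term and the other the $L^2$-term); tracking constants through both branches is tedious but routine, and the extra $\sqrt{\log n}$ factor from the CLT is the unavoidable source of the slow rate that eventually appears in Theorem \ref{t:main_nongaussian}.
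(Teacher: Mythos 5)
Your argument fails at the final deterministic step, and the failure is not a matter of tracking constants more carefully: the $1$-Lipschitz bound on $f$ is genuinely too crude to close the gap. Chasing the arithmetic through your approach, the pathwise bound $f(x_T-v_T)-f(x_T-y_T) \leq \lrn{y_T-v_T}_2$, Jensen, and \eqref{e:yt-vt} give $\E{f(x_T-v_T)}-\E{f(x_T-y_T)} \leq 6\beta\sqrt{d\,\delta\log(T/\delta)}$, and the first branch of the $\delta$-hypothesis together with Lemma~\ref{l:xlogxbound} compresses this to at best $\epsilon\sqrt{TL}$. But the target is $4TL\epsilon$, and since $T\leq\frac{1}{16L}$ implies $TL\leq\frac{1}{16}$, one has $\epsilon\sqrt{TL}\geq 4TL\epsilon$ with the gap $\frac{1}{4\sqrt{TL}}$ growing without bound as $T$ shrinks. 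In the regime where Theorem~\ref{t:main_nongaussian} is applied, $T$ is forced to be of order $\epsilon^4$, so this is not a borderline discrepancy but a large one: the first-order Lipschitz argument produces a bound of order $\sqrt{\delta\log n}$, while the lemma requires a bound of order $\delta\log n$.

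The missing idea is a cancellation that your approach throws away. The paper expands $f(x_T-v_T)$ to second order around $x_T-y_T$ and splits the first-order term as $\lin{\nabla f(x_0-y_0),\,y_T-v_T} + \lin{\nabla f(x_T-y_T)-\nabla f(x_0-y_0),\,y_T-v_T}$. The first piece has \emph{zero expectation}: $\nabla f(x_0-y_0)$ is $\mathcal{F}_0$-measurable, $y_0=v_0$ by construction, and the stochastic increments in $y_T-v_T$ are conditionally centered given $\mathcal{F}_0$. The surviving pieces are then each a product of two small factors --- the gradient increment $\nabla f(x_T-y_T)-\nabla f(x_0-y_0)$ is $O(\sqrt{T}\beta/\epsilon)$, and the second-order remainder carries $\lrn{\nabla^2 f}\leq 2/\epsilon$ against $\E{\lrn{y_T-v_T}_2^2}=O(d\delta\beta^2\log n)$ --- which is precisely what makes the $\delta$-hypotheses sufficient. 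Your blanket use of $\lrn{\nabla f}\leq 1$ ignores that the leading term is a mean-zero martingale-type quantity; you pay the full $L^1$ price of $\lrn{y_T-v_T}$ instead of only its fluctuation around a vanishing mean.
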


        \begin{proof}
            By Taylor's Theorem,
            \begin{align*}
            & \E{f(x_T - v_T)}\\
            =& \E{f(x_T - y_T)+ \lin{\nabla f(x_T - y_T), y_T - v_T} + \int_0^1\int_0^s \lin{\nabla^2 f(x_T - y_T + s(y_T-v_T)), (y_T - v_T) (y_T - v_T)^T} ds dt}\\
            =& \E{f(x_T - y_T)+ \underbrace{\lin{\nabla f(x_0 - y_0), y_T - v_T}}_{\circled{1}} + \underbrace{\lin{\nabla f(x_T - y_T) - \nabla f(x_0 - y_0), y_T - v_T}}_{\circled{2}} }\\
            &\quad + \E{\underbrace{\int_0^1\int_0^s \lin{\nabla^2 f(x_T - y_T + s(y_T-v_T)), (y_T - v_T) (y_T - v_T)^T} ds dt}_{\circled{3}}}
            \end{align*}

            We will bound each of the terms above separately.
            \begin{align*}
            & \E{\circled{1}}\\
            =& \E{\lin{\nabla f(x_0 - y_0), y_T - v_T}}\\
            =& \E{\lin{\nabla f(x_0 - y_0), n \delta \nabla U(y_0) - n \delta \nabla U(v_0) + \int_0^T -\nabla U(w_0) dt + \int_0^T \cm dV_t + \int_0^T N(w_0) dW_t + \sum_{i=0}^{n-1} \sqrt{\delta} \xi(v_0,\eta_i)}}\\
            =& \E{\lin{\nabla f(x_0 - y_0), n \delta \nabla U(y_0) - n \delta \nabla U(v_0) }}\\
            =& 0
            \end{align*}
            where the third equality is because $\int_0^T dB_t$, $\int_0^T dW_t$ and $\sum_{k=1}^T \xi(v_0,\eta_i)$ have zero mean conditioned on the information at time $0$, and the fourth equality is because $y_0 = v_0$ by definition in \eqref{e:coupled_4_processes_y} and \eqref{e:coupled_4_processes_v}.
            \begin{align*}
            & \E{\circled{2}}\\
            =& \E{\lin{\nabla f(x_T - y_T) - \nabla f(x_0 - y_0), y_T - v_T}}\\
            \leq& \sqrt{\E{\lrn{\nabla f(x_T - y_T) - \nabla f(x_0 - y_0)}_2^2} }\sqrt{\E{\lrn{y_T - v_T}_2^2}}\\
            \leq& \frac{2}{\epsilon}\sqrt{2\E{\lrn{x_T - x_0}_2^2 + \lrn{y_T - y_0}_2^2}} \sqrt{\E{\lrn{y_T - v_T}_2^2}}\\
            \leq& \frac{2}{\epsilon}\sqrt{\lrp{32T\beta^2 + 4T\beta^2}}  \cdot \lrp{6 \sqrt{d\delta}\beta {\log n}}\\
            \leq& \frac{128}{\epsilon} \sqrt{T}\beta^2 \cdot \lrp{ \sqrt{d\delta} {\log n}}
            \end{align*}

            Where the second inequality is by $\lrn{\nabla^2 f}_2 \leq \frac{2}{\epsilon}$ from item 2(c) of Lemma \ref{l:fproperties} and Young's inequality. The third inequality is by Lemma \ref{l:divergence_xt} and Lemma \ref{l:divergence_yt} and \eqref{e:yt-vt}.

            Finally, we can bound
            \begin{align*}
            & \E{\circled{3}}\\
            \leq& \int_0^1\int_0^s \E{\lrn{\nabla^2 f(x_T - y_T + s(y_T-v_T))}_2 \lrn{y_T - v_T}_2^2} ds dt\\
            \leq& \frac{2}{\epsilon} \E{\lrn{y_T - v_T}_2^2}\\
            \leq& \frac{72 d \delta \beta^2 \log^2 n}{\epsilon}
            \end{align*}
            Where the second inequality is by $\lrn{\nabla^2 f}_2 \leq \frac{2}{\epsilon}$ from item 2(c) of Lemma \ref{l:fproperties}, the third inequality is by \eqref{e:yt-vt}.

            Summing these 3 terms,
            \begin{align*}
            &\E{f(x_T - v_T) - f(x_T - y_T)} \\
            \leq& \frac{128}{\epsilon} \sqrt{T}\beta^2 \cdot \lrp{ \sqrt{d\delta} \sqrt{\log n}} +  \frac{36 d \delta \beta^2 \log n}{\epsilon}\\
            =& \frac{128}{\epsilon} \sqrt{T}\beta^2 \cdot \lrp{ \sqrt{d\delta}\sqrt{\log \frac{T}{\delta}}} +  \frac{36 d \delta \beta^2 \log \frac{T}{\delta}}{\epsilon }
            \end{align*}

            Let us bound the first term. We apply Lemma \ref{l:xlogxbound} (with $x = \frac{T}{\delta}$ and $c = \frac{\epsilon^4}{2^{14} d\beta^4}$), which shows that
            \begin{align*}
            \frac{T}{\delta} \geq \frac{2^{14} d\beta^4}{\epsilon^4} \log\lrp{\frac{2^{14} d\beta^4}{\epsilon^4L^2}}
            \quad \Rightarrow \quad
            \frac{T}{\delta} \frac{1}{\log \frac{T}{\delta}} \geq \frac{2^{14} d\beta^4}{\epsilon^4L^2}
            \quad \Leftrightarrow \quad
            \frac{128}{\epsilon} \sqrt{T}\beta^2 \cdot \lrp{ \sqrt{d\delta}{\log \frac{T}{\delta}}} \leq TL\epsilon
            \end{align*}

            For the second term, we can again apply Lemma \ref{l:xlogxbound} ($x = \frac{T}{\delta}$ and $c = \frac{\epsilon^2L}{36 d\beta^2}$), which shows that
            \begin{align*}
            \frac{T}{\delta} \geq \frac{36 d\beta^2}{\epsilon^2L}  \log \lrp{ \frac{36 d\beta^2}{\epsilon^2L} }
            \quad \Rightarrow \quad
            \frac{T}{\delta} \frac{1}{\log \frac{T}{\delta}} \geq \frac{36 d\beta^2}{\epsilon^2L}
            \quad \Rightarrow \quad
            \frac{36 d \delta \beta^2 \log \frac{T}{\delta}}{\epsilon } \leq TL\epsilon
            \end{align*}
            The above imply that
            \begin{align*}
            \E{f(x_T - v_T) - f(x_T - y_T)}  \leq 2TL\epsilon
            \end{align*}

        \end{proof}

    \begin{lemma}
        \label{l:non_gaussian_contraction_anisotropic}
        Let $f$ be as defined in Lemma \ref{l:fproperties} with parameter $\epsilon$ satisfying $\epsilon\leq \frac{\Rq}{\aq\Rq^2 + 1}$. Let $x_t$, $v_t$ and $w_t$ be as defined in \eqref{e:coupled_4_processes_x}, \eqref{e:coupled_4_processes_v}, \eqref{e:coupled_4_processes_w}. Let $n$ be an integer and $\delta$ be a step size, and let $T:= n\delta$.

        If we assume that $\E{\lrn{x_0}_2^2}$, $\E{\lrn{v_0}_2^2}$, and $\E{\lrn{w_0}_2^2}$ are each upper bounded by $8 \lrp{R^2 + \beta^2/m}$ and that $T \leq \min\lrbb{ \frac{1}{16L},  \frac{\epsilon}{32\sqrt{L} \beta}, \frac{\epsilon^2}{128\beta^2}, \frac{\epsilon^4 \LN^2}{2^{14}\beta^2 \cm^2}}$, then
        \begin{align*}
        & \E{f(x_T - w_T)} - \E{f(x_T - v_T)} \leq 4T(L+\LN^2)\epsilon
        \end{align*}
    \end{lemma}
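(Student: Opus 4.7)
The plan is to mirror exactly the structure of the proof of Lemma \ref{l:non_gaussian_contraction_stationary}, Taylor-expanding $f(x_T - w_T)$ around $x_T - v_T$ and bounding the three resulting terms separately. Writing $x_T - w_T = (x_T - v_T) + (v_T - w_T)$, I decompose $\E{f(x_T-w_T)} - \E{f(x_T-v_T)} = \E{\circled{1}} + \E{\circled{2}} + \E{\circled{3}}$, where $\circled{1} := \lin{\nabla f(x_0-v_0), v_T-w_T}$, $\circled{2} := \lin{\nabla f(x_T-v_T)-\nabla f(x_0-v_0), v_T-w_T}$, and $\circled{3}$ is the second-order remainder, which is controlled by $\lrn{\nabla^2 f}_2 \leq 2/\epsilon$ from Lemma \ref{l:fproperties}.

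From \eqref{e:coupled_4_processes_v} and \eqref{e:coupled_4_processes_w}, $v_T - w_T = -\delta\sum_{i=0}^{n-1}\lrp{\nabla U(w_0) - \nabla U(w_{i\delta})} + \sqrt{\delta}\sum_{i=0}^{n-1}\lrp{\xi(w_0,\eta_i) - \xi(w_{i\delta},\eta_i)}$. Two structural facts drive the analysis: conditional on $\mathcal{F}_{i\delta}$, both $\xi(w_0,\eta_i)$ and $\xi(w_{i\delta},\eta_i)$ are mean-zero by Assumption \ref{ass:xi_properties}.1, so the stochastic sum is a martingale; and by Assumption \ref{ass:xi_properties}.3, $\lrn{\xi(w_0,\eta_i)-\xi(w_{i\delta},\eta_i)}_2 \leq L_\xi \lrn{w_{i\delta}-w_0}_2$, with $L_\xi^2 = \LN^2\cm^2/(16\beta^2)$ by the definition \eqref{d:constants} of $\LN$. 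This identity is what allows the $L_\xi$-Lipschitz noise to be converted into the $\LN$ language used in the hypothesis on $T$.

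For $\E{\circled{1}}$, the martingale property kills the noise contribution to $v_T-w_T$ since $\nabla f(x_0-v_0)$ is $\mathcal{F}_0$-measurable, leaving a drift contribution bounded by $\delta L \sum_i \E{\lrn{w_{i\delta}-w_0}_2}$ using $\lrn{\nabla f}_2 \leq 1$ and $L$-smoothness of $U$. For $\E{\circled{2}}$, Cauchy--Schwarz combined with $\lrn{\nabla^2 f}_2 \leq 2/\epsilon$ produces a bound of order $\frac{1}{\epsilon}\sqrt{\E{\lrn{x_T-x_0}_2^2} + \E{\lrn{v_T-v_0}_2^2}}\cdot\sqrt{\E{\lrn{v_T-w_T}_2^2}}$, whose first factor is $O(\sqrt{T}\beta)$ by Lemma \ref{l:divergence_xt} and a directly analogous bound for the discrete process $v_t$. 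For $\E{\circled{3}}$, the Hessian bound gives $\E{\circled{3}} \leq \frac{1}{\epsilon}\E{\lrn{v_T-w_T}_2^2}$. The central remaining task is therefore to control $\E{\lrn{v_T-w_T}_2^2}$, which by Young's inequality and martingale orthogonality obeys $\E{\lrn{v_T-w_T}_2^2} \leq 2(T\delta L^2 + \delta L_\xi^2) \sum_i \E{\lrn{w_{i\delta}-w_0}_2^2}$. I would then show $\E{\lrn{w_{i\delta}-w_0}_2^2} = O(T\beta^2)$ by splitting $w_{i\delta}-w_0$ into drift and martingale parts, using the energy hypothesis $\E{\lrn{w_0}_2^2}\leq 8(R^2+\beta^2/m)$ (propagated over the epoch by an argument analogous to Lemma \ref{l:energy_y}) together with the smallness of $T$; substituting $L_\xi^2 = \LN^2\cm^2/(16\beta^2)$ then yields $\E{\lrn{v_T-w_T}_2^2} = O(T^3 L^2\beta^2 + T^2\LN^2\cm^2)$.

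Assembling the three estimates, each must contribute at most a piece of the target $4T(L+\LN^2)\epsilon$, and this is exactly what generates the four components of the hypothesis on $T$: $T\leq 1/(16L)$ licenses the energy bound used above; $T\leq \epsilon/(32\sqrt{L}\beta)$ turns $\E{\circled{1}}$ into at most $TL\epsilon$; $T\leq \epsilon^2/(128\beta^2)$ handles both $\E{\circled{2}}$ and the $T^3L^2\beta^2$ piece of $\E{\circled{3}}$ at the level of $TL\epsilon$; and $T\leq \epsilon^4\LN^2/(2^{14}\beta^2\cm^2)$ converts the $T^2\LN^2\cm^2$ piece of $\E{\circled{3}}$ into $T\LN^2\epsilon$. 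The main obstacle is not conceptual --- the strategy is essentially dictated by Lemma \ref{l:non_gaussian_contraction_stationary} --- but rather the careful algebraic bookkeeping to ensure each constant propagates correctly through the Young's-inequality applications and through the conversion $L_\xi \leftrightarrow \LN$, so that the final sum is at most $4T(L+\LN^2)\epsilon$ with no residual slack.
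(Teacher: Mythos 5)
Your proposal matches the paper's proof closely: the same Taylor expansion of $f(x_T-w_T)$ around $x_T-v_T$, the same use of $\mathcal{F}_0$-measurability of $\nabla f(x_0-v_0)$ plus the conditional zero mean of $\xi$ to kill the stochastic part of $\circled{1}$, the same Cauchy--Schwarz/Hessian-bound treatment of $\circled{2}$ and $\circled{3}$, and the same allocation of the four constraints on $T$. The one small deviation is that you bound $\E{\lrn{v_T-w_T}_2^2}$ by a direct Young's-inequality-plus-martingale-orthogonality estimate, whereas the paper isolates this in Lemma~\ref{l:vt-wt} via a one-step recursion, but both yield the same $O\lrp{(T^2L^2 + TL_\xi^2)T\beta^2}$ bound and the rest of the argument is unchanged.
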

    \begin{remark}
        For sufficiently small $\epsilon$, our assumption on $T$ boils down to $T = o(\epsilon^4)$
    \end{remark}

    \begin{proof}
        First, we can verify using Taylor's theorem that for any $x,y$,
        \begin{align*}
        f(y) =& f(x) + \lin{\nabla f(x), y-x} + \int_0^1\int_0^s \lin{\nabla^2 f(x + s(y-x)), (y-x) (y-x)^T} ds dt
        \numberthis \label{e:taylor1}\\
        \nabla f(y) =& \nabla f(x) + \lin{\nabla^2 f(x), y-x} + \int_0^1\int_0^s \lin{\nabla^3 f(x + s(y-x)), (y-x) (y-x)^T} ds dt
        \numberthis \label{e:taylor2}
        \end{align*}

        Thus
        \begin{align*}
        & \E{f(x_T - w_T)}\\
        =& \E{f(x_T - v_T)+ \lin{\nabla f(x_T - v_T), v_T - w_T} + \int_0^1\int_0^s \lin{\nabla^2 f(x_T - v_T + s(v_T-w_T)), (v_T - w_T) (v_T - w_T)^T} ds dt}\\
        =& \E{f(x_T - v_T)+ \underbrace{\lin{\nabla f(x_0 - v_0), v_T - w_T}}_{\circled{1}} + \underbrace{\lin{\nabla f(x_T - v_T) - \nabla f(x_0 - v_0), v_T - w_T}}_{\circled{2}}}\\
        &\quad + \E{\underbrace{\int_0^1\int_0^s \lin{\nabla^2 f(x_T - v_T + s(v_T-w_T)), (v_T - w_T) (v_T - w_T)^T} ds dt}_{\circled{3}}}
        \end{align*}
        Recall from \eqref{e:coupled_4_processes_v} and \eqref{e:coupled_4_processes_w} that
        \begin{align*}
        v_{n\delta} =& w_0 + \sum_{i=0}^{n-1} \delta \nabla U(w_0) + {\sqrt{\delta}} \sum_{i=0}^{n-1} \xi(w_0,\eta_i)\\
        w_{n\delta} =& w_0 + \sum_{i=0}^{n-1} \delta \nabla U(w_{i\delta}) + {\sqrt{\delta}} \sum_{i=0}^{n-1} \xi(w_{i\delta},\eta_i)
        \end{align*}
        Note that conditioned on the randomness up to time $0$, $\E{\sum_{i=0}^{n-1} \xi(w_0,\eta_i)} = \E{\sum_{i=0}^{n-1} \xi(w_{i\delta},\eta_i)} = 0$, so that
        \begin{align*}
        & \E{\circled{1}}\\
        =& \E{\lin{\nabla f(x_0 - v_0), v_T - w_T}}\\
        =& \delta \E{\lin{\nabla f(x_0 - v_0), \sum_{i=0}^{n-1} \nabla U(w_{0}) - \nabla U(w_{i\delta})}} + \sqrt{\delta} \E{\lin{\nabla f(x_0 - v_0), \sum_{i=0}^{n-1} \xi(w_{0},\eta_i) - \sum_{i=0}^{n-1} \xi(w_{i\delta},\eta_i)}}\\
        =& \delta \E{\lin{\nabla f(x_0 - v_0), \sum_{i=0}^{n-1} \nabla U(w_{0}) - \nabla U(w_{i\delta})}}\\
        \leq& \delta \sum_{i=0}^{n-1} L \E{\lrn{w_0 - w_{i\delta}}_2}\\
        \leq& T L \sqrt{32 T\beta^2} \leq 8 T^{3/2} L \beta
        \end{align*}

        where the third equality is becayse $\xi(\cdot, \eta_i)$ has $0$ mean conditioned on the randomness at time $0$, and the second inequality is by Lemma \ref{l:divergence_wt}.

        Next,
        \begin{align*}
        & \E{\circled{2}}\\
        =& \E{\lin{\nabla f(x_T - v_T) - \nabla f(x_0 - v_0), v_T - w_T}}\\
        \leq& \E{\lrn{\nabla f(x_T - v_T) - \nabla f(x_0 - v_0)}_2 \lrn{v_T - w_T}}\\
        \leq& \frac{4}{\epsilon} \sqrt{\E{\lrn{x_T - x_0}_2^2 + \lrn{v_T - v_0}_2^2}} \cdot \sqrt{\E{\lrn{v_T - w_T}_2^2}}\\
        \leq& \frac{4}{\epsilon} \sqrt{16T\beta^2 + 2T\beta^2} \cdot \sqrt{32 \lrp{T^2 L^2 + TL_\xi^2} T\beta^2}\\
        \leq& \frac{128}{\epsilon} T \beta^2 \lrp{\sqrt{T} L_\xi + TL}
        \end{align*}
        where the second inequality is because $\lrn{\nabla^2 f}_2 \leq \frac{2}{\epsilon}$ from item 2(c) of Lemma \ref{l:fproperties} and by Young's inequality. The third inequality is by Lemma \ref{l:divergence_xt}, Lemma \ref{l:divergence_vt} and Lemma \ref{l:vt-wt}.

        Finally,
        \begin{align*}
        & \E{\circled{3}}\\
        =& \E{\int_0^1\int_0^s \lin{\nabla^2 f(x_T - v_T + s(v_T-w_T)), (v_T - w_T) (v_T - w_T)^T} ds dt}\\
        \leq& \int_0^1 \int_0^s \E{\lrn{\nabla^2 f(x_T - v_T + s(v_T-w_T))}_2\lrn{v_T - w_T}_2^2} ds \\
        \leq& \frac{1}{\epsilon} \E{\lrn{v_T - w_T}_2^2}\\
        \leq& \frac{32}{\epsilon} \lrp{T^2 L^2 + TL_\xi^2} T\beta^2
        \end{align*}
        wehere the second inequality is because $\lrn{\nabla^2 f}_2 \leq \frac{2}{\epsilon}$ from item 2(c) of Lemma \ref{l:fproperties} and by Young's inequality. The third inequality is by Lemma \ref{l:vt-wt}.

        Summing the above,
        \begin{align*}
        &\E{f(x_T - w_T) - f(x_T - v_T)} \\
        \leq& 8T^{3/2} L\beta + \frac{128}{\epsilon} T \beta^2 \lrp{\sqrt{T} L_\xi + TL} + \frac{32}{\epsilon} \lrp{T^2 L^2 + TL_\xi^2} T\beta^2\\
        \leq& T^{3/2} \epsilon
        \end{align*}
        where the last inequality is by our assumption on $T$, specifically,
        \begin{align*}
        &T \leq \frac{\epsilon^2}{128\beta^2}
        \Rightarrow T^{3/2} L\beta \leq TL\epsilon\\
        &T \leq \frac{\epsilon^2}{128\beta^2} \Rightarrow \frac{128}{\epsilon} T^2L \beta^2 \leq TL \epsilon\\
        &T \leq \frac{\epsilon}{32\sqrt{L} \beta} \Rightarrow \frac{32}{\epsilon}(T^3 L^2 \beta^2) \leq TL\epsilon\\
        & T \leq \frac{\epsilon^4 \LN^2}{2^{14}\beta^2 \cm^2} \Rightarrow \frac{128}{\epsilon} T^{3/2}\beta^2 L_\xi \leq T\LN^2 \epsilon\\
        & T \leq \frac{\epsilon^2}{128\beta^2} \Rightarrow T \leq \frac{\epsilon^2}{128\cm^2} \Rightarrow \frac{32}{\epsilon} T^2L_\xi^2\beta^2 \leq T\LN^2\epsilon
        \end{align*}
        where the last line uses the fact that $\beta \geq \cm^2$.

    \end{proof}
    
    \begin{corollary}
            \label{c:main_nongaussian:1}
            Let $f$ be as defined in Lemma \ref{l:fproperties} with parameter $\epsilon$ satisfying $\epsilon \leq \frac{\Rq}{\aq\Rq^2 + 1}$. \\
            Let $T= \min\lrbb{\frac{1}{16L}, \frac{\beta^2}{8L^2\lrp{R^2 + \beta^2/m}}, \frac{\epsilon}{32\sqrt{L} \beta}, \frac{\epsilon^2}{128\beta^2}, \frac{\epsilon^4 \LN^2}{2^{14}\beta^2 \cm^2}}$ and let $\delta \leq \min\lrbb{\frac{T\epsilon^2L}{36 d\beta^2\log \lrp{ \frac{36 d\beta^2}{\epsilon^2L}}},  \frac{T\epsilon^4L^2} {2^{14} d\beta^4\log\lrp{\frac{2^{14} d\beta^4}{\epsilon^4L^2}}}}$, assume additionally that $n=T/\delta$ is an integer.\\
            Let $\bx_t$ and $\bw_t$ have dynamics as defined in \eqref{e:exact-sde} and \eqref{e:discrete-langevin} respectively, and suppose that the initial conditions satisfy $\E{\lrn{\bx_0}_2^2}\leq R^2 + \beta^2/m$ and $\E{\lrn{\bw_0}_2^2}\leq R^2 + \beta^2/m$. Then there exists a coupling between $\bx_t$ and $\bw_t$ such that
            \begin{align*}
            \E{f(\bx_{i\delta} - \bw_{i\delta})} \leq e^{-\lambda i\delta}  \E{f(\bx_{0} - \bw_{0})} + \frac{6}{\lambda}  \lrp{L + \LN^2} \epsilon
            \end{align*}
        \end{corollary}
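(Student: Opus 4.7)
The strategy is to iterate an epoch-level contraction, where each epoch has length $T = n\delta$ and the four-process coupling of Appendix \ref{ss:4_coupling} plays the role for Corollary \ref{c:main_nongaussian:1} that the single-step two-process coupling of Appendix \ref{ss:coupling_construction} played for Corollary \ref{c:main_gaussian:1}. Concretely, I would partition time into consecutive epochs $[kT,(k+1)T]$; at the start of the $k$-th epoch I would instantiate the coupling of Appendix \ref{ss:4_coupling} with $x_0 = \bx_{kT}$ and $y_0 = v_0 = w_0 = \bw_{kT}$, using fresh independent Brownian increments $V_t, W_t$ and fresh minibatch draws $\eta_i$, and then define $\bx_{(k+1)T} := x_T$ and $\bw_{(k+1)T} := w_T$. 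By the marginal identification arguments already in Appendix \ref{ss:4_coupling}, this yields a valid coupling between \eqref{e:exact-sde} and \eqref{e:discrete-clt}.

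Before chaining epochs I would establish the energy hypotheses $\E{\lrn{x_0}_2^2}, \E{\lrn{w_0}_2^2} \leq 8(R^2 + \beta^2/m)$ at every epoch boundary. For $\bx_t$ this is Lemma \ref{l:energy_x}; for $\bw_t$ the analogue follows from Assumption \ref{ass:xi_properties}.2 (uniform boundedness $\lrn{\xi}_2 \le \beta$) combined with the dissipativity from Assumption \ref{ass:U_properties}.3 via a standard Lyapunov/one-step expansion argument, using the smallness of $T$ assumed in the corollary to keep the drift contribution subdominant.

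With the energy bounds in hand, a single-epoch contraction follows from the triangle decomposition
\[
\E{f(x_T - w_T)} \leq \E{f(x_T - y_T)} + \bigl(\E{f(x_T - v_T)} - \E{f(x_T - y_T)}\bigr) + \bigl(\E{f(x_T - w_T)} - \E{f(x_T - v_T)}\bigr),
\]
together with three available bounds. The first term is controlled by Lemma \ref{l:gaussian_contraction}, since the marginal law of $(x_t, y_t)$ under \eqref{e:coupled_4_processes_x}--\eqref{e:coupled_4_processes_y} coincides with the two-process coupling \eqref{e:coupled_2_processes} (with $y_0 = \bw_{kT}$), giving $e^{-\lambda T} \E{f(\bx_{kT} - \bw_{kT})} + 3T(L+\LN^2)\epsilon$. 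The second term is at most $4TL\epsilon$ by Lemma \ref{l:non_gaussian_contraction_stationary}, and the third is at most $4T(L+\LN^2)\epsilon$ by Lemma \ref{l:non_gaussian_contraction_anisotropic}. Combining,
\[
\E{f(\bx_{(k+1)T} - \bw_{(k+1)T})} \leq e^{-\lambda T} \E{f(\bx_{kT} - \bw_{kT})} + c\, T(L+\LN^2)\epsilon
\]
for an absolute constant $c$. Iterating over $K$ epochs and summing the geometric series $\sum_{j=0}^{K-1} e^{-\lambda j T}$, while using $1 - e^{-\lambda T} \geq \lambda T/2$ (which holds by the smallness of $T$ relative to $1/\lambda$), collapses the discretization contributions into $\tfrac{6}{\lambda}(L+\LN^2)\epsilon$ and yields the claimed bound at epoch boundaries $i\delta = KT$. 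For $i$ not a multiple of $n$ one either restricts to epoch endpoints (which is what Theorem \ref{t:main_nongaussian} uses downstream) or repeats the argument with a truncated partial epoch. The main obstacle, to my mind, is the $\bw_{kT}$ second-moment control—unlike in Corollary \ref{c:main_gaussian:1}, the per-step noise is neither Gaussian nor a martingale increment of a continuous filtration, so the Itô-based energy estimate has to be replaced by a discrete expansion—together with the bookkeeping needed to collapse the various absolute constants into the clean factor $6/\lambda$.
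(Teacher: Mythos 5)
Your proof matches the paper's own proof of Corollary \ref{c:main_nongaussian:1} in both structure and key ingredients: per-epoch instantiation of the four-process coupling from Appendix \ref{ss:4_coupling}, the triangle decomposition into the two-process Gaussian term, the CLT-transfer term $f(x_T-v_T)-f(x_T-y_T)$, and the anisotropic-drift term $f(x_T-w_T)-f(x_T-v_T)$, each controlled by Lemmas \ref{l:gaussian_contraction}, \ref{l:non_gaussian_contraction_stationary}, and \ref{l:non_gaussian_contraction_anisotropic} respectively, followed by geometric-series summation over epochs. The $\bw_t$ second-moment control you flag as the main obstacle is exactly the content of the paper's Lemma \ref{l:energy_w}, which carries out precisely the discrete Taylor/Lyapunov expansion you sketch, so this is not a gap.
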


        \begin{proof}
            From Lemma \ref{l:energy_x} and \ref{l:energy_w}, our initial conditions imply that for all $t$, $\E{\|\bx_t\|_2^2} \leq 6\lrp{R^2 + \frac{\beta^2}{m}}$ and $\E{\|\bw_{k\delta}\|_2^2} \leq 8 \lrp{R^2 + \frac{\beta^2}{m}}$.

            Consider an arbitrary $k$, and for $t\in[0,T)$, define
            \begin{align*}
            x_t := \bx_{kT+t} \quad \text{and} \quad
            w_t := \bw_{kT+t} \numberthis \label{e:t:kasjnd}
            \end{align*}
            Notice that as described above, $x_t$ and $w_t$ have dynamics described in \eqref{e:exact-sde} and \eqref{e:discrete-clt}. Let $x_t,w_t$ have joint distribution as described in \eqref{e:coupled_4_processes_x} and \eqref{e:coupled_4_processes_w}, and let $(y_t,v_t)$ be the processes defined in \eqref{e:coupled_4_processes_y} and \eqref{e:coupled_4_processes_v}. Notice that the joint distribution between $x_t$ and $w_t$ equivalently describes a coupling between $\bx_t$ and $\bw_t$ over $t\in[kT, (k+1)T)$.

            First, notice that the processes $\eqref{e:coupled_4_processes_x}$ and $\eqref{e:coupled_4_processes_y}$ have the same distribution as \eqref{e:coupled_2_processes}. We can thus apply Lemma \ref{l:gaussian_contraction}:
            \begin{align*}
            \E{f(x_{T} - y_{T})}
            \leq& e^{-\lambda T} \E{f(x_0 - y_0)} + 6T (L+\LN^2) \epsilon
            \end{align*}
            
            By Lemma \ref{l:non_gaussian_contraction_stationary},
            \begin{align*}
            \E{f(x_T - v_T)} - \E{f(x_T - y_T)}
            \leq 4TL\epsilon
            \end{align*}
            By Lemma \ref{l:non_gaussian_contraction_anisotropic},
            \begin{align*}
            \E{f(x_T - w_T)} - \E{f(x_T - v_T)}
            \leq 4T(L+\LN^2)\epsilon
            \end{align*}

            Summing the above three equations,
            \begin{align*}
            \E{f(x_T - w_T)}
            \leq e^{-\lambda \delta} \E{f(x_0 - w_0)} + 14T (L+\LN^2)
            \end{align*}
            Where we use the fact that $y_0 = w_0$  by construction in \eqref{e:coupled_4_processes_y}.
            
            Recalling \eqref{e:t:kasjnd}, this is equivalent to
            \begin{align*}
            \E{f(\bx_{(k+1)T} - \bw_{(k+1)T})}
            \leq e^{-\lambda \delta} \E{f(\bx_{kT} - \bw_{kT})} + 14T (L+\LN^2)
            \end{align*}

            Applying the above recursively gives, for any $i$
            \begin{align*}
            \E{f(\bx_{iT} - \bw_{iT})} \leq e^{-\lambda iT}  \E{f(\bx_{0} - \bw_{0})} + \frac{14}{\lambda}  \lrp{L + \LN^2} \epsilon
            \end{align*}
        \end{proof}

    \end{subsection}

        \begin{subsection} {Proof of Theorem \ref{t:main_nongaussian}}
        \label{ss:proof:t:main_nongaussian}
    For ease of reference, we re-state Theorem \ref{t:main_nongaussian} below as Theorem \ref{t:main_nongaussian:restated} below. We make a minor notational change: using the letters $\bx_t$ and $\by_t$ in Theorem \ref{t:main_nongaussian:restated}, instead of the letters $x_t$ and $y_t$ in Theorem \ref{t:main_nongaussian}. This is to avoid some notation conflicts in the proof.
    
    \begin{theorem} [Equivalent to Theorem \ref{t:main_nongaussian}]
    \label{t:main_nongaussian:restated}
            Let $\bx_t$ and $w_t$ have dynamics as defined in \eqref{e:exact-sde} and \eqref{e:discrete-clt} respectively, and suppose that the initial conditions satisfy $\E{\lrn{\bx_0}_2^2}\leq R^2 + \beta^2/m$ and $\E{\lrn{\bw_0}_2^2}\leq R^2 + \beta^2/m$. Let $\hat{\epsilon}$ be a target accuracy satisfying $\hat{\epsilon} \leq  \lrp{\frac{16\lrp{L + \LN^2}}{\lambda}} \cdot \exp\lrp{7\aq\Rq/3} \cdot \frac{\Rq}{\aq\Rq^2 + 1}$. Let $\epsilon:= \frac{\lambda}{16 (L+\LN^2)} \exp\lrp{-\frac{7\aq\Rq^2}{3}} \hat{\epsilon}$. Let $T:= \min\lrbb{\frac{1}{16L}, \frac{\beta^2}{8L^2\lrp{R^2 + \beta^2/m}}, \frac{\epsilon}{32\sqrt{L} \beta}, \frac{\epsilon^2}{128\beta^2}, \frac{\epsilon^4 \LN^2}{2^{14}\beta^2 \cm^2}}$ and let $\delta$ be a step size satisfying
            \begin{align*}
            \delta \leq \min\lrbb{\frac{T\epsilon^2L}{36 d\beta^2\log \lrp{ \frac{36 d\beta^2}{\epsilon^2L}}},  \frac{T\epsilon^4L^2} {2^{14} d\beta^4\log\lrp{\frac{2^{14} d\beta^4}{\epsilon^4L^2}}}}.
            \end{align*}

            If we assume that $\bx_0 = \bw_0$, then there exists a coupling between $\bx_t$ and $\bw_t$ such that for any $k$,
            \begin{align*}
            \E{\lrn{\bx_{k\delta} - \bw_{k\delta}}_2} \leq \hat{\epsilon}.
            \end{align*}

            Alternatively, if we assume that
            $k \geq \frac{3\aq\Rq^2}{ \delta} \cdot  \log \frac{R^2 + \beta^2/m}{\hat{\epsilon}}$, then
            \begin{align*}
            W_1\lrp{p^*, p^w_{k\delta}} \leq 2\hat{\epsilon},
            \end{align*}
            where $p^w_t := \Law(\bw_t)$.
    \end{theorem}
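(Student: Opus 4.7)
The plan is to mirror, almost line for line, the proof of Theorem \ref{t:main_gaussian:restated}, replacing the single invocation of Corollary \ref{c:main_gaussian:1} by an analogous invocation of Corollary \ref{c:main_nongaussian:1}. All of the hard work (coupling construction, one-epoch contraction through four intermediate processes, and careful discretization bookkeeping under the non-Gaussian noise) has already been pushed into that corollary, so what remains is essentially a translation between the Lyapunov function $f$ and the Euclidean norm.

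First, I would set $\epsilon := \frac{\lambda}{16(L+\LN^2)} \exp\lrp{-\frac{7\aq\Rq^2}{3}} \hat{\epsilon}$, exactly as in the theorem statement, and verify that under the hypotheses on $\hat{\epsilon}$, this $\epsilon$ satisfies $\epsilon \leq \frac{\Rq}{\aq\Rq^2 + 1}$, so that Lemma \ref{l:fproperties} applies. I would also check that the stated bounds on $T$ and $\delta$ are exactly those required by Corollary \ref{c:main_nongaussian:1}. Then applying that corollary yields a coupling between $\bx_t$ and $\bw_t$ such that, for every $k$,
\begin{align*}
\E{f(\bx_{k\delta} - \bw_{k\delta})} \leq e^{-\lambda k\delta} \E{f(\bx_0 - \bw_0)} + \frac{14}{\lambda}(L+\LN^2)\epsilon.
\end{align*}

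Next, I would invoke item 4 of Lemma \ref{l:fproperties} to pass from the Lyapunov function $f$ back to the Euclidean norm, so that
\begin{align*}
\E{\lrn{\bx_{k\delta} - \bw_{k\delta}}_2} \leq 2\exp\lrp{\tfrac{7\aq\Rq^2}{3}}\lrp{e^{-\lambda k\delta}\E{f(\bx_0-\bw_0)} + \tfrac{14}{\lambda}(L+\LN^2)\epsilon} + 2\exp\lrp{\tfrac{7\aq\Rq^2}{3}}\epsilon.
\end{align*}
By the definition of $\epsilon$, the two $\epsilon$-terms together are bounded by $\hat{\epsilon}$. For the first claim, the assumption $\bx_0 = \bw_0$ kills the initial term outright, delivering $\E{\lrn{\bx_{k\delta}-\bw_{k\delta}}_2} \leq \hat{\epsilon}$.

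For the second claim, I would take $\bx_0 \sim p^*$, so that $\bx_{k\delta}\sim p^*$ for every $k$ by invariance; the energy bound Lemma \ref{l:energy_x} gives $\E{\lrn{\bx_0}_2^2} \leq R^2 + \beta^2/m$, so that the hypothesis of Corollary \ref{c:main_nongaussian:1} is satisfied. Bounding $\E{f(\bx_0-\bw_0)}$ by $\E{\lrn{\bx_0}_2 + \lrn{\bw_0}_2} \leq 2\sqrt{R^2+\beta^2/m}$ (using item 4 of Lemma \ref{l:fproperties} in the reverse direction), the lower bound $k \geq \frac{3\aq\Rq^2}{\delta}\log\frac{R^2+\beta^2/m}{\hat{\epsilon}}$ guarantees $e^{-\lambda k\delta}\cdot(R^2+\beta^2/m) \leq \hat{\epsilon}\cdot e^{-7\aq\Rq^2/3}$ (since $\lambda$ already carries a factor $e^{-7\aq\Rq^2/3}$ by \eqref{d:constants}), and hence the total is bounded by $2\hat{\epsilon}$ after dualizing to the $W_1$ distance via the Kantorovich--Rubinstein characterization. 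The only obstacle worth noting is arithmetic: ensuring that the constants in the definitions of $\lambda$ and $\epsilon$ align so that the final exponential decay genuinely swallows the $e^{7\aq\Rq^2/3}$ factor in the chosen number of steps; this is identical to the Gaussian case and should go through verbatim.
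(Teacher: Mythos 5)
Your proposal is correct and mirrors the paper's proof essentially verbatim: apply Corollary \ref{c:main_nongaussian:1} to get the exponentially contracting bound on $\E{f(\bx_{k\delta}-\bw_{k\delta})}$, convert to Euclidean norm via item 4 of Lemma \ref{l:fproperties}, and handle the two claims by taking $\bx_0=\bw_0$ or $\bx_0\sim p^*$ (with Lemma \ref{l:energy_x} for the energy check). The only differences are cosmetic: you carry the constant $14/\lambda$ from the corollary's derivation whereas the paper's theorem proof uses the $6/\lambda$ quoted in the corollary statement (an internal inconsistency in the paper, not in your argument), and your bound $\E{f(\bx_0-\bw_0)}\leq \E{\lrn{\bx_0}_2+\lrn{\bw_0}_2}$ is actually cleaner than the paper's jump to squared norms.
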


       \begin{proof}[Proof of Theorem \ref{t:main_nongaussian:restated}]
            Let $f$ be defined as in Lemma \ref{l:fproperties} with parameter $\epsilon$.
            \begin{align*}
            & \E{\lrn{\bx_{i\delta} - \bw_{i\delta}}_2}\\
            \leq& 2\exp\lrp{\frac{7\aq\Rq^2}{3}}\E{f(\bx_{i\delta} - \bw_{i\delta})} + 2\exp\lrp{\frac{7\aq\Rq^2}{3}}\epsilon\\
            \leq& 2\exp\lrp{\frac{7\aq\Rq^2}{3}}\lrp{e^{-\lambda i\delta}  \E{f(\bx_{0} - \bw_{0})} + \frac{6}{\lambda}  \lrp{L + \LN^2} \epsilon} + 2\exp\lrp{\frac{7\aq\Rq^2}{3}}\epsilon\\
            \leq& 2\exp\lrp{\frac{7\aq\Rq^2}{3}}e^{-\lambda i\delta}  \E{f(\bx_{0} - \bw_{0})} + \frac{16\lrp{L+\LN^2}}{\lambda}\exp\lrp{\frac{7\aq\Rq^2}{3}} \cdot \epsilon
            \numberthis \label{e:t:asdkjas:1}\\
            =& 2\exp\lrp{\frac{7\aq\Rq^2}{3}}e^{-\lambda i\delta}  \E{f(\bx_{0} - \bw_{0})} + \hat{\epsilon}
            \end{align*}
            where the first inequality is by item 4 of Lemma \ref{l:fproperties}, the second inequality is by Corollary \ref{c:main_nongaussian:1} (notice that $\delta$ satisfies the requirement on $T$ in Theorem \ref{t:main_gaussian}, for the given $\epsilon$). The third inequality uses the fact that $1\leq L/m \leq \frac{\lrp{L+\LN^2}}{\lambda}$.

            The first claim follows from substituting $\bx_0 = \bw_0$ into \eqref{e:t:asdkjas:1}, so that the first term is $0$, and using the definition of $\epsilon$, so that the second term is $0$.

            For the second claim, let $\bx_0 \sim p^*$, the invariant distribution of \eqref{e:exact-sde}. From Lemma \ref{l:energy_x}, we know that $\bx_0$ satisfies the required initial conditions in this Lemma. Continuing from \eqref{e:t:asdkjas:1},
            \begin{align*}
            & \E{\lrn{\bx_{i\delta} - \bw_{i\delta}}_2}\\
            \leq& 2\exp\lrp{\frac{7\aq\Rq^2}{3}}\lrp{2e^{-\lambda i\delta}  \E{\lrn{\bx_0}_2^2 + \lrn{\bw_0}_2^2} + \frac{6}{\lambda}  \lrp{L + \LN^2} \epsilon} + \epsilon\\
            \leq& 2\exp\lrp{\frac{7\aq\Rq^2}{3}}\lrp{2e^{-\lambda i\delta}  \lrp{R^2 + \beta^2/m}}  + \frac{16}{\lambda}  \exp\lrp{2\frac{7\aq\Rq^2}{3}}\lrp{L + \LN^2} \epsilon\\
            =& 4\exp\lrp{\frac{7\aq\Rq^2}{3}}\lrp{e^{-\lambda i\delta}  \lrp{R^2 + \beta^2/m}} + \hat{\epsilon}
            \end{align*}
            By our assumption that $i\geq \frac{1}{ \delta} \cdot 3\aq\Rq^2 \log \frac{R^2 + \beta^2/m}{\hat{\epsilon}}$, the first term is also bounded by $\hat{\epsilon}$, and this proves our second claim.
        \end{proof}

    \end{subsection}

\end{section}

\begin{section}{Coupling Properties}
    \label{s:coupling_properties}
    \begin{lemma}
        \label{l:marginal_of_coupling}
        Consider the coupled $(x_t,y_t)$ in \eqref{e:coupled_2_processes}. Let $p_t$ denote the distribution of $x_t$, and $q_t$ denote the distribution of $y_t$. Let $p_t'$ and $q_t'$ denote the distributions of \eqref{e:marginal_x} and \eqref{e:marginal_y}.

        If $p_0 = p_0'$ and $q_0 = q_0'$, then $p_t = p_t'$ and $q_t=q_t'$ for all $t$.

    \end{lemma}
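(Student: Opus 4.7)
The plan is to show that the pair $(x_t, y_t)$ constructed in \eqref{e:coupled_2_processes} has marginals satisfying the \emph{same} SDEs as \eqref{e:marginal_x} and \eqref{e:marginal_y} respectively, and then appeal to pathwise uniqueness of those SDEs under Assumption \ref{ass:U_properties} together with the regularity of $M$ (Lemma \ref{l:M_is_regular}) and $N$ (Lemma \ref{l:N_is_regular}). The drifts agree by inspection, so the real content is to check that the diffusion coefficients match in the sense of quadratic variation, and then to invoke a matrix version of L\'evy's characterization to re-express the coupled diffusion in terms of a single $d$-dimensional Brownian motion driving each marginal.

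First I would unpack the coupling. For $x_t$, the martingale part is $\int_0^t c_m dV_s + \int_0^t N(x_s) dW_s$, where $V$ and $W$ are independent $d$-dimensional Brownian motions. Writing the combined driving noise as $A_s := (V_s, W_s)$, a $2d$-dimensional Brownian motion, and $\Sigma(x_s) := [c_m I,\; N(x_s)] \in \Re^{d\times 2d}$, the martingale part is $\int_0^t \Sigma(x_s)\, dA_s$, whose quadratic-variation matrix equals
\[
\Sigma(x_s)\Sigma(x_s)^T = c_m^2 I + N(x_s)^2 = M(x_s)^2,
\]
by the definition of $N$ in \eqref{d:N}. A standard application of the martingale representation / L\'evy characterization (e.g.\ Karatzas--Shreve, Theorem 3.4.2) then produces a standard $d$-dimensional Brownian motion $\tilde B_s$ such that $\int_0^t \Sigma(x_s)\, dA_s = \int_0^t M(x_s)\, d\tilde B_s$ in law. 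Together with the drift term $-\int_0^t \nabla U(x_s)\, ds$, this realizes $x_t$ as a weak solution of \eqref{e:marginal_x}. Pathwise uniqueness of \eqref{e:marginal_x} (from Lipschitzness of $\nabla U$ and $M$) and the assumption $p_0 = p_0'$ then give $p_t = p_t'$ for all $t$.

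For $y_t$ the argument is identical modulo one algebraic observation. The matrix $I - 2\gamma_s \gamma_s^T$ is a Householder reflection: since $\gamma_s^T\gamma_s \in \{0,1\}$ by the definition \eqref{d:gammat}, a direct expansion gives
\[
(I - 2\gamma_s\gamma_s^T)^2 = I - 4\gamma_s\gamma_s^T + 4\gamma_s(\gamma_s^T\gamma_s)\gamma_s^T = I.
\]
Hence the quadratic variation of the diffusion part of $y_t$ is
\[
c_m^2 (I - 2\gamma_s\gamma_s^T)^2 + N(y_0)^2 = c_m^2 I + N(y_0)^2 = M(y_0)^2,
\]
so the same L\'evy-characterization step yields a Brownian motion driving $y_t$ as a weak solution of \eqref{e:marginal_y}. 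Assumption $q_0=q_0'$ and uniqueness of \eqref{e:marginal_y} (which, with $y_0$ frozen, has constant drift and constant diffusion over each interval) give $q_t = q_t'$.

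The only subtle step is the passage from ``equal drifts and equal quadratic variation'' to ``equal in law,'' which requires invoking a matrix version of L\'evy's characterization to convert the $2d$-dimensional driving noise $(V, W)$ into a single $d$-dimensional Brownian motion with diffusion coefficient $M(\cdot)$, and then using pathwise uniqueness of the resulting SDE to conclude equality of laws rather than merely equality of one-dimensional distributions. Everything else is direct verification.
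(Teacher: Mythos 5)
Your proof is correct and takes essentially the same approach as the paper. Both arguments reduce to verifying that the quadratic variation of the coupled diffusion matches $M(x_s)^2$ (for $x$) and $M(y_0)^2$ (for $y$), both hinge on the Householder identity $(I-2\gamma_s\gamma_s^T)^2 = I$ using $\gamma_s^T\gamma_s \in \{0,1\}$, and both invoke L\'evy's characterization to recast the $2d$-dimensional driving noise $(V,W)$ as a single $d$-dimensional Brownian motion with the correct diffusion coefficient. The paper realizes this by explicitly defining $A_t := \int_0^t M(x_s)^{-1}(c_m\, dV_s + N(x_s)\, dW_s)$ and verifying the L\'evy conditions, whereas you package the noise as $\Sigma(x_s) = [c_m I,\ N(x_s)]\in\Re^{d\times 2d}$ with $\Sigma\Sigma^T = M^2$ and invoke the martingale representation theorem; these are the same construction. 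You are a bit more careful than the paper in one place: you explicitly cite pathwise uniqueness (hence uniqueness in law) of \eqref{e:marginal_x} and \eqref{e:marginal_y} to conclude from ``$x_t$ is a weak solution of the same SDE with the same initial law'' that the laws agree, a step the paper takes as read.
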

    
    \begin{proof}
    Consider the coupling in \eqref{e:coupled_2_processes}, reproduced below for ease of reference:
    \begin{align*}
        x_t =& x_0 + \int_0^t -\nabla U(x_s) ds + \int_0^t \cm dV_s + \int_0^t N(x_s) dW_s\\
        y_t =& y_0 + \int_0^t -\nabla U(y_0) dt + \int_0^t \cm  \lrp{I - 2\gamma_s \gamma_s^T} dV_s + \int_0^t N(y_0) dW_s
    \end{align*}
    
    Let us define the stochastic process $A_t := \int_0^t M(x_s)^{-1} \cm dV_s + \int_0^t M(x_s)^{-1} N(x_s) dW_s$. We can verify using Levy's characterization that $A_t$ is a standard Brownian motion: first, since $V_t$ and $W_t$ are Brownian motions, and $N(x)$ is differentiable with bounded derivatives, we know that $A_t$ has continuous sample paths. We now verify that $A_t^i A_t^j - \ind{i=j} t$ is a martingale.
    
	Notice that $d A_t = \cm dV_t + M(x_s)^{-1} N(x_s) dW_s$.
	Then
	\begin{align*}
	d A_t^i A_t^j
	=& d A_t^T \lrp{e_i e_j^T} A_t\\
	=& A_t \lrp{e_i e_j^T} \lrp{\cm dV_t + M(x_s)^{-1} N(x_s) dW_s}^T + \lrp{\cm dV_t + M(x_s)^{-1} N(x_s) dW_s} \lrp{e_j e_i^T} a_t^T\\
	&\qquad + \frac{1}{2} \tr\lrp{ \lrp{e_i e_j^T + e_j e_i^T}\lrp{c_m^2 M(x_s)^{-2} + M(x_s)^{-1} N(x_s)^2 M(x_s)^{-1}}} dt
	\end{align*}
	where the second inequality is by Ito's Lemma applied to $f(A_t) = A_t^T e_j e_j^T A_t$. Taking expectations,
	\begin{align*}
	d \E{A^i_t A^j_t} 
	=& \E{\frac{1}{2} \tr\lrp{ \lrp{e_i e_j^T + e_j e_i^T}\lrp{c_m^2 M(x_s)^{-2} + M(x_s)^{-1} N(x_s) N(x_s)^T \lrp{M(x_s)^{-1}}^T}}}dt\\
	=& \E{\frac{1}{2} \tr\lrp{ \lrp{e_i e_j^T + e_j e_i^T}\lrp{M(x_s)^{-1} \lrp{c_m^2 I + N(x_s)^2} M(x_s)^{-1}}}}dt\\
	=& \E{\frac{1}{2} \tr\lrp{ \lrp{e_i e_j^T + e_j e_i^T}\lrp{M(x_s)^{-1} \lrp{M(x_s)^2} M(x_s)^{-1}}}}dt\\
	=& \E{\frac{1}{2} \tr\lrp{ \lrp{e_i e_j^T + e_j e_i^T}}}dt\\
	=& \ind{i=j} dt
	\end{align*}
	
	This verifies that $A_t^i A_t^j - \ind{i=j} t$ is a martingale, and hence by Levy's characterization, $A_t$ is a standard Brownian motion. In turn, we verify that by definition of $A_t$,
	\begin{align*}
	x_t 
	=& x_0 + \int_0^t -\nabla U(x_s) ds + \int_0^t \cm dV_s + \int_0^t N(x_s) dW_s \\
	=& x_0 + \int_0^t -\nabla U(x_s) ds + \int_0^t M(x_s) \lrp{M(x_s)^{-1}\lrp{\cm dV_s + N(x_s) dW_s}}\\
	=& x_0 + \int_0^t -\nabla U(x_s) ds + \int_0^t M(x_s) dA_s
	\end{align*}
	Since we showed that $A_t$ is a standard Brownian motion, we verify that $x_t$ as defined in \eqref{e:coupled_2_processes} has the same distribution as \eqref{e:exact-sde}.
	
	On the other hand, we can verify that $A'_t := \int_0^T (I - 2\gamma_s \gamma_s^T) V_s$ is a standard Brownian motion by the reflection principle. Thus
	\begin{align*}
	\int_0^t \cm  \lrp{I - 2\gamma_s \gamma_s^T} dV_s + \int_0^t N(y_0) dW_s \sim \N(0, \lrp{c_m^2 I + N(y_0)^2}) = \N(0, M(y_0)^2)
	\end{align*}
	where the equality is by definition of $N$ in \eqref{d:N}.
	
	It follows immediately that $y_t$ in \eqref{e:coupled_2_processes} has the same distribution as $y_t$ in \eqref{e:discrete-langevin}.
    
    \end{proof}

    \begin{subsection}{Energy Bounds}
    \label{ss:energy_bounds}

    \begin{lemma}
        \label{l:energy_x}
        Consider $x_t$ as defined in \eqref{e:exact-sde}. If $x_0$ satisfies $\E{\|x_0\|_2^2} \leq R^2 + \frac{\beta^2}{m}$, then
        Then for all $t$,
        \begin{align*}
        \E{\|x_t\|_2^2} \leq 6\lrp{R^2 + \frac{\beta^2}{m}}\\
        \end{align*}
        We can also show that
        \begin{align*}
        \Ep{p^*}{\lrn{x}_2^2} \leq 4\lrp{R^2 + \frac{\beta^2}{m}}
        \end{align*}
    \end{lemma}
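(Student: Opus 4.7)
My plan is to differentiate the energy $\E{\lrn{x_t}_2^2}$ using It\^o's formula and reduce the claim to a scalar ODE inequality. From \eqref{e:exact-sde},
\begin{align*}
d \lrn{x_t}_2^2 = -2 \lin{x_t, \nabla U(x_t)} dt + \tr(M(x_t)^2) dt + 2 \lin{x_t, M(x_t) dB_t}.
\end{align*}
After a standard localization at the stopping times $\tau_N = \inf\lrbb{s : \lrn{x_s}_2 \geq N}$ (needed because we do not yet know $\E{\lrn{x_t}_2^2}$ is finite), taking expectations kills the stochastic integral, and letting $N \to \infty$ leaves an ODE inequality for $\E{\lrn{x_t}_2^2}$.

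To control the drift, I would apply Assumption \ref{ass:U_properties}.3 with $y = 0$, using $\nabla U(0) = 0$: on $\lrbb{\lrn{x}_2 \geq R}$ the outer dissipativity gives $\lin{x, \nabla U(x)} \geq m \lrn{x}_2^2$, and on $\lrbb{\lrn{x}_2 \leq R}$ the local Lipschitz bound combined with Cauchy--Schwarz gives $\lin{x, \nabla U(x)} \geq -L_R \lrn{x}_2^2 \geq -L_R R^2$. Splicing the two regimes yields the uniform dissipativity estimate $\lin{x, \nabla U(x)} \geq m \lrn{x}_2^2 - (m + L_R) R^2$. For the diffusion, Assumption \ref{ass:xi_properties}.2 gives the uniform bound $\tr(M(x)^2) = \Ep{\eta}{\lrn{\xi(x,\eta)}_2^2} \leq \beta^2$. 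Combining,
\begin{align*}
\ddt \E{\lrn{x_t}_2^2} \leq -2 m \E{\lrn{x_t}_2^2} + 2(m + L_R) R^2 + \beta^2.
\end{align*}

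Gronwall's lemma integrates this into $\E{\lrn{x_t}_2^2} \leq e^{-2mt} \E{\lrn{x_0}_2^2} + \frac{(m + L_R) R^2}{m} + \frac{\beta^2}{2m}$. Plugging in $\E{\lrn{x_0}_2^2} \leq R^2 + \beta^2/m$ and absorbing the dimensionless $L_R/m$-type factors into the stated constant $6$ gives the first claim. The invariant-measure bound follows by reading the same ODE at stationarity: $\ddt \Ep{p^*}{\lrn{x}_2^2} = 0$ forces $\Ep{p^*}{\lrn{x}_2^2} \leq \frac{(m + L_R) R^2}{m} + \frac{\beta^2}{2m}$, which is at most $4(R^2 + \beta^2/m)$ after the same absorption. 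The main obstacle I anticipate is not mathematical but bookkeeping: the literal constants $6$ and $4$ are natural only if $L_R/m$ is treated as an absolute constant (an assumption consistent with the broader use of the parameter $m$ throughout the paper), so the last step should make this absorption explicit --- or, if a cleaner constant is desired, one can replace $\lrn{x_t}_2^2$ by a scaled Lyapunov function $\lrn{x_t}_2^2 + c$ tuned to the problem parameters.
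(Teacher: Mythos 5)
The overall plan---apply It\^o's formula, localize, bound drift and diffusion, integrate by Gronwall, and read off stationarity---is the same skeleton as the paper. The gap is in the choice of Lyapunov function, and it is not merely bookkeeping. Working directly with $\lrn{x_t}_2^2$ forces you to control $\lin{x,\nabla U(x)}$ on the ball $\lrbb{\lrn{x}_2\leq R}$, where Assumption~\ref{ass:U_properties}.3 gives no dissipativity; your fallback there is the local Lipschitz bound, which introduces the term $(m+L_R)R^2$ and hence a ratio $L_R/m$ in the stationary level of the ODE. The lemma as stated has constants $6$ and $4$ with no $L_R$-dependence, and nothing in Assumptions~\ref{ass:U_properties} or~\ref{ass:xi_properties} bounds $L_R/m$ (the inequality $m\geq 4L_R$ appears only in the illustrative example of Section~\ref{ss:example_ass}, not as a hypothesis of this lemma). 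So ``absorb $L_R/m$ into the constant'' is not available, and your proof, as written, proves a weaker statement than the lemma.

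Your proposed fix, replacing $\lrn{x_t}_2^2$ by $\lrn{x_t}_2^2+c$, does not help: an additive constant leaves $\nabla$, $\nabla^2$, and therefore every term in the It\^o expansion unchanged, so the same $L_R$-term reappears. The paper instead uses the truncated Lyapunov function $a(x)=(\lrn{x}_2-R)_+^2$. Since $\nabla a\equiv 0$ on the ball $\lrbb{\lrn{x}_2\leq R}$, the drift term $\lin{\nabla a(x),-\nabla U(x)}$ vanishes there and the estimate $\lin{\nabla a(x),-\nabla U(x)}\leq -m\,a(x)$ holds globally using only the outer dissipativity; the bound $\lrn{\nabla^2 a}_2\leq 2$ keeps the diffusion contribution at $\beta^2$. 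One then runs exactly your Gronwall argument on $\E{a(x_t)}$, and converts back via $a(x)\geq \tfrac{1}{2}\lrn{x}_2^2-2R^2$ to obtain the stated constants $6$ and $4$ with no hidden dependence. I'd recommend swapping in this Lyapunov function; the rest of your argument (localization, killing the martingale term, $\tr(M^2)\leq\beta^2$, stationarity for $p^*$) then goes through essentially verbatim.
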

    \begin{proof}
        We consider the potential function $a(x) = \lrp{\|x\|_2 - R}_+^2$
        We verify that
        \begin{align*}
        \nabla a(x) =& (\|x\|_2 - R)_+ \frac{x}{\|x\|_2}\\
        \nabla^2 a(x) =& \ind{\|x\|_2 \geq R} \frac{xx^T}{\|x\|_2^2} +  \frac{(\|x\|_2 - R)_+}{\|x\|_2}\lrp{I - \frac{xx^T}{\|x\|_2^2}}
        \end{align*}
        Observe that
        \begin{enumerate}
            \item $\lrn{\nabla^2 a(x)}_2 \leq 2 \ind{\|x\|_2 \geq R} \leq 2$
            \item $\lin{\nabla a(x), - \nabla U(x)} \leq -m a(x)$. This can be verified by considering 2 cases. If $\|x\|_2 \leq R$, then $\nabla a(x) = 0$ and $a(x) = 0$. If $\|x\|_2 \geq R$, then by Assumption \ref{ass:U_properties},
            \begin{align*}
            \lin{\nabla a(x), -\nabla U(x)}
            \leq - m \lrp{\|x\|_2 - R}_+ \|w\|_2
            \leq - m \lrp{\|x\|_2 - R}_+^2 = -m \cdot a(x)
            \end{align*}
            \item $a(x) \geq \frac{1}{2}\|x\|_2^2 - 2R^2$. One can first verify that $a(x) \geq (\lrn{x}_2 - R)^2 - R^2$. Next, by Young's inequality,
            $(\lrn{x}_2 - R)^2 = \lrn{x}_2^2 + R^2 - 2\lrn{x}_2 R \geq \lrn{x}_2^2 + R^2 - \frac{1}{2} \lrn{x}_2^2 - 2R^2 = \frac{1}{2} \lrn{x}_2^2 - R^2$.
        \end{enumerate}

        Therefore,
        \begin{align*}
        & \ddt \E{a(x_t)}
        = \E{\lin{\nabla a(x_t), -\nabla U(x_t) dt}} + \frac{1}{2}\E{\tr\lrp{M(x_t)^2 \nabla^2 a(x)}}
        \leq -m\E{a(x_t)} + \beta^2\\
        \Rightarrow \qquad &
        \ddt \lrp{\E{a(x_t)} - \frac{\beta^2}{m}} \leq - m \lrp{\E{a(x_t)} - \frac{\beta^2}{m}}\\
        \Rightarrow \qquad &
        \ddt \lrp{\E{a(x_t)} - R^2 - \frac{\beta^2}{m}} \leq - m \lrp{\E{a(x_t)} - R^2 - \frac{\beta^2}{m}}
        \end{align*}

        Thus if $\E{\|x_0\|_2^2} \leq R^2 + \frac{\beta^2}{m}$, then  $\E{a(x_0)}\leq R^2 - \frac{\beta^2}{m}$, then $ \lrp{\E{a(x_0)} - R^2 - \frac{\beta^2}{m}} \leq 0$, and $\lrp{\E{a(x_t)} - R^2 + \frac{\beta^2}{m}} \leq e^{-mt} \cdot 0 \leq 0$ for all $t$. This implies that, for all $t$,
        \begin{align*}
        \E{\|x_t\|_2^2} \leq \E{2 a(x_t) + 4R^2} \leq 6\lrp{R^2 + \frac{\beta^2}{m}}
        \end{align*}

        For our second claim that $\Ep{p^*}{\lrn{x}_2^2} \leq R^2 + \frac{\beta^2}{m}$, we can use the fact that if $x_0 \sim p^*$, then $\E{a(x_t)}$ does not change as $p^*$ is invariant, so that
        \begin{align*}
        0 = \ddt \E{a(x_t)} \leq -m\E{a(x_t)} + \beta^2
        \end{align*}
        Thus
        \begin{align*}
        \E{a(x_t)} \leq \frac{\beta^2}{m}
        \end{align*}
        Again,
        \begin{align*}
        \Ep{p^*}{\lrn{x}_2^2} = \E{\lrn{x_t}_2^2} \leq 2 \E{a(x_t)} + 4 R^2 \leq 4 \lrp{R^2 + \frac{\beta^2}{m}}
        \end{align*}
    \end{proof}

    \begin{lemma}
        \label{l:energy_y}
        Let the sequence $y_{k\delta}$ be as defined in \eqref{e:discrete-clt}. Assuming that $\delta \leq m/(16L^2)$ and $\E{\|y_0\|_2^2} \leq 2 \lrp{R^2 + \frac{\beta^2}{m}}$
        Then for all $k$,
        \begin{align*}
        \E{\|y_{k\delta}\|_2^2} \leq 8 \lrp{R^2 + \frac{\beta^2}{m}}
        \end{align*}
    \end{lemma}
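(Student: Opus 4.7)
\textbf{Proof Proposal for Lemma \ref{l:energy_y}.} The plan is to mirror the Lyapunov argument used for Lemma \ref{l:energy_x}, but with Taylor expansion in place of Ito's formula. Specifically, I would use the same potential $a(y) := (\lrn{y}_2 - R)_+^2$, which already satisfies the three useful properties established in the proof of Lemma \ref{l:energy_x}: namely (i) $\lrn{\nabla^2 a(y)}_2 \leq 2$, (ii) $\lin{\nabla a(y), -\nabla U(y)} \leq -m\, a(y)$, and (iii) $a(y) \geq \tfrac{1}{2}\lrn{y}_2^2 - 2R^2$. Property (iii) will be used both at the very end to convert an $a$-bound into the desired $\lrn{y}_2^2$-bound, and mid-proof to control a quadratic discretization term by $a$ itself.

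First, for a single step write $\Delta_k := y_{(k+1)\delta} - y_{k\delta} = -\delta \nabla U(y_{k\delta}) + \sqrt{\delta}\, \xi(y_{k\delta}, \eta_k)$. A second-order Taylor expansion together with (i) gives
\begin{align*}
a(y_{(k+1)\delta}) \leq a(y_{k\delta}) + \lin{\nabla a(y_{k\delta}), \Delta_k} + \lrn{\Delta_k}_2^2.
\end{align*}
Taking conditional expectation given $y_{k\delta}$, the zero-mean property $\E{\xi}=0$ kills the noise contribution in the linear term, leaving $\delta\lin{\nabla a(y_{k\delta}), -\nabla U(y_{k\delta})} \leq -\delta m\, a(y_{k\delta})$ by (ii). For the quadratic term I would use $\lrn{\xi}_2 \leq \beta$ from Assumption \ref{ass:xi_properties}.2 and $\lrn{\nabla U(y)}_2 \leq L\lrn{y}_2$ (from Assumption \ref{ass:U_properties}.1 with $\nabla U(0)=0$) to obtain $\E{\lrn{\Delta_k}_2^2 \mid y_{k\delta}} \leq 2\delta^2 L^2 \lrn{y_{k\delta}}_2^2 + 2\delta \beta^2$, and then apply (iii) to write $\lrn{y_{k\delta}}_2^2 \leq 2 a(y_{k\delta}) + 4R^2$. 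Putting this together yields the scalar recursion
\begin{align*}
\E{a(y_{(k+1)\delta})} \leq (1 - \delta m + 4\delta^2 L^2)\, \E{a(y_{k\delta})} + 8\delta^2 L^2 R^2 + 2\delta\beta^2.
\end{align*}

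Second, I would exploit the assumption $\delta \leq m/(16L^2)$ to absorb the quadratic-in-$\delta$ slack: under this condition $4\delta^2 L^2 \leq \delta m / 4$, so the multiplicative factor is at most $1 - 3\delta m/4$, and $8\delta^2 L^2 R^2 \leq \delta m R^2/2$. This reduces the recursion to
\begin{align*}
\E{a(y_{(k+1)\delta})} \leq (1 - \tfrac{3\delta m}{4})\, \E{a(y_{k\delta})} + \tfrac{\delta m R^2}{2} + 2\delta\beta^2.
\end{align*}
A direct induction on $k$ now shows $\E{a(y_{k\delta})} \leq 2R^2 + 4\beta^2/m$: the base case uses $a(y_0) \leq \lrn{y_0}_2^2 \leq 2(R^2 + \beta^2/m)$ from the hypothesis, and the inductive step verifies that the fixed-point bound $2R^2 + 4\beta^2/m$ is preserved because $\tfrac{3\delta m}{4}(2R^2 + 4\beta^2/m) \geq \tfrac{\delta m R^2}{2} + 2\delta\beta^2$. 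Finally, invoking (iii) one more time gives $\E{\lrn{y_{k\delta}}_2^2} \leq 2\E{a(y_{k\delta})} + 4R^2 \leq 8(R^2 + \beta^2/m)$, which is the desired conclusion.

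The main subtlety (rather than obstacle) is choosing the correct invariant bound for the induction: the continuous-time proof gets away with $\E{a(x_t)} \leq R^2 + \beta^2/m$ because there is no discretization overhead, but in discrete time the stepsize contributes an additional $O(\delta L^2 R^2 / m)$ term. Picking the slightly looser invariant $2R^2 + 4\beta^2/m$ absorbs this overhead and leaves enough margin for the base case (which only gives $2R^2 + 2\beta^2/m$). Everything else is routine bookkeeping with $\delta \leq m/(16L^2)$.
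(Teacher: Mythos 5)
Your proposal is correct and follows essentially the same Lyapunov argument the paper uses: the potential $a(y)=(\lrn{y}_2-R)_+^2$ with its three properties, a second-order Taylor expansion and conditional expectation to kill the noise term, absorption of the $O(\delta^2 L^2)$ slack via $\delta\leq m/(16L^2)$, and induction on a fixed-point bound. The only cosmetic difference is that you use the invariant $2R^2+4\beta^2/m$ and a $3\delta m/4$ contraction factor rather than the paper's $2R^2+2\beta^2/m$ and $\delta m/2$; both lead to the same final bound $8(R^2+\beta^2/m)$.
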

    \begin{proof}
        Let $a(w) := \lrp{\|w\|_2 - R}_+^2$. We can verify that
        \begin{align*}
        \nabla a(w) =& \lrp{\|w\|_2 - R}_+\frac{w}{\|w\|_2}\\
        \nabla^2 a(w) =& \ind{\|w\|_2 \geq R}\frac{ww^T}{\|w\|_2^2} + \lrp{\|w\|_2 - R}_+ \frac{1}{\|w\|_2} \lrp{I - \frac{ww^T}{\|w\|_2^2}}
        \end{align*}
        Observe that
        \begin{enumerate}
            \item $\lrn{\nabla^2 a(w)}_2 \leq 2 \ind{\|w\|_2 \geq R} \leq 2$
            \item $\lin{\nabla a(w), - \nabla U(w)} \leq -m a(w)$.
            \item $a(w) \geq \frac{1}{2}\|w\|_2^2 - 2R^2$.
        \end{enumerate}
        The proofs are identical to the proof at the start of Lemma \ref{l:energy_w}, so we omit them here.

        Using Taylor's Theorem, and taking expectation of $y_{(k+1)\delta}$ conditioned on $y_{k\delta}$,
        \begin{align*}
        &\E{a(y_{(k+1)\delta})}\\
        =& \E{a(y_{k\delta})} + \E{\lin{\nabla a(y_{k\delta}), y_{(k+1)\delta} - y_{k\delta}}}\\
        &\quad + \E{\int_0^1 \int_0^t \lin{\nabla^2 a(y_{k\delta} + s(y_{(k+1)\delta} - y_{k\delta}), (y_{(k+1)\delta} - y_{k\delta})(y_{(k+1)\delta} - y_{k\delta})^T} dt ds}\\
        \leq& \E{a(y_{k\delta})} + \E{\lin{\nabla a(y_{k\delta}), y_{(k+1)\delta} - y_{k\delta}}} + \E{\lrn{(y_{(k+1)\delta} - y_{k\delta})}_2^2 ds}\\
        \leq& \E{a(y_{k\delta})} + \E{\lin{\nabla a(y_{k\delta}), - \delta \nabla U(y_{k\delta})}} + 2\delta^2 \lrn{\nabla U(y_{k\delta})}_2^2 + 2\delta \E{\tr\lrp{M(y_{k\delta})^2}}\\
        \leq& \E{a(y_{k\delta})} -m\delta \E{a(y_{k\delta})} + 2\delta^2 \E{\lrn{\nabla U(y_{k\delta})}_2^2} + 2\delta \E{\tr\lrp{M(y_{k\delta})^2}}\\
        \leq& \E{a(y_{k\delta})} -m\delta \E{a(y_{k\delta})} + 2\delta^2L^2 \E{\lrn{y_{k\delta}}_2^2} + 2\delta \beta^2\\
        \leq& \E{a(y_{k\delta})} -m\delta \E{a(y_{k\delta})} + 4\delta^2L^2 \E{a(y_{k\delta})} + 8\delta^2 L^2 R^2 + 2\delta \beta^2\\
        \leq& (1-m\delta/2) \E{a(y_{k\delta})} + {m\delta} R^2 + 2\delta \beta^2
        \end{align*}
        Where the first inequality uses the upper bound on $\lrn{\nabla^2 a(y)}_2$ above, the second inequality uses the fact that $y_{(k+1)\delta} \sim \N\lrp{y_{k\delta} - \delta \nabla U(y_{k\delta}), \delta M(y_{k\delta})^2}$, the third inequality uses claim 2. at the start of this proof, the fourth inequality uses item 2 of Assumption \ref{ass:xi_properties}. The fifth inequality uses claim 3. above, the sixth inequality uses our assumption that $\delta \leq \frac{m}{16L^2}$.

        Taking expectation wrt $y_{k\delta}$,
        \begin{align*}
        & \E{a(y_{(k+1)\delta})} \leq \E{a(y_{k})}-m\delta \lrp{\E{a(y_{k\delta})} - 2R^2 + 2\beta^2/m}\\
        \Rightarrow \qquad &
        \E{a(y_{(k+1)\delta})} - (2R^2/2 + 2\beta^2/m) \leq (1-m\delta) \lrp{\E{a(y_{k\delta})} - (2R^2 + 2\beta^2/m}
        \end{align*}

        Thus, if $\E{\|y_0\|_2^2} \leq 2R^2 + 2\beta^2/m$, then $\E{a(y_0)} - \lrp{2R^2 + 2\beta^2/m} \leq 0 $, then $\E{a(y_{k\delta})}  - \lrp{2R^2 + 2\beta^2/m}\leq 0$ for all $k$, which implies that
        \begin{align*}
        \E{\lrn{y_{k\delta}}_2^2} \leq 2\E{a(y_{k\delta})} + 4 R^2 \leq  8\lrp{R^2 + \beta^2/m}
        \end{align*}
        for all $k$.
    \end{proof}

    \begin{lemma}
        \label{l:energy_w}
        Let the sequence $w_{k\delta}$ be as defined in \eqref{e:discrete-clt}. Assuming that $\delta \leq m/(16L^2)$ and $\E{\|w_0\|_2^2} \leq 2 \lrp{R^2 + \frac{\beta^2}{m}}$
        Then for all $k$,
        \begin{align*}
        \E{\|w_{k\delta}\|_2^2} \leq 8 \lrp{R^2 + \frac{\beta^2}{m}}
        \end{align*}
    \end{lemma}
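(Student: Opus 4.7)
The statement is the non-Gaussian analogue of Lemma \ref{l:energy_y} for the sequence $w_{k\delta}$ from \eqref{e:discrete-clt}, with identical hypotheses and conclusion. I would reuse the same Lyapunov function $a(w) := (\|w\|_2 - R)_+^2$ used in the proofs of Lemma \ref{l:energy_x} and Lemma \ref{l:energy_y}, and recycle its three key properties verified there: (i) $\|\nabla^2 a(w)\|_2 \leq 2$; (ii) $\langle \nabla a(w), -\nabla U(w)\rangle \leq -m\, a(w)$ (this follows from Assumption \ref{ass:U_properties}.3, combined with $\nabla U(0)=0$ and a two-case split on whether $\|w\|_2 \le R$ or not); (iii) $a(w) \geq \tfrac{1}{2}\|w\|_2^2 - 2R^2$ by Young's inequality. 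These facts do not depend on the noise being Gaussian.

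The only place the argument needs to change from Lemma \ref{l:energy_y} is the one-step expansion, because now the increment is $w_{(k+1)\delta} - w_{k\delta} = -\delta \nabla U(w_{k\delta}) + \sqrt{\delta}\,\xi(w_{k\delta},\eta_k)$ rather than a Gaussian step. I would apply Taylor's theorem to $a$ around $w_{k\delta}$ and take conditional expectation given $w_{k\delta}$. For the first-order term, Assumption \ref{ass:xi_properties}.1 ($\E{\xi(w,\eta)}=0$) kills the noise contribution, leaving $\E{\langle \nabla a(w_{k\delta}), w_{(k+1)\delta}-w_{k\delta}\rangle \mid w_{k\delta}} = -\delta \langle \nabla a(w_{k\delta}), \nabla U(w_{k\delta})\rangle \leq -m\delta\, a(w_{k\delta})$ by property (ii). For the second-order term, property (i) gives the bound
\begin{align*}
\E{\|w_{(k+1)\delta}-w_{k\delta}\|_2^2 \mid w_{k\delta}} \leq 2\delta^2 \|\nabla U(w_{k\delta})\|_2^2 + 2\delta\, \E{\|\xi(w_{k\delta},\eta_k)\|_2^2 \mid w_{k\delta}} \leq 2\delta^2 L^2 \|w_{k\delta}\|_2^2 + 2\delta \beta^2,
\end{align*}
where the second inequality uses Assumption \ref{ass:U_properties}.1 with $\nabla U(0)=0$ and the almost-sure bound $\|\xi(\cdot,\eta)\|_2 \leq \beta$ from Assumption \ref{ass:xi_properties}.2.

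Combining and using property (iii) to rewrite $\|w_{k\delta}\|_2^2 \leq 2\, a(w_{k\delta}) + 4R^2$, then invoking the hypothesis $\delta \leq m/(16L^2)$ to absorb the $4\delta^2 L^2 a(w_{k\delta})$ term into $m\delta\, a(w_{k\delta})/2$, gives the one-step recursion
\begin{align*}
\E{a(w_{(k+1)\delta})} \leq (1 - m\delta/2)\E{a(w_{k\delta})} + m\delta R^2 + 2\delta\beta^2,
\end{align*}
which is the same relation obtained in Lemma \ref{l:energy_y}. Rearranging shows $\E{a(w_{(k+1)\delta})} - (2R^2 + 2\beta^2/m) \leq (1-m\delta/2)\bigl(\E{a(w_{k\delta})} - (2R^2 + 2\beta^2/m)\bigr)$, so the hypothesis $\E{\|w_0\|_2^2} \leq 2(R^2+\beta^2/m)$, which implies $\E{a(w_0)} \leq 2R^2 + 2\beta^2/m$, propagates inductively to every $k$. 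Converting back via property (iii) yields $\E{\|w_{k\delta}\|_2^2} \leq 2\E{a(w_{k\delta})} + 4R^2 \leq 8(R^2 + \beta^2/m)$, as claimed.

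\textbf{Anticipated difficulty.} There is no real obstacle beyond bookkeeping; the work done in Lemma \ref{l:energy_y} does all the heavy lifting, and the only substantive difference is that the noise covariance $\tr(M(w)^2) \leq \beta^2$ is replaced by the almost-sure bound $\|\xi(w,\eta)\|_2^2 \leq \beta^2$, which if anything is cleaner. The mildest care is needed in the one-step Young-type bound on $\|w_{(k+1)\delta}-w_{k\delta}\|_2^2$ (using $(a+b)^2 \leq 2a^2 + 2b^2$) and in confirming that the step-size restriction $\delta \leq m/(16L^2)$ is exactly what lets the $L^2\delta^2$ term from the gradient be dominated by the negative $m\delta$ drift; both are inherited unchanged from the Gaussian case.
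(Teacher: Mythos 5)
Your proposal is correct and takes essentially the same route the paper does: the paper's proof of Lemma \ref{l:energy_w} explicitly notes it is "almost identical" to that of Lemma \ref{l:energy_y}, reuses the same Lyapunov function $a(w)=(\|w\|_2-R)_+^2$ with the same three properties, and handles the non-Gaussian noise exactly as you do, via the conditional-mean-zero property and the almost-sure bound $\|\xi\|_2\le\beta$.
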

    \begin{proof}
        The proof is almost identical to that of Lemma \ref{l:energy_y}.
        Let $a(w) := \lrp{\|w\|_2 - R}_+^2$. We can verify that
        \begin{align*}
        \nabla a(w) =& \lrp{\|w\|_2 - R}_+\frac{w}{\|w\|_2}\\
        \nabla^2 a(y) =& \ind{\|w\|_2 \geq R}\frac{ww^T}{\|w\|_2^2} + \lrp{\|w\|_2 - R}_+ \frac{1}{\|w\|_2} \lrp{I - \frac{ww^T}{\|w\|_2^2}}
        \end{align*}
        Observe that
        \begin{enumerate}
            \item $\lrn{\nabla^2 a(w)}_2 \leq 2 \ind{\|w\|_2 \geq R} \leq 2$
            \item $\lin{\nabla a(w), - \nabla U(w)} \leq -m a(w)$.
            \item $a(w) \geq \frac{1}{2}\|w\|_2^2 - 2R^2$.
        \end{enumerate}
        The proofs are identical to the proof at the start of Lemma \ref{l:energy_w}, so we omit them here.

        Using Taylor's Theorem, and taking expectation of $w_{(k+1)\delta}$ conditioned on $w_{k\delta}$,
        \begin{align*}
        &\E{a(w_{(k+1)\delta})}\\
        =& \E{a(w_{k\delta})} + \E{\lin{\nabla a(w_{k\delta}), w_{(k+1)\delta} - w_{k\delta}}}\\
        &\quad + \E{\int_0^1 \int_0^t \lin{\nabla^2 a(w_{k\delta} + s(w_{(k+1)\delta} - w_{k\delta}), (w_{(k+1)\delta} - w_{k\delta})(w_{(k+1)\delta} - w_{k\delta})^T} dt ds}\\
        \leq& \E{a(w_{k\delta})} + \E{\lin{\nabla a(w_{k\delta}), w_{(k+1)\delta} - w_{k\delta}}} + \E{\lrn{(w_{(k+1)\delta} - w_{k\delta})}_2^2 ds}\\
        \leq& \E{a(w_{k\delta})} + \E{\lin{\nabla a(w_{k\delta}), - \delta \nabla U(w_{k\delta})}} + 2\delta^2 \lrn{\nabla U(w_{k\delta})}_2^2 + 2\delta \E{\lrn{\xi(w_{k\delta},\eta_k)}_2^2}\\
        \leq& \E{a(w_{k\delta})} -m\delta \E{a(w_{k\delta})} + 2\delta^2 \E{\lrn{\nabla U(w_{k\delta})}_2^2} + 2\delta \E{\lrn{\xi(w_{k\delta},\eta_k)}_2^2}\\
        \leq& \E{a(w_{k\delta})} -m\delta \E{a(w_{k\delta})} + 2\delta^2L^2 \E{\lrn{w_{k\delta}}_2^2} + 2\delta \beta^2\\
        \leq& \E{a(w_{k\delta})} -m\delta \E{a(w_{k\delta})} + 2\delta^2L^2 a(w_{k\delta}) + 2\delta^2 L^2 R^2 + 2\delta \beta^2\\
        \leq& (1-m\delta/2)a(w_{k\delta}) + {m\delta} R^2 + 2\delta \beta^2
        \end{align*}

        Where the first inequality uses the upper bound on $\lrn{\nabla^2 a(y)}_2$ above, the second inequality uses the fact that $w_{(k+1)\delta} = \lrp{y_{k\delta} - \delta \nabla U(y_{k\delta}) = \xi(w_{k\delta}, \eta_k)}$, and $\E{ \xi(w_{k\delta}, \eta_k) | w_{k\delta}} = 0$, the third inequality uses claim 2. at the start of this proof, the fourth inequality uses item 2 of Assumption \ref{ass:xi_properties}. The fifth inequality uses claim 3. above, the sixth inequality uses our assumption that $\delta \leq \frac{m}{16L^2}$.

        Taking expectation wrt $w_{k\delta}$,
        \begin{align*}
        & \E{a(w_{(k+1)\delta})} \leq \E{a(w_{k})}-m\delta \lrp{\E{a(w_{k\delta})} - 2R^2 + 2\beta^2/m}\\
        \Rightarrow \qquad &
        \E{a(w_{(k+1)\delta})} - (2R^2/2 + 2\beta^2/m) \leq (1-m\delta) \lrp{\E{a(w_{k\delta})} - (2R^2 + 2\beta^2/m}
        \end{align*}

        Thus, if $\E{\|w_0\|_2^2} \leq 2R^2 + 2\beta^2/m$, then $\E{a(w_0)} - \lrp{2R^2 + 2\beta^2/m} \leq 0 $, then $\E{a(w_{k\delta})}  - \lrp{2R^2 + 2\beta^2/m}\leq 0$ for all $k$, which implies that
        \begin{align*}
        \E{\lrn{w_{k\delta}}_2^2} \leq 2\E{a(w_{k\delta})} + 4 R^2 \leq  8\lrp{R^2 + \beta^2/m}
        \end{align*}
        for all $k$.
    \end{proof}
    \end{subsection}

    \begin{subsection}{Divergence Bounds}

        \begin{lemma}
            \label{l:divergence_xt}
            Let $x_t$ be as defined in \eqref{e:marginal_x} (or equivalently \eqref{e:coupled_2_processes} or \eqref{e:coupled_4_processes_x}), initialized at $x_0$. Then for any $T\leq \frac{1}{16L}$,
            \begin{align*}
            \E{\lrn{x_T - x_0}_2^2} \leq 8 \lrp{T\beta^2 + T^2 L^2 \E{\|x_0\|_2^2}}
            \end{align*}
            If we additionally assume that $\E{\lrn{x_0}_2^2} \leq 8 \lrp{R^2 + \beta^2/m}$ and $T \leq \frac{\beta^2}{8L^2\lrp{R^2 + \beta^2/m}}$, then
            \begin{align*}
            \E{\lrn{x_T - x_0}_2^2} \leq 16 T\beta^2
            \end{align*}
        \end{lemma}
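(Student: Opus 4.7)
The plan is to decompose $x_T - x_0 = -\int_0^T \nabla U(x_s)\,ds + \int_0^T M(x_s)\,dB_s$, bound the drift and diffusion parts separately in $L^2$, and then close a short bootstrap using the assumed smallness of $T$.

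First, I would apply $\lrn{a+b}_2^2 \leq 2\lrn{a}_2^2 + 2\lrn{b}_2^2$, Cauchy--Schwarz in time on the drift, and It\^o's isometry on the diffusion. Using Assumption \ref{ass:U_properties} (items 1 and 2 give $\nabla U(0) = 0$ and $L$-Lipschitz, hence $\lrn{\nabla U(x)}_2 \leq L\lrn{x}_2$) and Assumption \ref{ass:xi_properties}.2 (which implies $\tr(M(x)^2) = \Ep{\eta}{\lrn{\xi(x,\eta)}_2^2} \leq \beta^2$), this yields
\begin{align*}
\phi(T) := \E{\lrn{x_T - x_0}_2^2} \leq 2TL^2\!\!\int_0^T \!\E{\lrn{x_s}_2^2}\,ds + 2T\beta^2.
\end{align*}
Next, use $\lrn{x_s}_2^2 \leq 2\lrn{x_0}_2^2 + 2\lrn{x_s - x_0}_2^2$ and set $M_T := \sup_{s\leq T}\phi(s)$; since the right-hand side of the bound on $\phi(T)$ is monotone in $T$, the inequality also bounds $M_T$, giving
\begin{align*}
M_T \leq 4T^2 L^2 \E{\lrn{x_0}_2^2} + 4T^2 L^2 M_T + 2T\beta^2.
\end{align*}

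Under $T \leq 1/(16L)$ we have $4T^2 L^2 \leq 1/64 \leq 1/2$, so I absorb the $M_T$ term on the left, multiply through by $2$, and bound the leftover $4T\beta^2 \leq 8T\beta^2$ to obtain the first claim
\begin{align*}
\phi(T) \leq M_T \leq 8\lrp{T\beta^2 + T^2 L^2 \E{\lrn{x_0}_2^2}}.
\end{align*}
For the second claim, I plug $\E{\lrn{x_0}_2^2} \leq 8(R^2 + \beta^2/m)$ into the first bound; the assumption $T \leq \beta^2/(8L^2(R^2+\beta^2/m))$ is precisely what makes $T^2 L^2 \E{\lrn{x_0}_2^2} \leq T\beta^2$, which collapses the right-hand side to $16T\beta^2$.

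There is no real obstacle here: everything is standard SDE moment arithmetic. The only subtlety is making the bootstrap rigorous — i.e., justifying $M_T < \infty$ before absorbing it. This can be handled either by a preliminary truncation/stopping-time argument (localizing with $\tau_R = \inf\{t : \lrn{x_t}_2 \geq R\}$ and sending $R\to\infty$ after the contraction step), or by first proving an a priori growth bound using Gr\"onwall on $\E{\lrn{x_t}_2^2}$ via It\^o's formula applied to $\lrn{x_t}_2^2$ (which gives $\E{\lrn{x_t}_2^2} \leq e^{ct}(\E{\lrn{x_0}_2^2} + \beta^2 t)$ for some $c$ depending on $L$); either route ensures $M_T$ is finite on the short interval so the absorption step is valid.
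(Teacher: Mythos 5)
Your proof is correct, and it takes a genuinely different route from the paper's. The paper applies It\^o's lemma to $\lrn{x_t - x_0}_2^2$ to obtain a differential inequality
\begin{align*}
\ddt\E{\lrn{x_t-x_0}_2^2} \leq \frac{2}{T}\E{\lrn{x_t-x_0}_2^2} + L^2 T\E{\lrn{x_0}_2^2} + \beta^2,
\end{align*}
and then integrates via Gr\"onwall, picking up a factor $e^2 \leq 8$. You instead work with the integral representation of $x_T - x_0$, bound the drift by Cauchy--Schwarz in time and the martingale part by It\^o isometry, and then close via a self-bootstrapping (absorption) argument on $M_T = \sup_{s\leq T}\E{\lrn{x_s-x_0}_2^2}$, using $T\leq \frac{1}{16L}$ to make the coefficient $4T^2L^2 \leq \frac{1}{2}$. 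The two approaches yield the same constants. The paper's Gr\"onwall route avoids the explicit a priori finiteness check that your absorption step requires — though you correctly flag that issue and sketch both a localization argument and an a priori second-moment bound to handle it, either of which does the job. Your derivation of the second claim from the first matches the paper exactly. One small note: your intermediate display has $4T^2L^2 M_T$ where the integral naturally gives $4TL^2\int_0^T\phi(s)\,ds$; the passage from one to the other (via $\int_0^T\phi\leq T M_T$) is correct but is worth a word in a final write-up since the monotonicity-in-$T$ remark is the linchpin of applying the bound uniformly over $[0,T]$.
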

        \begin{proof}
            By Ito's Lemma,
            \begin{align*}
            & \ddt \E{\lrn{x_t}_2^2} \\
            =& 2\E{\lin{\nabla U(x_t), x_t - x_0}} + \E{ \tr\lrp{M(x_t)^2}}\\
            \leq& 2L \E{\lrn{x_t}_2 \lrn{x_t - x_0}_2} + \beta^2\\
            \leq& 2L \E{\lrn{x_t - x_0}_2^2} + 2L\E{\lrn{x_0}_2\lrn{x_t - x_0}_2} + \beta^2\\
            \leq& 2L \E{\lrn{x_t - x_0}_2^2} + L^2 T\E{\lrn{x_0}_2^2} + \frac{1}{T} \E{\lrn{x_t - x_0}_2^2} + \beta^2\\
            \leq& \frac{2}{T} \E{\lrn{x_t - x_0}_2^2} + \lrp{L^2 T \E{\lrn{x_0}_2^2} + \beta^2}
            \end{align*}
            where the first inequality is by item 1 of Assumption \ref{ass:U_properties} and item 2 of Assumption \ref{ass:xi_properties}, the second inequality is by triangle inequality, the third inequality is by Young's inequality, the last inequality is by our assumption on $T$.

            Applying Gronwall's inequality for $t\in[0,T]$,
            \begin{align*}
            & \lrp{\E{\lrn{x_t - x_0}_2^2} + L^2 T^2 \E{\lrn{x_0}_2^2} + T \beta^2} \\
            \leq& e^{2}\lrp{\E{\lrn{x_0 - x_0}} + L^2 T^2 \E{\lrn{x_0}_2^2} + T \beta^2}\\
            \leq& 8 L^2 T^2 \E{\lrn{x_0}_2^2} + T \beta^2
            \end{align*}
            This concludes our proof.
        \end{proof}

    \begin{lemma}
        \label{l:divergence_yt}
        Let $y_t$ be as defined in \eqref{e:marginal_y} (or equivalently \eqref{e:coupled_2_processes} or \eqref{e:coupled_4_processes_x}), initialized at $y_0$. Then for any $T$,
        \begin{align*}
        \E{\lrn{y_T - y_0}_2^2} \leq T^2 L^2 \E{\lrn{y_0}_2^2} + T\beta^2
        \end{align*}
        If we additionally assume that $\E{\lrn{y_0}_2^2} \leq 8 \lrp{R^2 + \beta^2/m}$ and $T \leq \frac{\beta^2}{8L^2\lrp{R^2 + \beta^2/m}}$, then
        \begin{align*}
        \E{\lrn{y_T - y_0}_2^2} \leq 2 T\beta^2
        \end{align*}
    \end{lemma}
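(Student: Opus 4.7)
The plan is to exploit the very simple structure of \eqref{e:marginal_y}: since both the drift $-\nabla U(y_0)$ and the diffusion $M(y_0)$ depend only on the (non-random in the sense of time) initial point $y_0$, the SDE can be integrated explicitly. Concretely, I would write
\begin{align*}
y_T - y_0 = -T\,\nabla U(y_0) + M(y_0)\,B_T,
\end{align*}
where $B_T \sim \mathcal{N}(0,T I)$ is independent of $y_0$. (If one prefers to work directly with the coupled form \eqref{e:coupled_2_processes}/\eqref{e:coupled_4_processes_x}, Lemma~\ref{l:marginal_of_coupling} says the distribution of $y_T$ is the same, so the computation goes through identically; alternatively, one uses that $(I-2\gamma_s\gamma_s^T)$ is an orthogonal reflection and that $V,W$ are independent Brownian motions, giving $2c_m^2 I + N(y_0)^2 = M(y_0)^2$ for the total diffusion covariance.)

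Next I would expand the squared norm and take expectation. Conditioning on $y_0$, the cross term $\lin{-T\nabla U(y_0), M(y_0) B_T}$ vanishes because $\E{B_T \mid y_0}=0$, so
\begin{align*}
\E{\lrn{y_T-y_0}_2^2} = T^2\,\E{\lrn{\nabla U(y_0)}_2^2} + \E{\lrn{M(y_0) B_T}_2^2}.
\end{align*}
The first term is handled by Assumption~\ref{ass:U_properties}.1 together with $\nabla U(0)=0$: $\lrn{\nabla U(y_0)}_2 \leq L \lrn{y_0}_2$, hence $T^2\E{\lrn{\nabla U(y_0)}_2^2} \leq T^2 L^2 \E{\lrn{y_0}_2^2}$. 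For the second term, Ito isometry (or direct computation using independence of $B_T$ and $y_0$) gives $\E{\lrn{M(y_0) B_T}_2^2 \mid y_0} = T\,\tr\!\lrp{M(y_0)^2}$, and since $M(y_0)^2 = \E{\xi(y_0,\eta)\xi(y_0,\eta)^T \mid y_0}$, Assumption~\ref{ass:xi_properties}.2 yields $\tr\!\lrp{M(y_0)^2} = \E{\lrn{\xi(y_0,\eta)}_2^2 \mid y_0} \leq \beta^2$. Adding the two bounds gives the first claim.

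For the second claim, I would simply substitute the additional hypotheses: $\E{\lrn{y_0}_2^2} \leq 8(R^2+\beta^2/m)$ and $T \leq \beta^2/(8L^2(R^2+\beta^2/m))$ together imply $T^2 L^2\E{\lrn{y_0}_2^2} \leq T\beta^2$, so the total is at most $2T\beta^2$.

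There is no real obstacle here; the only thing worth double-checking is that the cross term really vanishes and that the trace bound $\tr(M(x)^2) \leq \beta^2$ follows from the a.s.\ norm bound on $\xi$, both of which are immediate. The proof is essentially a two-line computation once one uses that \eqref{e:marginal_y} has a fixed drift and diffusion.
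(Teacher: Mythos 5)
Your proof is correct and follows essentially the same route as the paper's: observe that $y_T - y_0 \sim \mathcal{N}(-T\nabla U(y_0),\, T M(y_0)^2)$, bound the drift via the Lipschitz property of $\nabla U$ together with $\nabla U(0)=0$, and bound the diffusion via $\tr(M(y_0)^2) = \E{\lrn{\xi(y_0,\eta)}_2^2} \leq \beta^2$. The paper states this more tersely, but the decomposition and the use of Assumptions~\ref{ass:U_properties}.1 and~\ref{ass:xi_properties}.2 are identical.
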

    \begin{proof}
        Notice from the definition in \eqref{e:marginal_y} that $y_T - y_0\sim \N \lrp{-T \nabla U(y_0), T M(y_0)^2}$, the conclusion immediately follows from where the inequality is by item 1 of Assumption \ref{ass:U_properties} and item 2 of Assumption \ref{ass:xi_properties}, and the fact that
        \begin{align*}
        \tr\lrp{M(x)^2}  = \tr\lrp{\E{\xi(x,\eta) \xi(x,\eta)^T}} = \E{\lrn{\xi(x,\eta)}_2^2}
        \end{align*}
    \end{proof}

    \begin{lemma}
        \label{l:divergence_vt}
        Let $v_t$ be as defined in \eqref{e:coupled_4_processes_v}, initialized at $v_0$. Then for any $T=n\delta$,
        \begin{align*}
        \E{\lrn{v_T - v_0}_2^2} \leq T^2 L^2 \E{\lrn{v_0}_2^2} + T\beta^2
        \end{align*}
        If we additionally assume that $\E{\lrn{v_0}_2^2} \leq 8 \lrp{R^2 + \beta^2/m}$ and $T \leq \frac{\beta^2}{8L^2\lrp{R^2 + \beta^2/m}}$, then
        \begin{align*}
        \E{\lrn{v_T - v_0}_2^2} \leq 2 T\beta^2
        \end{align*}
    \end{lemma}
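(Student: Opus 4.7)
The plan is to expand $v_T - v_0$ directly from the definition in~\eqref{e:coupled_4_processes_v}. Since $v_0 = w_0$, unrolling $n$ steps yields
\begin{align*}
v_T - v_0 = -T\,\nabla U(v_0) + \sqrt{\delta}\sum_{i=0}^{n-1} \xi(v_0,\eta_i),
\end{align*}
where the drift is constant because each update uses the initial state $v_0$ (not $v_{i\delta}$). This is the key structural feature that makes the bound clean: unlike the true process $w_t$, the process $v_t$ is essentially an $n$-fold independent sum around a fixed base point.

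Next I would take expectations of the squared norm. Conditioning on $v_0$, Assumption~\ref{ass:xi_properties}.1 gives $\E{\xi(v_0,\eta_i)\mid v_0}=0$, and the $\eta_i$ are i.i.d.\ across $i$, so all cross terms $\E{\lin{\xi(v_0,\eta_i),\xi(v_0,\eta_j)}}$ with $i\neq j$ vanish, as do the cross terms between the deterministic drift and each $\xi$. Thus
\begin{align*}
\E{\lrn{v_T-v_0}_2^2} = T^2 \E{\lrn{\nabla U(v_0)}_2^2} + \delta \sum_{i=0}^{n-1} \E{\lrn{\xi(v_0,\eta_i)}_2^2}.
\end{align*}
The first term is bounded by $T^2 L^2 \E{\lrn{v_0}_2^2}$ using Assumption~\ref{ass:U_properties}.1 together with $\nabla U(0)=0$, and the second term is bounded by $\delta n\beta^2 = T\beta^2$ using the almost-sure bound $\lrn{\xi(v_0,\eta_i)}_2\leq\beta$ from Assumption~\ref{ass:xi_properties}.2. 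This establishes the first claim.

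For the second claim, I would substitute $\E{\lrn{v_0}_2^2}\leq 8(R^2+\beta^2/m)$ into the first bound and use the assumed upper bound on $T$ to show that the drift contribution is absorbed into the noise contribution:
\begin{align*}
T^2 L^2 \E{\lrn{v_0}_2^2} \leq T \cdot \Bigl(T\cdot 8L^2(R^2+\beta^2/m)\Bigr) \leq T\beta^2.
\end{align*}
Adding this to the $T\beta^2$ noise term gives $2T\beta^2$, as desired. I do not foresee any real obstacle here: unlike the analogous bounds for $x_t$, $y_t$, or $w_t$, no Gronwall argument or Ito calculus is needed, because $v_t$ freezes both the drift and the law of the noise at $v_0$, reducing the problem to second-moment arithmetic for an i.i.d.\ sum.
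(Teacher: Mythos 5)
Your proposal is correct and follows essentially the same approach as the paper's proof: unroll the frozen-drift recursion to get $v_T - v_0 = -T\nabla U(v_0) + \sqrt{\delta}\sum_i \xi(v_0,\eta_i)$, use zero-mean of the $\xi$ terms to kill cross terms, bound the drift via the Lipschitz gradient and $\nabla U(0)=0$, and bound the noise via the a.s.\ bound $\lrn{\xi}_2\leq\beta$. The paper leaves the second claim's arithmetic implicit, while you spell it out; otherwise the arguments coincide.
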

    \begin{proof}
    From \eqref{e:coupled_4_processes_v},
    \begin{align*}
    v_T - v_0 = - T \nabla U(v_0) + \sqrt{\delta} \sum_{i=0}^{n-1} \xi(v_0, \eta_i)
    \end{align*}
    Conditioned on the randomness up to time $i$, $\E{\xi(v_0,\eta_{i+1})}=0$. Thus
    \begin{align*}
    & \E{\lrn{v_T - v_0}_2^2}\\
    =& T^2\E{\lrn{\nabla U(v_0)}_2^2} + \delta \sum_{i=0}^{n-1} \E{\lrn{\xi(v_0,\eta_i)}_2^2}\\
    \leq& T^2 L^2 \E{\lrn{v_0}_2^2} + T \beta^2
    \end{align*}
    where the inequality is by item 1 of Assumption \ref{ass:U_properties} and item 2 of Assumption \ref{ass:xi_properties}.
    \end{proof}

    \begin{lemma}
        \label{l:divergence_wt}
        Let $w_t$ be as defined in \eqref{e:coupled_4_processes_w}, initialized at $w_0$. Then for any $T=n\delta$ such that $T \leq \frac{1}{2L}$,
        \begin{align*}
        \E{\lrn{w_T - w_0}_2^2} \leq 16 \lrp{T^2 L^2 \E{\lrn{w_0}_2^2} + T \beta^2}
        \end{align*}
        If we additionally assume that $\E{\lrn{w_0}_2^2} \leq 8 \lrp{R^2 + \beta^2/m}$ and $T \leq \frac{\beta^2}{8L^2\lrp{R^2 + \beta^2/m}}$, then
        \begin{align*}
        \E{\lrn{w_T - w_0}_2^2} \leq 32 T\beta^2
        \end{align*}
    \end{lemma}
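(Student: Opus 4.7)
The plan is to mirror the strategy used for Lemma \ref{l:divergence_vt}, but with the added complication that the per-step drift $\nabla U(w_{i\delta})$ and noise $\xi(w_{i\delta},\eta_i)$ depend on the evolving iterate rather than on the fixed base point $w_0$. This self-reference forces me to close the bound via a discrete Gronwall argument, which is the main (and only nontrivial) obstacle.

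Starting from \eqref{e:coupled_4_processes_w}, I write
\begin{align*}
w_T - w_0 = -\delta \sum_{i=0}^{n-1} \nabla U(w_{i\delta}) + \sqrt{\delta} \sum_{i=0}^{n-1} \xi(w_{i\delta}, \eta_i),
\end{align*}
and apply $\lrn{a+b}_2^2 \leq 2\lrn{a}_2^2 + 2\lrn{b}_2^2$ to split the squared norm into drift and noise contributions. For the drift, Cauchy-Schwarz yields $\lrn{\delta \sum_i \nabla U(w_{i\delta})}_2^2 \leq \delta^2 n \sum_i \lrn{\nabla U(w_{i\delta})}_2^2$, and Assumption \ref{ass:U_properties}.1 combined with $\nabla U(0)=0$ gives $\lrn{\nabla U(w)}_2 \leq L\lrn{w}_2$, bounding this term in expectation by $\delta T L^2 \sum_i \E{\lrn{w_{i\delta}}_2^2}$. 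For the noise, the crucial observation is that $\xi(w_{i\delta},\eta_i)$ is a martingale difference sequence with respect to the natural filtration, since $\eta_i$ is drawn i.i.d.\ and $\Ep{\eta}{\xi(x,\eta)}=0$; hence cross terms vanish in expectation and Assumption \ref{ass:xi_properties}.2 bounds the noise contribution by $T\beta^2$.

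Combining these estimates and using the triangle inequality $\E{\lrn{w_{i\delta}}_2^2} \leq 2\E{\lrn{w_0}_2^2} + 2B_i$ with $B_i := \E{\lrn{w_{i\delta}-w_0}_2^2}$ produces the recurrence
\begin{align*}
B_k \leq C + 4\delta T L^2 \sum_{i=0}^{k-1} B_i, \qquad C := 4T^2L^2\E{\lrn{w_0}_2^2} + 2T\beta^2.
\end{align*}
Discrete Gronwall then gives $B_k \leq C(1+4\delta TL^2)^k$, and the hypothesis $T \leq 1/(2L)$ forces $4T^2L^2 \leq 1$, so $(1+4\delta TL^2)^n \leq e^{4T^2L^2} \leq e$. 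This yields $B_n \leq eC \leq 16\lrp{T^2L^2\E{\lrn{w_0}_2^2} + T\beta^2}$, proving the first claim with room to spare in the constant.

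For the second claim, substituting $\E{\lrn{w_0}_2^2} \leq 8(R^2+\beta^2/m)$ together with $T \leq \beta^2/(8L^2(R^2+\beta^2/m))$ gives $T^2L^2\E{\lrn{w_0}_2^2} \leq T\beta^2$, and the previous bound collapses to $\E{\lrn{w_T-w_0}_2^2} \leq 32T\beta^2$. Everything outside the Gronwall step is essentially the same pattern as Lemma \ref{l:divergence_vt}; the only careful bookkeeping is tracking the factor of $2$ introduced by decomposing $\lrn{w_{i\delta}}_2^2$ around $w_0$ to make the recurrence linear in $B_i$.
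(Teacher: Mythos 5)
Your proof is correct. You arrive at the same bound via a slightly different route than the paper: you expand $w_T - w_0$ as a cumulative sum over all $n$ steps, bound the drift term with Cauchy--Schwarz across steps (picking up the factor $k$ explicitly), use martingale orthogonality to kill the noise cross-terms, and then close the self-reference with a discrete Gronwall inequality on $B_k := \E{\lrn{w_{k\delta}-w_0}_2^2}$. The paper instead sets up a one-step multiplicative recursion: it isolates the drift around $\nabla U(w_0)$, applies Young's inequality with parameter $n$ in the form $(a+b)^2 \le (1+\tfrac{1}{n})a^2 + (1+n)b^2$, and uses the one-step Lipschitz contraction $\lrn{z - \delta(\nabla U(w_{k\delta}) - \nabla U(w_0))}_2 \le (1+\delta L)\lrn{z}_2$ to get a per-step factor $e^{1/n + 2\delta L}$, which is then unrolled over $k=0,\dots,n-1$. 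Both yield the same $T^2L^2\E{\lrn{w_0}_2^2} + T\beta^2$ scaling; your Cauchy--Schwarz/Gronwall version is somewhat more direct and avoids the Young-with-parameter-$n$ trick, while the paper's per-step recursion more cleanly reflects the algorithmic structure. Your constant-tracking is sound: the Gronwall factor is $e^{4T^2L^2}\le e$ under $T\le \tfrac{1}{2L}$, giving $B_n \le 4e\,T^2L^2\E{\lrn{w_0}_2^2} + 2e\,T\beta^2$, which is within the stated $16(\cdot)$ bound, and the reduction to $32T\beta^2$ under the extra hypotheses is correct.
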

    \begin{proof}
        \begin{align*}
        & \E{\lrn{w_{(k+1)\delta} - w_0}_2^2}\\
        =& \E{\lrn{w_{k\delta} - \delta \nabla U(w_{k\delta}) + \sqrt{\delta} \xi\lrp{w_{k\delta}, \eta_k} - w_0}_2^2}\\
        =& \E{\lrn{w_{k\delta} - \delta \nabla U(w_{k\delta}) - w_0}_2^2}  + \delta \E{\lrn{\xi\lrp{w_{k\delta}, \eta_k}}_2^2} \numberthis \label{e:t:sfddd}
        \end{align*}
        We can bound $\delta \E{\lrn{\xi\lrp{w_{k\delta}, \eta_k}}_2^2} \leq \delta \beta^2$ by item 2 of Assumption \ref{ass:xi_properties}.
        \begin{align*}
        & \E{\lrn{w_{k\delta} - \delta \nabla U(w_{k\delta}) - w_0}_2^2}\\
        \leq& \E{\lrp{\lrn{w_{k\delta} - w_0 - \delta \lrp{\nabla U(w_{k\delta}) - \nabla U(w_0)}}_2 + \delta \lrn{\nabla U(w_0)}_2}^2}\\
        \leq& \lrp{1 + \frac{1}{n}}\E{\lrn{w_{k\delta} - w_0 - \delta \lrp{\nabla U(w_{k\delta}) - \nabla U(w_0)}}_2^2 }\\
        &\quad + (1+n)\delta^2 \E{\lrn{\nabla U(w_0)}_2^2}\\
        \leq& \lrp{1 + \frac{1}{n}} \lrp{1 + \delta L}^2 \E{\lrn{w_{k\delta} - w_0}_2^2} + 2n\delta^2 L^2 \E{\lrn{w_0}_2^2}\\
        \leq& e^{1/n + 2\delta L }\E{\lrn{w_{k\delta} - w_0}_2^2} + 2n\delta^2 L^2 \E{\lrn{w_0}_2^2}
        \end{align*}
        where the first inequality is by triangle inequality, the second inequality is by Young's inequality, the third inequality is by item 1 of Assumption \ref{ass:U_properties}.

        Inserting the above into \eqref{e:t:sfddd} gives
        \begin{align*}
        \E{\lrn{w_{(k+1)\delta} - w_0}_2^2} \leq e^{1/n + 2\delta L }\E{\lrn{w_{k\delta} - w_0}_2^2} + 2n\delta^2 L^2 \E{\lrn{w_0}_2^2} + \delta \beta^2
        \end{align*}

        Applying the above recursively for $k=1...n$, we see that
        \begin{align*}
        & \E{\lrn{w_{n\delta} - w_0}_2^2}\\
        \leq& \sum_{k=0}^{n-1} e^{(n-k) \cdot (1/n + 2\delta L)} \cdot \lrp{2n\delta^2 L^2 \E{\lrn{w_0}_2^2} + \delta \beta^2}\\
        \leq& 16 \lrp{n^2 \delta^2 L^2 \E{\lrn{w_0}_2^2} + n\delta \beta^2}\\
        =& 16 \lrp{T^2 L^2 \E{\lrn{w_0}_2^2} + T \beta^2}
        \end{align*}

    \end{proof}

    \end{subsection}
    \begin{subsection}{Discretization Bounds}
        \label{ss:discretization_bonuds}
        \begin{lemma}
            \label{l:vt-wt}
            Let $v_{k\delta}$ and $w_{k\delta}$ be as defined in \eqref{e:coupled_4_processes_v} and \eqref{e:coupled_4_processes_w}. Then for any $\delta,n$, such that $T:= n\delta \leq \frac{1}{16L}$,
            \begin{align*}
            \E{\lrn{v_{T} - w_{T}}_2^2} \leq 8 \lrp{2T^2 L^2 \lrp{T^2L^2 \E{\lrn{v_0}_2^2} + T\beta^2} + T L_\xi^2 \lrp{16 \lrp{T^2 L^2 \E{\lrn{w_0}_2^2} + T \beta^2}}}
            \end{align*}
            If we additionally assume that $\E{\lrn{v_0}_2^2} \leq 8 \lrp{R^2 + \beta^2/m}$, $\E{\lrn{w_0}_2^2} \leq 8 \lrp{R^2 + \beta^2/m}$ and $T \leq \frac{\beta^2}{8L^2\lrp{R^2 + \beta^2/m}}$, then
            \begin{align*}
            \E{\lrn{v_T - w_T}_2^2} \leq 32 \lrp{T^2 L^2 + TL_\xi^2} T\beta^2
            \end{align*}

        \end{lemma}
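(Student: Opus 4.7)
The plan is to start from the explicit formulas in \eqref{e:coupled_4_processes_v} and \eqref{e:coupled_4_processes_w}. Using $v_0=w_0$ and the fact that both processes share the same random data $\eta_i$, subtracting gives the exact identity
\begin{align*}
v_T-w_T = \delta\sum_{i=0}^{n-1}\lrp{\nabla U(w_{i\delta})-\nabla U(w_0)} + \sqrt{\delta}\sum_{i=0}^{n-1}\lrp{\xi(w_0,\eta_i)-\xi(w_{i\delta},\eta_i)}.
\end{align*}
Applying Young's inequality $\lrn{a+b}_2^2\le 2\lrn{a}_2^2+2\lrn{b}_2^2$ separates the drift contribution from the noise contribution, which I would bound independently.

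For the drift term I would use Cauchy--Schwarz in the form $\lrn{\sum_{i=0}^{n-1}u_i}_2^2\le n\sum_i\lrn{u_i}_2^2$ together with the $L$-Lipschitzness of $\nabla U$ (Assumption~\ref{ass:U_properties}.1) to get
\begin{align*}
\E{\lrn{\delta\sum_{i=0}^{n-1}\lrp{\nabla U(w_{i\delta})-\nabla U(w_0)}}_2^2} \le \delta T L^2 \sum_{i=0}^{n-1}\E{\lrn{w_{i\delta}-w_0}_2^2}.
\end{align*}
For the noise term, the crucial observation is that the summands $\xi(w_0,\eta_i)-\xi(w_{i\delta},\eta_i)$ form a martingale difference sequence with respect to the filtration $\mathcal{F}_i:=\sigma(\eta_0,\ldots,\eta_{i-1})$: both $\xi(w_0,\eta_i)$ and $\xi(w_{i\delta},\eta_i)$ have zero conditional mean (by Assumption~\ref{ass:xi_properties}.1 and the $\mathcal{F}_i$-measurability of $w_{i\delta}$). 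The cross-terms then vanish in expectation, giving
\begin{align*}
\E{\lrn{\sqrt{\delta}\sum_{i=0}^{n-1}\lrp{\xi(w_0,\eta_i)-\xi(w_{i\delta},\eta_i)}}_2^2} = \delta\sum_{i=0}^{n-1}\E{\lrn{\xi(w_0,\eta_i)-\xi(w_{i\delta},\eta_i)}_2^2} \le \delta L_\xi^2\sum_{i=0}^{n-1}\E{\lrn{w_{i\delta}-w_0}_2^2},
\end{align*}
where the inequality uses Assumption~\ref{ass:xi_properties}.3.

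To close the estimate I apply Lemma~\ref{l:divergence_wt} at each intermediate time $i\delta\le T$, which yields the uniform bound $\E{\lrn{w_{i\delta}-w_0}_2^2}\le 16\lrp{T^2L^2\E{\lrn{w_0}_2^2}+T\beta^2}$. Summing over $i$ contributes a factor of $n$ to each piece, which combines with the prefactor $\delta$ via $n\delta=T$ to produce a factor $T^2L^2$ in the drift piece and $TL_\xi^2$ in the noise piece; using $v_0=w_0$, the resulting expression can be cast exactly in the form of the first claim. The second claim then follows by substituting $\E{\lrn{w_0}_2^2}\le 8\lrp{R^2+\beta^2/m}$ and invoking $T\le \beta^2/(8L^2(R^2+\beta^2/m))$, which makes $T^2L^2\E{\lrn{w_0}_2^2}\le T\beta^2$ and collapses the divergence bound to $O(T\beta^2)$.

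The main obstacle is the martingale-cancellation step for the noise term: if one treated that sum naively via Cauchy--Schwarz, one would incur an extra factor of $n=T/\delta$, which blows up as $\delta\to 0$ and destroys the desired $\sqrt{\delta}$-like scaling. Identifying the martingale-difference structure, and in particular exploiting the $\mathcal{F}_i$-measurability of $w_{i\delta}$ to conclude $\E{\xi(w_{i\delta},\eta_i)\mid \mathcal{F}_i}=0$, is what keeps the noise contribution of the right order in the step size.
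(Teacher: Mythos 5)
Your proof is correct in its essential ideas but takes a genuinely different route from the paper. You expand $v_T-w_T$ in closed form, separate drift from noise by Young's inequality, and then bound the noise sum by exploiting its martingale-difference structure and the drift sum by Cauchy--Schwarz plus $L$-Lipschitzness, plugging in Lemma~\ref{l:divergence_wt}. The paper instead builds the bound recursively in $k$, establishing a per-step relation of the form $\E{\lrn{v_{(k+1)\delta}-w_{(k+1)\delta}}_2^2}\le e^{1/n+2\delta L}\E{\lrn{v_{k\delta}-w_{k\delta}}_2^2}+\mathrm{err}_k$ and unrolling; the per-step conditional orthogonality $\E{\xi(w_0,\eta_k)-\xi(w_{k\delta},\eta_k)\mid\F_k}=0$ plays the same role as your global martingale cancellation, but is applied one step at a time so that no Young factor of $2$ at the top level is needed. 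Your approach is arguably cleaner, but it pays two prices in the constants. First, your Young split of the two sums loses a factor of $2$ that the paper avoids via per-step orthogonality. Second, and more subtly, your decomposition forces the drift error $\nabla U(w_{i\delta})-\nabla U(w_0)$ to be controlled via the divergence of $w$ (Lemma~\ref{l:divergence_wt}, which carries a factor $16$), whereas the paper adds and subtracts $\delta\nabla U(v_{k\delta})$ so that the drift error $\nabla U(v_{k\delta})-\nabla U(v_0)$ is controlled via the divergence of $v$ (Lemma~\ref{l:divergence_vt}, which carries no such factor). Concretely, your bound comes out as $32\lrp{T^2L^2+TL_\xi^2}\lrp{T^2L^2\E{\lrn{w_0}_2^2}+T\beta^2}$, whose drift piece $32T^2L^2(\cdots)$ is twice the $16T^2L^2(\cdots)$ appearing in the lemma's stated bound, so for small $L_\xi$ your bound can strictly exceed the stated one. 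This is a purely constant-factor discrepancy: the scaling in $T,\delta,L,L_\xi,\beta$ is identical and the downstream use of the lemma (inside Lemma~\ref{l:non_gaussian_contraction_anisotropic}) would be unaffected up to absolute constants, but you should either flag that your constants are looser or switch the drift error to be measured against $v$ (or replace the uniform bound $\E{\lrn{w_{i\delta}-w_0}_2^2}\le 16(T^2L^2\E{\lrn{w_0}_2^2}+T\beta^2)$ by the pointwise-in-$i$ bound and carry out the $\sum_i i\delta$ summations) to recover the stated prefactors.
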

        \begin{proof}
            Using the fact that conditioned on the randomness up to step $k$, $\E{\xi(v_0,\eta_{k+1}) - \xi(w_{k\delta}, \eta_{k+1})}=0$, we can show that for any $k\leq n$,
            \begin{align*}
            & \E{\lrn{v_{(k+1)\delta} - w_{(k+1)\delta}}_2^2}\\
            =& \E{\lrn{v_{k\delta} - \delta \nabla U(v_0) - w_{k\delta} +\delta \nabla U(w_{k\delta}) + \sqrt{\delta} \xi(w_0, \eta_k) - \sqrt{\delta} \xi(w_{k\delta}, \eta_k)}_2^2}\\
            =& \E{\lrn{v_{k\delta} - \delta \nabla U(v_0) - w_{k\delta} +\delta \nabla U(w_{k\delta})}_2^2} + \delta \E{\lrn{\xi(w_0, \eta_k) - \xi(w_{k\delta}, \eta_k)}_2^2}
            \numberthis \label{e:t:mkqwm}
            \end{align*}
            where the first inequality is by (Assumption on smoothness of U and xi).

            Using (smoothness of xi), and Lemma \ref{l:divergence_vt}, we can bound
            \begin{align*}
            & \delta \E{\lrn{\xi(w_0, \eta_k) - \xi(w_{k\delta}, \eta_k)}_2^2}\\
            \leq& \delta L_\xi^2 \E{\lrn{w_{k\delta} - w_0}_2^2}\\
            \leq& \delta L_\xi^2 \lrp{16 \lrp{T^2 L^2 \E{\lrn{w_0}_2^2} + T \beta^2}}
            \end{align*}

            We can also bound
            \begin{align*}
            & \E{\lrn{v_{k\delta} - \delta \nabla U(v_0) - w_{k\delta} +\delta \nabla U(w_{k\delta})}_2^2}\\
            \leq& \lrp{1 + \frac{1}{n}}\E{\lrn{v_{k\delta} - \delta \nabla U(v_{k\delta}) - w_{k\delta} +\delta \nabla U(w_{k\delta})}_2^2} + (1+n) \delta^2 \E{\lrn{\nabla U(v_{k\delta}) - \nabla U(v_0)}_2^2}\\
            \leq& \lrp{1+ \frac{1}{n}}\lrp{1+ \delta L}^2 \E{\lrn{v_{k\delta} - w_{k\delta}}_2^2} + 2n\delta^2 L^2 \E{\lrn{v_{k\delta} - v_0}_2^2}\\
            \leq& e^{1/n + 2\delta L}E{\lrn{v_{k\delta} - w_{k\delta}}_2^2} + 2n\delta^2 L^2 \E{\lrn{v_{k\delta} - v_0}_2^2}\\
            \leq& e^{1/n + 2\delta L}E{\lrn{v_{k\delta} - w_{k\delta}}_2^2} + 2n\delta^2 L^2 \lrp{T^2L^2 \E{\lrn{v_0}_2^2} + T\beta^2}
            \end{align*}
            where the first inequality is by Young's inequality and the second inequality is by item 1 of Assumption \ref{ass:U_properties}, the fourth inequality uses Lemma \ref{l:divergence_vt}.

            Substituting the above two equation blocks into \eqref{e:t:mkqwm}, and applying recursively for $k=0...n-1$ gives
            \begin{align*}
            & \E{\lrn{v_{T} - w_{T}}_2^2} \\
            =& \E{\lrn{v_{n\delta} - w_{n\delta}}_2^2} \\
            \leq& e^{1+2n\delta L} \lrp{2n^2\delta^2 L^2 \lrp{T^2L^2 \E{\lrn{v_0}_2^2} + T\beta^2} + n\delta L_\xi^2 \lrp{16 \lrp{T^2 L^2 \E{\lrn{w_0}_2^2} + T \beta^2}}}\\
            \leq& 8 \lrp{2T^2 L^2 \lrp{T^2L^2 \E{\lrn{v_0}_2^2} + T\beta^2} + T L_\xi^2 \lrp{16 \lrp{T^2 L^2 \E{\lrn{w_0}_2^2} + T \beta^2}}}
            \end{align*}

            the last inequality is by noting that $T = n\delta \leq \frac{1}{4L}$.
        \end{proof}
    \end{subsection}

    \begin{section}{Regularity of $M$ and $N$}\label{ss:mnregularity}
        \begin{lemma}{\label{l:M_is_regular}}
            \begin{align*}
            &1.\ \tr\lrp{M(x)^2} \leq \beta^2\\
            &2.\ \tr\lrp{(M(x)^2 - M(y)^2)^2} \leq 16 \beta^2 L_\xi^2  \|x-y\|_2^2\\
            &3.\ \tr\lrp{(M(x)^2 - M(y)^2)^2} \leq 32\beta^3 L_\xi \|x-y\|_2
            \end{align*}
        \end{lemma}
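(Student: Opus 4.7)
The plan is to reduce each bound to a pointwise (in $\eta$) estimate on $\xi$ and then use Jensen's inequality to move the trace inside the expectation. Throughout, set $\xi_x := \xi(x,\eta)$, $\xi_y := \xi(y,\eta)$, and $D := \xi_x - \xi_y$, so that Assumption \ref{ass:xi_properties} gives $\|\xi_x\|_2, \|\xi_y\|_2 \leq \beta$ and $\|D\|_2 \leq L_\xi \|x-y\|_2$ almost surely.

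For item 1, note that since $M(x)^2 = \Ep{\eta}{\xi_x \xi_x^T}$ by definition, linearity of trace and expectation give $\tr(M(x)^2) = \Ep{\eta}{\tr(\xi_x \xi_x^T)} = \Ep{\eta}{\lrn{\xi_x}_2^2} \leq \beta^2$, using Assumption \ref{ass:xi_properties}.2.

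For items 2 and 3, I would first write $M(x)^2 - M(y)^2 = \Ep{\eta}{\xi_x \xi_x^T - \xi_y \xi_y^T}$ and expand the pointwise integrand as $\xi_x \xi_x^T - \xi_y \xi_y^T = D \xi_x^T + \xi_y D^T$. Since $\tr(A^2) = \lrn{A}_F^2$ for symmetric $A$, Jensen's inequality applied to the convex function $\lrn{\cdot}_F^2$ yields
\begin{align*}
\tr\lrp{(M(x)^2 - M(y)^2)^2} \leq \Ep{\eta}{\lrn{D\xi_x^T + \xi_y D^T}_F^2}.
\end{align*}
Then $\lrn{D\xi_x^T + \xi_y D^T}_F^2 \leq 2 \lrn{D\xi_x^T}_F^2 + 2 \lrn{\xi_y D^T}_F^2 = 2 \lrn{D}_2^2 \lrn{\xi_x}_2^2 + 2 \lrn{\xi_y}_2^2 \lrn{D}_2^2 \leq 4 \beta^2 \lrn{D}_2^2$ by $(a+b)^2 \leq 2a^2 + 2b^2$ and the almost sure bound on $\xi$.

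For item 2, I would bound $\lrn{D}_2^2 \leq L_\xi^2 \lrn{x-y}_2^2$ directly, giving $\tr((M(x)^2 - M(y)^2)^2) \leq 4 \beta^2 L_\xi^2 \lrn{x-y}_2^2 \leq 16 \beta^2 L_\xi^2 \lrn{x-y}_2^2$. For item 3, I would instead split $\lrn{D}_2^2 = \lrn{D}_2 \cdot \lrn{D}_2 \leq (2\beta)(L_\xi \lrn{x-y}_2) = 2\beta L_\xi \lrn{x-y}_2$, where the first factor uses the triangle inequality $\lrn{D}_2 \leq \lrn{\xi_x}_2 + \lrn{\xi_y}_2 \leq 2\beta$ and the second uses the Lipschitz bound. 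This yields $\tr((M(x)^2 - M(y)^2)^2) \leq 8 \beta^3 L_\xi \lrn{x-y}_2 \leq 32 \beta^3 L_\xi \lrn{x-y}_2$. Nothing here is a real obstacle; the only minor care is keeping the rank-1 decomposition $\xi_x \xi_x^T - \xi_y \xi_y^T = D\xi_x^T + \xi_y D^T$ straight so that both pieces of the Frobenius norm inherit one factor of $\beta$ and one factor of $\lrn{D}_2$.
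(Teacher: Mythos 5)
Your proof is correct, and it takes a genuinely cleaner route than the paper's. The paper writes $\tr\lrp{(M(x)^2-M(y)^2)^2}$ as a double expectation $\Ep{\eta,\eta'}{\tr((u_\eta u_\eta^T - v_\eta v_\eta^T)(u_{\eta'}u_{\eta'}^T - v_{\eta'}v_{\eta'}^T))}$, then expands each factor via the symmetric three-term decomposition $uu^T - vv^T = (u-v)v^T + v(u-v)^T + (u-v)(u-v)^T$, producing nine cross terms which are bounded one by one using $\lrn{v}_2\leq\beta$ and $\lrn{u-v}_2 \leq \min\{2\beta, L_\xi\lrn{x-y}_2\}$; the constants $16$ and $32$ are exactly the sums of the per-term bounds. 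You instead push the square outside the expectation with Jensen (convexity of $\lrn{\cdot}_F^2$), reducing to a single expectation of $\lrn{D\xi_x^T + \xi_y D^T}_F^2$, and use the asymmetric two-term telescoping identity $\xi_x\xi_x^T - \xi_y\xi_y^T = D\xi_x^T + \xi_y D^T$ with the exact rank-one formula $\lrn{uv^T}_F^2 = \lrn{u}_2^2\lrn{v}_2^2$. This buys you a shorter argument and sharper constants ($4\beta^2 L_\xi^2\lrn{x-y}_2^2$ and $8\beta^3 L_\xi\lrn{x-y}_2$ in place of $16$ and $32$), at the negligible cost of invoking Jensen rather than remaining purely algebraic. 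Both approaches are elementary; yours would be the one I'd keep.
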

        \begin{proof}
            In this proof, we will use the fact that $\xi(\cdot,\eta)$ is $L_\xi$-Lipschitz from Assumption \ref{ass:xi_properties}.

            The first property is easy to see:
            \begin{align*}
            & \tr\lrp{M(x)^2}\\
            =& \tr\lrp{\Ep{\eta}{\xi(x,\eta) \xi(x,\eta)^T}}\\
            =& \Ep{\eta}{\tr\lrp{{\xi(x,\eta) \xi(x,\eta)^T}}}\\
            =& \Ep{\eta}{\lrn{\xi(x,\eta)}_2^2}\\
            \leq& \beta^2
            \end{align*}

            We now prove the second and third claims. Consider a fixed $x$ and fixed $y$, let $u_{\eta} := \xi(x,\eta)$, $v_{\eta} := \xi(y,\eta)$. Then

            \begin{align*}
            & \tr\lrp{\lrp{M(x)^2 - M(y)^2}^2 }\\
            =& \tr\lrp{\lrp{\Ep{\eta}{u_\eta u_\eta^T - v_\eta v_\eta^T}}^2}\\
            =& \tr\lrp{\Ep{\eta, \eta'}{\lrp{u_\eta u_\eta^T - v_\eta v_\eta^T} \lrp{u_{\eta'}u_{\eta'}^T - v_{\eta'} v_{\eta'}^T}}}\\
            =& \Ep{\eta,\eta'}{\tr\lrp{\lrp{u_\eta u_\eta^T - v_\eta v_\eta^T} \lrp{u_{\eta'}u_{\eta'}^T - v_{\eta'} v_{\eta'}^T}}}
            \end{align*}

            For any fixed $\eta$ and $\eta'$, let's further simplify notation by letting $u,u',v,v'$ denote $u_\eta, u_{\eta'}, v_\eta, v_{\eta'}$. Thus
            \begin{align*}
            &\tr\lrp{\lrp{uu^T - vv^T} \lrp{u'u'^T - v'v'^T}}\\
            =& \tr\lrp{ \lrp{(u-v)v^T + v(u-v)^T + (u-v)(u-v)^T}   \lrp{(u'-v')v'^T + v'(u'-v')^T + (u'-v')(u'-v')^T} }\\
            =& \tr\lrp{ (u-v) v^T (u'-v') v'^T} + \tr\lrp{ (u-v) v^T v'(u'-v')^T } + \tr\lrp{ (u-v) v^T  (u'-v')(u'-v')^T }\\
            &\quad + \tr\lrp{ v(u-v)^T (u'-v') v'^T} + \tr\lrp{ v(u-v)^T v'(u'-v')^T } + \tr\lrp{ v(u-v)^T (u'-v')(u'-v')^T }\\
            &\quad + \tr\lrp{ (u-v)(u-v)^T (u'-v') v'^T} + \tr\lrp{ (u-v)(u-v)^T v'(u'-v')^T }\\
            &\quad  + \tr\lrp{ (u-v)(u-v)^T (u'-v')(u'-v')^T }\\
            \leq& \min\lrbb{16 \beta^2 L_\xi^2 \lrn{x-y}_2^2, 32\beta^3 L_{\xi}\|x-y\|_2}
            \end{align*}

            Where the last inequality uses Assumption \ref{ass:xi_properties}.2 and \ref{ass:xi_properties}.3; in particular, $\lrn{v}_2\leq \beta$ and $\lrn{u-v}_2 \leq \min\lrbb{2\beta, L_\xi \|x-y\|_2}$. This proves 2. and 3. of the Lemma statement.
        \end{proof}

        \begin{lemma}\label{l:N_is_regular}
            Let $N(x)$ be as defined in \eqref{d:N} and $\LN$ be as defined in \eqref{d:constants}. Then
            \begin{align*}
            1.\ &\tr\lrp{N(x)^2} \leq \beta^2\\
            2.\ &\tr\lrp{\lrp{N(x) - N(y)}^2} \leq \LN^2\lrn{x-y}_2^2\\
            3.\ &\tr\lrp{\lrp{N(x) - N(y)}^2} \leq \frac{8 \beta^2}{\cm} \cdot \LN \lrn{x-y}_2.
            \end{align*}
        \end{lemma}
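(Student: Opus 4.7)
The first claim follows immediately: since $N(x)^2 = M(x)^2 - c_m^2 I$, we will have $\tr(N(x)^2) = \tr(M(x)^2) - c_m^2 d \leq \tr(M(x)^2) \leq \beta^2$ by Lemma \ref{l:M_is_regular}.1.

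For claims 2 and 3, the plan is to transfer the Lipschitz bounds on $M(\cdot)^2$ to $N(\cdot)$ via the Sylvester-style identity $N(x)^2 - N(y)^2 = N(x) D + D N(y)$, where $D := N(x) - N(y)$ is symmetric. Noting that $N(x)^2 - N(y)^2 = M(x)^2 - M(y)^2$ (the $c_m^2 I$ terms cancel), I will take the trace of $D$ multiplied into both sides and use the cyclic property to get
\begin{align*}
    \tr\bigl(D \lrp{M(x)^2 - M(y)^2}\bigr) = \tr(N(x) D^2) + \tr(N(y) D^2).
\end{align*}
Assumption \ref{ass:xi_properties}.4 gives $M(x)^2 \succ 4 c_m^2 I$, so $N(x)^2 \succ 3 c_m^2 I$ and hence $N(x) \succ \sqrt{3}\, c_m I$ (and likewise for $N(y)$). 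Applying the standard psd trace inequality $\tr(AB) \geq \lambda_{\min}(A) \tr(B)$ (valid when $A, B \succeq 0$) with $B = D^2$ then lower-bounds the right-hand side by $2\sqrt{3}\, c_m \tr(D^2)$. A Cauchy--Schwarz step on the left, combined with this, will yield
\begin{align*}
    \tr(D^2) \leq \frac{1}{12\, c_m^2}\, \tr\bigl((M(x)^2 - M(y)^2)^2\bigr).
\end{align*}

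Substituting Lemma \ref{l:M_is_regular}.2 then gives $\tr(D^2) \leq \frac{16 \beta^2 L_\xi^2}{12 c_m^2} \lrn{x-y}_2^2 \leq L_N^2 \lrn{x-y}_2^2$, which proves claim 2 (recall $\LN = 4\beta L_\xi / c_m$); substituting Lemma \ref{l:M_is_regular}.3 gives $\tr(D^2) \leq \frac{32\beta^3 L_\xi}{12 c_m^2} \lrn{x-y}_2 \leq \frac{8 \beta^2}{c_m} \LN \lrn{x-y}_2$, which proves claim 3. The only delicate step is the matrix-square-root Lipschitz bound obtained through the Sylvester-style identity, and it relies crucially on the uniform positive-definiteness $N(x)^2 \succ 3 c_m^2 I$ coming from the slack $M(x) \succ 2 c_m I$ in Assumption \ref{ass:xi_properties}.4; without that slack, the lower bound on $\tr(N(x) D^2) + \tr(N(y) D^2)$ degenerates and the argument collapses.
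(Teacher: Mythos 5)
Your proof is correct, and it takes a genuinely different route from the paper's. The paper proves claims~2 and~3 by invoking Lemma~\ref{l:eldan-matrix} (a trace inequality for matrix square roots borrowed from \cite{eldan2018clt}), which gives $\tr\lrp{(N(x)-N(y))^2} \leq \tr\lrp{(M(x)^2-M(y)^2)^2\,N(x)^{-2}}$, and then bounds $N(x)^{-2}\preceq c_m^{-2}I$ to land on $\tr(D^2)\leq c_m^{-2}\tr\lrp{(M(x)^2-M(y)^2)^2}$. You instead derive a Lipschitz bound on the matrix square root from scratch: setting $D=N(x)-N(y)$, the Sylvester identity $N(x)^2-N(y)^2=N(x)D+DN(y)$ together with $\tr(AB)\geq\lambda_{\min}(A)\tr(B)$ for psd $A,B$ gives $\tr\lrp{D\,(M(x)^2-M(y)^2)}\geq 2\sqrt{3}c_m\tr(D^2)$, and Cauchy--Schwarz on the left then yields $\tr(D^2)\leq \frac{1}{12c_m^2}\tr\lrp{(M(x)^2-M(y)^2)^2}$. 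This is more self-contained (no appeal to an external matrix lemma), uses the full slack $N(x)^2\succ 3c_m^2 I$ rather than the looser $\lambda_{\min}(N(x)^2)\geq c_m^2$ the paper uses, and delivers a constant that is a factor of $12$ tighter than the paper's intermediate bound --- more than enough margin to absorb into $L_N$ and conclude claims 2 and 3. Claim 1 is proved the same way in both.
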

        \begin{proof}[Proof of Lemma \ref{l:N_is_regular}]
            The first inequality holds because $N(x)^2 := M(x)^2 - \cm^2 I$, and then applying Lemma \ref{l:M_is_regular}.1, and the fact that $\tr\lrp{M(x)^2 - \cm^2 I } \leq \tr\lrp{M(x)^2}$ by Assumption \ref{ass:xi_properties}.4.

            The second inequality is a immediate consequence of Lemma \ref{l:eldan-matrix}, Lemma \ref{l:M_is_regular}.2, and the fact that $\lambda_{min} \lrp{N(x)^2} = \lambda_{min} \lrp{M(x)^2 - \cm^2} \geq \cm^2$ by Assumption \ref{ass:xi_properties}.4.

            The proof for the third inequality is similar to the second inequality, and follows from Lemma \ref{l:M_is_regular} and Lemma \ref{l:eldan-matrix}.

        \end{proof}

        \begin{lemma}[Simplified version of Lemma 1 from \cite{eldan2018clt}]
            \label{l:eldan-matrix}
            Let $A$, $B$ be positive definite matrices. Then
            \begin{align*}
            \tr\lrp{\lrp{\sqrt{A} - \sqrt{B}}^2} \leq \tr\lrp{(A-B)^2 A^{-1}}
            \end{align*}
        \end{lemma}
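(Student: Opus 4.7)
The plan is to set $D := \sqrt{A} - \sqrt{B}$ and $C := A - B$. Both are symmetric since $A, B$ are symmetric positive-definite, so $\sqrt A, \sqrt B$ are symmetric as well. My starting point will be the Sylvester-type identity
\begin{align*}
\sqrt{A}\, D + D\, \sqrt{B} = C,
\end{align*}
which follows immediately from $A - B = (\sqrt A)^2 - (\sqrt B)^2 = \sqrt{A}(\sqrt{A} - \sqrt{B}) + (\sqrt{A} - \sqrt{B})\sqrt{B}$. Premultiplying by $A^{-1/2}$ rearranges this to $D = A^{-1/2} C - A^{-1/2} D B^{1/2}$.

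The next step is to take the Frobenius inner product of this identity with $D$ (multiply on the left by $D$ and take the trace), which gives
\begin{align*}
\tr(D^2) = \tr(D A^{-1/2} C) - \tr(D A^{-1/2} D B^{1/2}).
\end{align*}
The crucial observation is that the subtracted cross-term is nonnegative: using cyclicity of the trace and the symmetry of $D$,
\begin{align*}
\tr(D A^{-1/2} D B^{1/2}) = \tr\lrp{A^{-1/4} D B^{1/2} D A^{-1/4}} = \tr\lrp{\lrp{B^{1/4} D A^{-1/4}}^T \lrp{B^{1/4} D A^{-1/4}}} \geq 0.
\end{align*}
Consequently $\tr(D^2) \leq \tr(D A^{-1/2} C)$.

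To finish, I would apply Cauchy--Schwarz for the Frobenius inner product to the remaining term,
\begin{align*}
\tr(D A^{-1/2} C) \leq \sqrt{\tr(D^2)} \cdot \sqrt{\tr(A^{-1/2} C \cdot C A^{-1/2})} = \sqrt{\tr(D^2)} \cdot \sqrt{\tr(C^2 A^{-1})},
\end{align*}
where the final equality uses cyclicity of the trace together with the symmetry of $C$. Combining this with the previous inequality gives $\tr(D^2) \leq \sqrt{\tr(D^2)\,\tr(C^2 A^{-1})}$, hence $\tr(D^2) \leq \tr((A - B)^2 A^{-1})$, which is the claim.

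The only subtle step is noticing that the cross-term $\tr(D A^{-1/2} D B^{1/2})$ is a manifest square once one splits the PSD weights symmetrically as $A^{-1/2} = A^{-1/4} A^{-1/4}$ and $B^{1/2} = B^{1/4} B^{1/4}$ and uses that $D$ is symmetric. Once that observation is in hand, the remainder is routine algebra plus Cauchy--Schwarz, and I do not anticipate a serious obstacle. A small sanity check: the asymmetry of the bound (the $A^{-1}$ rather than a symmetric mean of $A^{-1}, B^{-1}$) is exactly what falls out of premultiplying the Sylvester identity by $A^{-1/2}$ rather than $B^{-1/2}$, so the proof structure matches the statement.
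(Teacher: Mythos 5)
Your proof is correct. The paper itself does not prove Lemma \ref{l:eldan-matrix}; it only cites it as Lemma 1 of \cite{eldan2018clt}, so there is no in-paper argument to compare against. Your derivation is self-contained and sound: the Sylvester identity $\sqrt{A}\,D + D\sqrt{B} = A - B$ is verified directly, the rewriting $\tr\bigl(DA^{-1/2}DB^{1/2}\bigr) = \tr\bigl((B^{1/4}DA^{-1/4})^{T}(B^{1/4}DA^{-1/4})\bigr) \geq 0$ correctly uses symmetry of $D$, $A^{-1/4}$, $B^{1/4}$ and cyclicity, and the closing Cauchy--Schwarz step (together with $\tr(CA^{-1}C)=\tr(C^2A^{-1})\geq 0$, which also covers the degenerate case $\tr(D^2)=0$) gives the claimed bound. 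This is a clean and economical proof of a result the paper imports without proof.
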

    \end{section}


\end{section}
\begin{section}{Defining $f$ and related inequalities}
    \label{s:defining-q}
    
    In this section, we define the Lyapunov function $f$ which is central to the proof of our main results. Here, we give an overview of the various functions defined in this section:
    \begin{enumerate}
        \item $g(z): \Re^d \to \Re^+$: A smoothed version of $\lrn{z}_2$, with bounded derivatives up to third order.
        \item $q(r): \Re^+ \to \Re^+$: A concave potential function, similar to the one defined in \cite{eberle2016reflection}, which has bounded derivatives up to third order everywhere except at $r=0$.
        \item $f(z) = q(g(z)): \Re^d \to \Re^+$, a concave function which upper and lower bounds $\lrn{z}_2$ within a constant factor, has bounded derivatives up to third order everywhere.
    \end{enumerate}

    \begin{lemma}[Properties of $f$]
        \label{l:fproperties}
        Let $\epsilon$ satisfy $\epsilon \leq \frac{\Rq}{\aq\Rq^2 + 1}$. We define the function
        \begin{align*}
        f(z) := q(g(z))
        \end{align*}
        Where $q$ is as defined in \eqref{d:f} Appendix \ref{ss:defining-q}, and $g$ is as defined in Lemma \ref{l:gproperties} (with parameter $\epsilon$). Then
        \begin{enumerate}
            \item
            \begin{enumerate}
                \item $\nabla f(z) = q'(g(z)) \cdot \nabla g(z)$
                \item For $\lrn{z}_2 \geq 2\epsilon$, $\nabla f(z) = q'(g(z)) \frac{z}{\|z\|_2}$
                \item For all $z$, $\lrn{\nabla f(z)}_2 \leq 1$.
            \end{enumerate}

            \item
            \begin{enumerate}
                \item $\nabla^2 f(z) = q''(g(z)) \nabla g(z) \nabla g(z)^T + q'(g(z)) \nabla^2 g(z)$
                \item For $r\geq 2\epsilon$,
                $\nabla^2 f(z) = q''(g(z)) \frac{zz^T}{\|z\|_2^2} + q'(g(z)) \frac{1}{\|z\|_2} \lrp{I - \frac{zz^T}{\|z\|_2^2}}$
                \item For all $z$, $\lrn{\nabla^2 f(z)}_2 \leq \frac{2}{\epsilon}$
                \item For all $z,v$, $v^T \nabla^2 f(z) v \leq\frac{q'(g(z))}{\|z\|_2}$
            \end{enumerate}
            \item For any $z$, $\lrn{\nabla^3 f(z)}_2 \leq \frac{9}{\epsilon^2}$
            \item For any $z$, $f(z) \in \lrb{\frac{1}{2}\exp\lrp{-\frac{7\aq\Rq^2}{3}} g(\|z\|_2), g(\|z\|_2)} \in \lrb{\frac{1}{2}\exp\lrp{-\frac{7\aq\Rq^2}{3}} (\|z\|_2 - 2\epsilon), \|z\|_2}$
        \end{enumerate}
    \end{lemma}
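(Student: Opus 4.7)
The plan is to derive items 1, 2, 3 by straightforward chain rule, reducing everything to the already-established bounds on $q,q',q'',q'''$ from Lemma \ref{l:qproperties} and on $g,\nabla g,\nabla^2 g,\nabla^3 g$ from Lemma \ref{l:gproperties}. The final sandwich bound in item 4 will be obtained by composing the analogous sandwich bounds that $q$ and $g$ enjoy individually.

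For items 1(a) and 2(a), I would simply write
\[
\nabla f(z) = q'(g(z))\,\nabla g(z), \qquad \nabla^2 f(z) = q''(g(z))\,\nabla g(z)\nabla g(z)^T + q'(g(z))\,\nabla^2 g(z),
\]
and for items 1(b) and 2(b) substitute the explicit forms $\nabla g(z) = z/\lrn{z}_2$ and $\nabla^2 g(z) = \frac{1}{\lrn{z}_2}\lrp{I - zz^T/\lrn{z}_2^2}$ guaranteed by Lemma \ref{l:gproperties}.2(b) whenever $\lrn{z}_2 \geq 2\epsilon$. The bound 1(c) follows from $|q'| \leq 1$ (Lemma \ref{l:qproperties}) combined with $\lrn{\nabla g(z)}_2 \leq 1$ (Lemma \ref{l:gproperties}). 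For 2(c) I would use $|q''| \leq $ constant and $|q'| \leq 1$ together with the $O(1/\epsilon)$ bound on $\lrn{\nabla^2 g}_2$ from Lemma \ref{l:gproperties}, and then use the Young-type inequality $\nabla g(z)\nabla g(z)^T \preceq \lrn{\nabla g(z)}_2^2 I$ to reduce the first term to a scalar bound. The constant $2/\epsilon$ should drop out directly from the $1/\epsilon$ smoothing scale in $g$.

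Item 2(d) is the most delicate inequality and likely the main obstacle. The idea is to exploit the concavity of $q$: since $q''(g(z)) \leq 0$ (item 4 of Lemma \ref{l:qproperties}) and $\nabla g(z)\nabla g(z)^T$ is PSD, the first term of the Hessian contributes a nonpositive quadratic form, so
\[
v^T \nabla^2 f(z) v \leq q'(g(z))\, v^T \nabla^2 g(z)\, v.
\]
For $\lrn{z}_2 \geq 2\epsilon$, the explicit formula for $\nabla^2 g$ shows $v^T \nabla^2 g(z) v \leq 1/\lrn{z}_2$, giving the bound directly. For $\lrn{z}_2 < 2\epsilon$, we need the regularized formula for $\nabla^2 g$ from Lemma \ref{l:gproperties} and must check that $v^T \nabla^2 g(z) v \leq 1/\lrn{z}_2$ still holds (or more precisely that the resulting bound is at most $q'(g(z))/\lrn{z}_2$). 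This regime will likely require a short case analysis using the explicit smoothed form of $g$ near the origin, and is the step where I would expect to spend the most care.

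Item 3 again follows by chain rule: $\nabla^3 f$ is a sum of terms involving $q''', q'', q'$ times tensor products of $\nabla g, \nabla^2 g, \nabla^3 g$; each factor is controlled by Lemma \ref{l:qproperties} and Lemma \ref{l:gproperties}, producing an overall $O(1/\epsilon^2)$ bound, with the constant $9$ coming from adding up at most a handful of terms with unit-order prefactors. Finally, for item 4 I would chain two sandwich bounds: by Lemma \ref{l:gproperties} one has $g(z) \in [\lrn{z}_2 - 2\epsilon, \lrn{z}_2]$, and by Lemma \ref{l:qproperties} one has $q(r) \in [\tfrac{1}{2}\exp(-\tfrac{7\aq\Rq^2}{3}) r,\, r]$ on the relevant range of $r$; monotonicity of $q$ (since $q' \geq 0$) lets me compose these to obtain the stated two-sided bound on $f(z) = q(g(z))$.
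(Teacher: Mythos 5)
Your proposal is correct and follows essentially the same route as the paper: chain rule to propagate the bounds on $q',q'',q'''$ and $\nabla g, \nabla^2 g, \nabla^3 g$ from Lemmas \ref{l:qproperties} and \ref{l:gproperties}, concavity of $q$ (i.e.\ $q''\le 0$) to drop the rank-one part of the Hessian in item 2(d), and composition of the two sandwich bounds for item 4. Your caution about 2(d) when $\|z\|_2<2\epsilon$ is warranted but the check is short: from the explicit form of $h$ one verifies $h''(r)\le 1/r$ (this reduces to $r(2\epsilon-r)\le\epsilon^2$, i.e.\ $(\epsilon-r)^2\ge 0$, on $[\epsilon,2\epsilon]$, and to $r^2\le\epsilon^2$ on $[0,\epsilon]$) and $h'(r)/r\le 1/r$, so $\|\nabla^2 g(z)\|_2\le 1/\|z\|_2$ in fact holds for all $z$ and the desired bound $v^T\nabla^2 f(z)v\le q'(g(z))/\|z\|_2$ follows.
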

    \begin{proof}[Proof of Lemma \ref{l:fproperties}]
        \begin{enumerate}
            \item
            \begin{enumerate}
                \item chain rule
                \item Use definition of $\nabla g(z)$ from Lemma \ref{l:gproperties}.
                \item By definition, $\nabla f(z) = q'(g(z)) \nabla g(z)$. From Lemma \ref{l:qproperties}, $\lrabs{q'(g(z))} \leq 1$. By definition, $\nabla g(z) = h'(\lrn{z}_2) \frac{z}{\lrn{z}_2}$. Our conclusion follows from $h' \leq 1$ using item 2 of Lemma \ref{l:hproperties}.
            \end{enumerate}
            \item
            \begin{enumerate}
                \item chain rule
                \item by item 2 b) of  Lemma \ref{l:gproperties}
                \item by item 1 c) and item 2 d) of Lemma \ref{l:gproperties}, and item 3 and item 4 of Lemma \ref{l:qproperties}, and our assumption that $\epsilon\leq \frac{\Rq}{\aq + \Rq^2 + 1}$.
                \item by item 4 of Lemma \ref{l:qproperties}), and items 2 c) and 2 d) of Lemma \ref{l:gproperties}, and our expression for $\nabla^2 f(z)$ established in item 2 a).
            \end{enumerate}
            \item It can be verified that
            \begin{align*}
            \nabla^3 f(z) =& q'''(g(z)) \cdot \nabla g(z)^{\bo 3} + q''(g(z)) \nabla g(z) \bo \nabla^2 g(z) + q''(g(z)) \nabla^2 g(z) \bo \nabla g(z) \\
            &\quad + q''(g(z)) \nabla g(z) \bo \nabla^2 g(z) + q'(g(z)) \nabla^3 g(z)
            \end{align*}

            Thus
            \begin{align*}
            \lrn{\nabla^3 f(z)}_2
            \leq& \lrabs{q'''(g(z))} \lrn{\nabla g(z)}_2^3 + 3 q''(g(z)) \lrn{\nabla g(z)}_2 \lrn{\nabla^2 g(z)}_2 + q'(g(z)) \lrn{\nabla^3 g(z)}\\
            \leq& 5\lrp{\aq + \frac{1}{\Rq^2}} \lrp{\aq\Rq^2 + 1} + 3\lrp{\frac{5\aq\Rq}{4} + \frac{4}{\Rq}}\cdot \frac{1}{\epsilon} + \frac{1}{\epsilon^2}\\
            \leq& \frac{9}{\epsilon^2}
            \end{align*}
            Where the first inequality uses Lemma \ref{l:qproperties} and Lemma \ref{l:gproperties}, and the second inequality assumes that $\epsilon \leq \frac{\Rq}{\aq\Rq^2 + 1}$
            \item
            \begin{align*}
            f(z) \in \lrb{\frac{1}{2}\exp\lrp{-\frac{7\aq\Rq^2}{3}} g(\|z\|_2), g(\|z\|_2)} \in \lrb{\frac{1}{2}\exp\lrp{-\frac{7\aq\Rq^2}{3}} (\|z\|_2 - 2\epsilon), \|z\|_2}
            \end{align*}
            The first containment is by Lemma \ref{l:qproperties}.\ref{f:q(r)_bounds}: $\frac{1}{2}\exp\lrp{-\frac{7\aq\Rq^2}{3}}\cdot g(z)  \leq q(g(z)) \leq g(z)$. THe second containment is by Lemma \ref{l:gproperties}.4: $g(\|z\|_2) \in [\|z\|_2-2\epsilon, \|z\|_2]$.

        \end{enumerate}

    \end{proof}

    \begin{lemma}[Properties of $h$]
        \label{l:hproperties}
        Given a parameter $\epsilon$, define
        \begin{align*}
        h(r) := \threecase
        {\frac{r^3}{6\epsilon^2}}{r\in [0,\epsilon]}
        {\frac{\epsilon}{6} + \frac{r-\epsilon}{2} + \frac{(r-\epsilon)^2}{2\epsilon} - \frac{(r-\epsilon)^3}{6\epsilon^2}}{r\in[\epsilon, 2\epsilon]}
        {r }{r\geq 2\epsilon}
        \end{align*}
        \begin{enumerate}
            \item The derivatives of $h$ are as follows:
            \begin{align*}
            h'(r) =& \threecase
            {\frac{r^2}{2\epsilon^2}}{r\in [0,\epsilon]}
            {\frac{1}{2} + \frac{r-\epsilon}{\epsilon} - \frac{(r-\epsilon)^2}{2\epsilon^2}}{r\in[\epsilon, 2\epsilon]}
            {1}{r\geq 2\epsilon}\\
            h''(r) =& \threecase
            {\frac{r}{\epsilon^2}}{r\in [0,\epsilon]}
            {\frac{1}{\epsilon} - \frac{r-\epsilon}{\epsilon^2}}{r\in[\epsilon, 2\epsilon]}
            {0}{r\geq 2\epsilon}\\
            h'''(r) =& \threecase
            {\frac{1}{\epsilon^2}}{r\in [0,\epsilon]}
            {-\frac{1}{\epsilon^2}}{r\in[\epsilon, 2\epsilon]}
            {0}{r\geq 2\epsilon}
            \end{align*}
            \item
            \begin{enumerate}
                \item $h'$ is positive, motonically increasing.
                \item $h'(0)=0$, $h'(r) =1$ for $r\geq \epsilon$
                \item $\frac{h'(r)}{r}\leq \min\lrbb{\frac{1}{\epsilon}, \frac{1}{r}}$ for all $r$
            \end{enumerate}
            \item
            \begin{enumerate}
                \item $h''(r)$ is positive
                \item $h''(r) = 0$ for $r=0$ and $r\geq 2\epsilon$
                \item $h''(r) \leq \frac{1}{\epsilon}$
                \item $\frac{h''(r)}{r} \leq \frac{1}{\epsilon^2}$
            \end{enumerate}
            \item $\lrabs{h'''(r)} \leq \frac{1}{\epsilon^2}$
            \item $r-2\epsilon\leq h(r) \leq r$
        \end{enumerate}

    \end{lemma}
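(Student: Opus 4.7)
The proof is a sequence of routine piecewise computations on the three polynomial segments of $h$; there is no substantive mathematical obstacle, and the work is essentially bookkeeping. My plan is to verify the items in the order they are listed.

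For item 1, I would differentiate each piece by hand to obtain the stated formulas for $h'$, $h''$, $h'''$, then check $C^2$-continuity by evaluating at the breakpoints. One verifies $h'(\epsilon)=1/2$ and $h''(\epsilon)=1/\epsilon$ from both the first and middle pieces, and $h'(2\epsilon)=1$ and $h''(2\epsilon)=0$ from both the middle and third pieces. Item 4 is then immediate since $h'''$ is piecewise constant with magnitude $1/\epsilon^2$ on the first two pieces and $0$ on the third.

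For items 2 and 3 I would proceed by three-case verifications. Positivity of $h''$ is clear because on $[0,\epsilon]$ we have $h''=r/\epsilon^2\geq 0$ and on $[\epsilon,2\epsilon]$ we have $h''=(2\epsilon-r)/\epsilon^2\geq 0$; this also gives monotonicity of $h'$. The bound $h''\leq 1/\epsilon$ follows because both nontrivial pieces attain their maxima of exactly $1/\epsilon$ at the endpoints. The ratio bound $h''(r)/r\leq 1/\epsilon^2$ is sharp on the first piece and on the second follows from $h''(r)\leq 1/\epsilon$ combined with $r\geq\epsilon$. For the bound $h'(r)/r\leq \min\{1/\epsilon,1/r\}$, on $[0,\epsilon]$ the ratio is $r/(2\epsilon^2)$, which is at most both $1/(2\epsilon)$ and $1/r$ (the latter using $r^2\leq 2\epsilon^2$); on $[\epsilon,2\epsilon]$ the numerator is at most $1$ so the ratio is at most $1/r$, and by monotonicity it is at most $h'(2\epsilon)/\epsilon=1/\epsilon$; on $[2\epsilon,\infty)$ it is $1/r\leq 1/(2\epsilon)$.

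For item 5, $h(r)\leq r$ holds on $[0,\epsilon]$ because $r^3/(6\epsilon^2)\leq r$ reduces to $r^2\leq 6\epsilon^2$, on $[\epsilon,2\epsilon]$ using $h'\leq 1$ together with $h(\epsilon)=\epsilon/6\leq\epsilon$ to give $h(r)\leq \epsilon/6 + (r-\epsilon)\leq r$, and trivially on $[2\epsilon,\infty)$. The lower bound $h(r)\geq r-2\epsilon$ is trivial on $[0,2\epsilon]$ since the right-hand side is nonpositive while $h\geq 0$, and immediate on $[2\epsilon,\infty)$ since $h(r)=r\geq r-2\epsilon$. The only mildly delicate point is a small constant mismatch at $r=2\epsilon$ between the middle and third pieces; since every claim of the lemma is either an inequality or a pointwise statement about derivatives (which do match), this does not create any essential obstacle, and the $2\epsilon$ slack in item 5 is precisely what absorbs such a constant.
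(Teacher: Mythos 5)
Your proposal is correct and follows the same approach as the paper, which simply asserts ``The claims can all be verified with simple algebra'' and supplies no details; you actually carry out that algebra. Two small observations. First, you rightly flag the mismatch at $r=2\epsilon$: the middle piece yields $h(2\epsilon^-)=\epsilon$ while the stated third piece gives $h(2\epsilon)=2\epsilon$, so the definition as written has an $\epsilon$ jump. (Presumably the intended third branch is $h(r)=r-\epsilon$, which makes $h$ genuinely $C^2$.) You correctly note that every conclusion of the lemma is insensitive to this constant and that item 5's $2\epsilon$ slack absorbs it. Second, your own calculation $h'(\epsilon)=1/2$ directly contradicts item 2(b) as stated, namely ``$h'(r)=1$ for $r\geq\epsilon$''; this is a typo in the lemma and should read $r\geq 2\epsilon$, which is what is actually used downstream (e.g.\ in Lemma~\ref{l:gproperties}.1(b)). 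You should flag this explicitly rather than silently pass over it, since as literally stated the claim is false and your own computation disproves it. Everything else checks out.
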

    \begin{proof}[Proof of Lemma \ref{l:hproperties}]
        The claims can all be verified with simple algebra.
    \end{proof}

    \begin{lemma}[Properties of $g$]
        \label{l:gproperties}
        Given a parameter $\epsilon$, let us define
        \begin{align*}
        g(z) := h(\|z\|_2)
        \end{align*}
        Where $h$ is as defined in Lemma \ref{l:hproperties} (using parameter $\epsilon$). Then
        \begin{enumerate}
            \item
            \begin{enumerate}
                \item $\nabla g(z) = h'(\|z\|_2) \frac{z}{\|z\|_2}$
                \item For $\|z\|_2 \geq 2\epsilon$, $\nabla g(z) = \frac{z}{\|z\|_2}$.
                \item For any $\|z\|_2$, $\lrn{\nabla g(z)}_2\leq 1$
            \end{enumerate}

            \item
            \begin{enumerate}
                \item $\nabla^2 g(z) = h''(\|z\|_2)\frac{zz^T}{\|z\|_2^2} + h'(\|z\|_2) \frac{1}{\|z\|_2}\lrp{I - \frac{z z^T}{\|z\|_2^2}}$
                \item For $\lrn{z}_2\geq 2\epsilon$, $\nabla^2 g(z) =  \frac{1}{\|z\|_2}\lrp{I - \frac{z z^T}{\|z\|_2^2}}$.
                \item For $\lrn{z}_2\geq 2\epsilon$, $\lrn{\nabla^2 g(z)}_2 = \frac{1}{\|z\|_2}$
                \item For all $z$, $\lrn{\nabla^2 g(z)}_2 \leq \frac{1}{\epsilon}$
            \end{enumerate}
            \item $\lrn{\nabla^3 g(z)}_2 \leq \frac{5}{\epsilon^2}$
            \item $\|z\|_2 - 2\epsilon \leq g(z) \leq \|z\|_2$.
        \end{enumerate}
    \end{lemma}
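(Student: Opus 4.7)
The plan is to reduce every claim to the one-dimensional properties of $h$ established in Lemma \ref{l:hproperties} via the chain rule applied to $g(z) = h(\lrn{z}_2)$. First I would compute $\nabla g$: since $\nabla \lrn{z}_2 = z/\lrn{z}_2$ (for $z \neq 0$), the chain rule gives $\nabla g(z) = h'(\lrn{z}_2) \cdot z/\lrn{z}_2$, which establishes item 1(a). Item 1(b) follows because Lemma \ref{l:hproperties}.2(b) gives $h'(r) = 1$ for $r \geq \epsilon$, and item 1(c) uses $h'(r) \leq 1$ from Lemma \ref{l:hproperties}.2(a)-(b) combined with the fact that $\lrn{z/\lrn{z}_2}_2 = 1$. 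At $z = 0$ continuity is not an issue because $h'(0) = 0$ kills the singularity (and near the origin $g$ coincides with $\lrn{z}_2^3/(6\epsilon^2)$, which is smooth).

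Next, for item 2, I would differentiate $\nabla g(z) = h'(\lrn{z}_2) \cdot z/\lrn{z}_2$ by the product rule, obtaining a radial term (from differentiating $h'(\lrn{z}_2)$) proportional to $zz^T/\lrn{z}_2^2$ with coefficient $h''(\lrn{z}_2)$, plus a tangential term (from differentiating $z/\lrn{z}_2$) equal to $(h'(\lrn{z}_2)/\lrn{z}_2)(I - zz^T/\lrn{z}_2^2)$. This gives 2(a); 2(b) and 2(c) follow since $h''$ vanishes for $r \geq 2\epsilon$ and $h'(r) = 1$ there. For 2(d), the two terms in the decomposition act on orthogonal eigenspaces (the direction $z$ versus its orthogonal complement), so the operator norm is $\max\{h''(\lrn{z}_2),\, h'(\lrn{z}_2)/\lrn{z}_2\}$. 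The first is bounded by $1/\epsilon$ via Lemma \ref{l:hproperties}.3(c), and the second is bounded by $1/\epsilon$ via Lemma \ref{l:hproperties}.2(c). Again, at $z=0$ one falls back on the cubic polynomial representation, which has Hessian $\frac{\lrn{z}_2}{\epsilon^2}(I + \hat z\hat z^T)/2$, bounded in operator norm by $\lrn{z}_2/\epsilon^2 \leq 1/\epsilon$ for $\lrn{z}_2\leq\epsilon$.

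For item 3, the main obstacle is the bookkeeping for $\nabla^3 g$. I would differentiate the Hessian formula once more: each factor of $h''$, $h'/\lrn{z}_2$, or $zz^T/\lrn{z}_2^2$ contributes a term. Symbolically, $\nabla^3 g$ is a sum of terms of the form $h'''(\lrn{z}_2)$ times a rank-one tensor, $h''(\lrn{z}_2)/\lrn{z}_2$ times rank-three combinations of identity and $\hat z$'s, and $h'(\lrn{z}_2)/\lrn{z}_2^2$ times similar projections. Using the bounds $|h'''|\leq 1/\epsilon^2$, $h''/r\leq 1/\epsilon^2$, and $h'/r\leq 1/\epsilon$ from Lemma \ref{l:hproperties}.3(d), 3(c) and 2(c), together with a count of at most a constant number of such terms each with operator-norm contribution at most $1/\epsilon^2$, produces the bound $5/\epsilon^2$. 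The coefficient $5$ should fall out of a clean enumeration; the hardest part is just being careful that each term scales as $1/\epsilon^2$ uniformly across the three pieces of the piecewise definition of $h$, including checking the small-$\lrn{z}_2$ regime where one must verify the tensor remains bounded as $\lrn{z}_2 \to 0$ because factors of $\lrn{z}_2$ in numerators cancel the denominators.

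Finally, item 4 is immediate: Lemma \ref{l:hproperties}.5 yields $r - 2\epsilon \leq h(r) \leq r$, and substituting $r = \lrn{z}_2$ gives the claim. Overall, nothing beyond chain-rule computations and the scalar inequalities already packaged in Lemma \ref{l:hproperties} is required; the only nontrivial point is the careful decomposition of $\nabla^2 g$ and $\nabla^3 g$ into radial and tangential components so that one can use the fact that $h'(\lrn{z}_2)/\lrn{z}_2$ stays bounded near the origin thanks to the cubic behavior of $h$ there.
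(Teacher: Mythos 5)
Your proposal follows essentially the same route as the paper: chain-rule reduction of $g(z)=h(\lrn{z}_2)$ to the one-dimensional properties of $h$, explicit radial/tangential decompositions of $\nabla^2 g$ and $\nabla^3 g$, and bounding the third derivative by collecting terms proportional to $h'''$, $h''/\lrn{z}_2$, and $h'/\lrn{z}_2^2$, each of order $1/\epsilon^2$, to obtain the constant $5$. The only nits are small citation slips (e.g.\ $|h'''|\leq 1/\epsilon^2$ is item 4 of Lemma \ref{l:hproperties}, not 3(d), and the $h'(r)/r^2 \leq 1/\epsilon^2$ bound requires a one-line check on $[0,\epsilon]$ beyond the stated item 2(c)), but these do not affect the substance.
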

    \begin{proof}[Proof of Lemma \ref{l:gproperties}]
        All the properties can be verified with algebra. We provide a proof for $3.$ since it is a bit involved.

        Let us define the functions $\kappa^1(z) = \nabla(\|z\|_2), \kappa^2(z) = \nabla^2(\|z\|_2), \kappa^3(z) = \nabla^3(\|z\|_2)$. Specifically,
        \begin{align*}
        \kappa^1(z) =& \frac{z}{\|z\|_2}\\
        \kappa^2(z) =& \frac{1}{\|z\|_2}\lrp{I - \frac{zz^T}{\|z\|_2^2}}\\
        \kappa^3(z) =& -\frac{1}{\|z\|_2^2} \frac{z}{\|z\|_2} \bo \lrp{I - \frac{zz^T}{\|z\|_2^2}} + \frac{1}{\|z\|_2} \lrp{\frac{z}{\|z\|_2} \bo \kappa^2(z) + \kappa^2(z) \bo \frac{z}{\|z\|_2}}
        \end{align*}

        It can be verified that
        \begin{align*}
        \lrn{\kappa^2(z)}_2 =& \frac{1}{\|z\|_2}\\
        \lrn{\kappa^3(z)}_2 =& \frac{1}{\|z\|_2^2}
        \end{align*}

        It can be verified that $\nabla^2 g(z)$ has the following form:
        \begin{align*}
        &\nabla^3 g(z) = h'''(\|z\|_2) \lrp{\kappa^1(z)}^{\bo 3} + h''(\|z\|_2) \kappa^1(z) \bo \kappa^2(z) + h''(\|z\|_2) \kappa^2(z) \bo \kappa^1(z) \\
        &\quad + h'(\|z\|_2) \kappa^3(z) + h''(\|z\|_2) \kappa^1(z) \bo \kappa^2(z)
        \end{align*}

        Thus
        \begin{align*}
        \lrn{\nabla^3 g(z)}_2 \leq \lrabs{h'''(\|z\|_2)} + 3\frac{h''(\|z\|_2)}{\|z\|_2} + \frac{h'(\|z\|_2)}{\|z\|_2^2} \leq \frac{5}{\epsilon^2}
        \end{align*}

        Where we use properties of $h$ from Lemma \ref{l:hproperties}.

        The last claim follows immediately from Lemma \ref{l:hproperties}.4.
    \end{proof}

    \begin{subsection}{Defining q}
        \label{ss:defining-q}

        In this section, we define the function $q$ that is used in Lemma \ref{l:fproperties}. Our construction is a slight modification to the original construction in \cite{eberle2011reflection}.

        Let $\aq$ and $\Rq$ be as defined in \eqref{d:constants}. We begin by defining auxiliary
        functions $\psi(r)$, $\Psi(r)$ and $\nu(r)$, all from $ \Re^+$ to $\Re$:
        \begin{align}
        \label{d:psietal}
        &\psi(r) := e^{- \aq \tau(r)}\,, \qquad
        &\Psi(r) := \int_0^r \psi(s) ds\,, \qquad \nu(r) := 1- \frac{1}{2}
        \frac{\int_0^{r}\frac{ \mu(s) \Psi(s)}{\psi(s)} ds}{\int_0^{4\Rq}\frac{ \mu(s) \Psi(s)}{\psi(s)}ds}\,,
        \end{align}

        Where $\tau(r)$ and $\mu(r)$ are as defined in Lemma \ref{l:tau} and Lemma \ref{l:mu} with $\R = \Rq$.

        Finally we define $q$ as
        \begin{equation}
        \label{d:f}
        q(r) := \int_0^r \psi(s) \nu(s) ds.
        \end{equation}
        We now state some useful properties of the distance function $q$.
        \begin{lemma} \label{l:qproperties} The function $q$ defined in \eqref{d:f} has the following properties.
            \begin{enumerate}[label={\arabic*}.]
                \item For all $r\leq \Rq $, $q''(r) + \aq q'(r) \cdot r \leq - \frac{\exp\lrp{-\frac{7\aq\Rq^2}{3}}}{32\Rq^2} q(r)$
                \label{f:contraction}
                \item For all $r$, $\frac{\exp\lrp{-\frac{7\aq\Rq^2}{3}}}{2}\cdot r  \leq q(r) \leq r$
                \label{f:q(r)_bounds}
                \item For all $r$, $\frac{\exp\lrp{-\frac{7\aq\Rq^2}{3}}}{2}\leq q'(r) \leq 1$
                \label{f:q'(r)_bounds}
                \item For all $r$, $q''(r) \leq 0$ and $\lrabs{q''(r)} \leq \lrp{\frac{5\aq\Rq}{4} + \frac{4}{\Rq}}$
                \label{f:q''(r)_bounds}
                \item For all $r$, $\lrabs{q'''(r)} \leq 5\aq + 2\aq\lrp{\aq\Rq^2 + 1} + \frac{2(\aq\Rq^2 + 1)}{\Rq^2}$
                \label{f:q'''(r)_bounds}
            \end{enumerate}
        \end{lemma}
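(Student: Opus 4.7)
The plan is to differentiate $q$ directly from the definition and reduce each claim to bounds on $\psi$, $\nu$, and their derivatives. Write $q'(r) = \psi(r)\nu(r)$. Using $\psi'(r) = -\aq\tau'(r)\psi(r)$, and the fact that $\tau'(r)=r$ for $r\in[0,\Rq]$ in the standard Eberle construction used here, we obtain the key identity
\[
q''(r) + \aq\, r\, q'(r) = \psi(r)\,\nu'(r) = -\,\frac{\mu(r)\,\Psi(r)}{2C},
\qquad C := \int_0^{4\Rq}\frac{\mu(s)\Psi(s)}{\psi(s)}\,ds.
\]
All five properties can then be read off from this identity together with uniform control on $\psi$, $\nu$, and $C$.

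\textbf{Properties 2 and 3 (bounds on $q$ and $q'$).} By construction, $\nu(0)=1$ and $\nu$ is monotonically nonincreasing with $\nu(4\Rq)=1/2$, so $\nu(r)\in[1/2,1]$ on the relevant range. Since $\tau(r)\le \frac{7}{3}\Rq^2$ on the same range (from the construction of $\tau$ in its referenced lemma, which caps $\tau$ via a linear extension past $\Rq$), we get $\psi(r)\ge \exp(-\tfrac{7\aq\Rq^2}{3})$. Combining these yields $q'(r)=\psi(r)\nu(r)\in\bigl[\tfrac{1}{2}\exp(-\tfrac{7\aq\Rq^2}{3}),1\bigr]$, which is Property 3, and integrating from $0$ to $r$ gives Property 2 immediately.

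\textbf{Property 1 (the main obstacle).} This is where constants must be calibrated carefully. Starting from the identity above, the claim reduces to showing
\[
\frac{\mu(r)\,\Psi(r)}{2C} \;\ge\; \frac{\exp(-\tfrac{7\aq\Rq^2}{3})}{32\Rq^2}\, q(r)
\qquad\text{for all }r\le \Rq.
\]
The upper bound $q(r)\le r\le \Rq$ from Property 2 trades one factor on the right. On the left, $\mu$ is bounded below by a constant on a subinterval of $[0,\Rq]$ (by the construction of $\mu$ in its referenced lemma), and $\Psi(r)\ge r\psi(r)\ge r\exp(-\tfrac{7\aq\Rq^2}{3})$ on that interval. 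The denominator $C$ is upper bounded by $4\Rq\cdot\sup_{s\le 4\Rq}\Psi(s)/\psi(s)\lesssim \Rq^2\exp(\tfrac{7\aq\Rq^2}{3})$. Matching these factors produces the constant $\tfrac{\exp(-7\aq\Rq^2/3)}{32\Rq^2}$, but the bookkeeping---and in particular verifying that the chosen lower-bound interval for $\mu$ intersects $[0,\Rq]$ favorably---is the delicate part of the argument. This is the step I expect to consume the most effort.

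\textbf{Properties 4 and 5 (bounds on $q''$ and $q'''$).} These are routine once the identity in Paragraph 1 is in hand. For Property 4, rearrange to $q''(r) = -\aq r q'(r) - \mu(r)\Psi(r)/(2C)$; both terms are bounded in absolute value using $|rq'(r)|\le \Rq$, the uniform bound on $\mu$, and the lower bound on $C$, giving the stated $\lrp{\tfrac{5\aq\Rq}{4}+\tfrac{4}{\Rq}}$. For Property 5, differentiate once more to express $q'''$ in terms of $\psi''$, $\nu''$, $\mu'$, and $\Psi'=\psi$, then bound each term separately using the smoothness assumptions on $\tau$ and $\mu$ from their referenced lemmas. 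The signs needed for $q''\le 0$ follow from $\nu'\le 0$ and the identity $q''(r)=-\aq r q'(r)+\psi\nu'$ together with $q'\ge 0$.
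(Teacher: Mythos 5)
Your overall strategy of differentiating $q$ via $\psi$, $\nu$ and reducing to bounds on $\tau$, $\mu$, $\Psi$ is the right one and matches the paper's approach for Properties 2, 3. But there is a concrete gap in Property~1, and a smaller one affecting Properties~4 and 5.

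For Property~1, after reaching the target inequality
\[
\frac{\mu(r)\,\Psi(r)}{2C}\ \geq\ \frac{\exp\!\left(-\tfrac{7\aq\Rq^2}{3}\right)}{32\Rq^2}\,q(r),
\qquad
C=\int_0^{4\Rq}\frac{\mu\Psi}{\psi}\,ds,
\]
you propose to use $q(r)\leq r$ on the right and $\Psi(r)\geq r\psi(r)\geq r\exp(-\tfrac{7\aq\Rq^2}{3})$ on the left. But doing so spends a full factor $\exp(-\tfrac{7\aq\Rq^2}{3})$ on the lower bound for $\Psi$, and another full factor on the upper bound $C\lesssim \Rq^2\exp(\tfrac{7\aq\Rq^2}{3})$ — leaving you with $\exp(-\tfrac{14\aq\Rq^2}{3})/(32\Rq^2)$, which is strictly weaker than the target. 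Even replacing $\psi(r)\geq e^{-7\aq\Rq^2/3}$ by the sharper $\psi(r)\geq e^{-\aq\Rq^2/2}$ (valid for $r\leq\Rq$) does not close the gap. The fix, and what the paper actually does, is to never detour through $r$: since $\nu\leq 1$ we have $q(r)=\int_0^r\psi\nu\,ds\leq\int_0^r\psi\,ds=\Psi(r)$ pointwise. Then $-\Psi(r)/(2C)\leq -q(r)/(2C)$ combines directly with the single bound $C\leq 16\Rq^2\exp(\tfrac{7\aq\Rq^2}{3})$ to give exactly the stated constant. No lower bound on $\Psi$ is needed.

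For Properties 4 and 5, note that the identity $q''(r)=-\aq r\,q'(r)+\psi(r)\nu'(r)$ that you rearrange is only valid for $r\leq\Rq$, where $\tau'(r)=r$. Both properties are asserted for all $r$, and for $r>\Rq$ one must use $q''(r)=-\aq\tau'(r)\,\psi(r)\nu(r)+\psi(r)\nu'(r)$ with the uniform bound $\tau'(r)\leq\tfrac{5\Rq}{4}$ from the lemma defining $\tau$; your bound $|r q'(r)|\leq\Rq$ fails beyond $\Rq$. The same adjustment ($\tau'$, $\tau''$ in place of $r$, $1$) is needed for the $q'''$ bound. The sign claim $q''\leq 0$ survives because $\tau'\geq 0$, $\nu'\leq 0$, and $q'\geq 0$ hold for all $r$, as you note.
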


        \begin{proof}[Proof of Lemma \ref{l:qproperties}]

            \textbf{Proof of \ref{f:contraction}}
            It can be verified that
            \begin{align*}
            \psi'(r)
            =& \psi(r) (-\aq \tau'(r))\\
            \psi''(r)
            =& \psi(r) \lrp{\lrp{\aq \tau'(r)}^2 + \aq \tau''(r)}\\
            \nu'(r)=& -\frac{1}{2} \frac{\frac{\mu(r)\Psi(r)}{\psi(r)}}{\int_0^{4\Rq} \frac{\mu(s)\Psi(s)}{\psi(s)}ds}
            \end{align*}

            For $r\in[0,\Rq]$, $\tau'(r) = r$, so that $\psi'(r) = \psi(r)(-\aq r)$. Thus
            \begin{align*}
            q'(r)
            =& \psi(r) \nu(r)\\
            q''(r)
            =& \psi'(r) \nu(r) + \psi(r) \nu'(r)\\
            =& \psi(r) \nu(r) (-\aq r) + \psi(r) \nu'(r)\\
            =& -\aq r \nu'(r) + \psi(r) \nu'(r)\\
            q''(r) + \aq r q'(r)
            =& \psi(r) \nu'(r)\\
            =& - \frac{1}{2}\frac{\mu(r)\Psi(r)}{\int_0^{4\Rq}\frac{\mu(s)\Psi(s)}{\psi(s)} ds}\\
            =& - \frac{1}{2}\frac{\Psi(r)}{\int_0^{4\Rq}\frac{\mu(s)\Psi(s)}{\psi(s)} ds}
            \end{align*}
            Where the last equality is by definition of $\mu(r)$ in Lemma \ref{l:mu} and the fact that $r\leq \Rq$.

            We can upper bound
            \begin{align*}
            \int_0^{4\Rq} \frac{\mu(s) \Psi(s)}{\psi(s)} ds
            \leq \int_0^{4\Rq} \frac{\Psi(s)}{\psi(s)} ds
            \leq \frac{\int_0^{4\Rq} s ds}{\psi(4\Rq)}
            = \frac{16\Rq^2}{\psi(4\Rq)}
            \leq& 16 \Rq^2 \cdot \exp\lrp{\frac{7\aq\Rq^2}{3}}
            \end{align*}
            Where the first inequality is by Lemma \ref{l:mu}, the second inequality is by the fact that $\psi(s)$ is monotonically decreasing, the third inequality is by Lemma \ref{l:tau}.

            Thus
            \begin{align*}
            q''(r) + \aq r q'(r)
            \leq& -\frac{1}{2} \lrp{\frac{\exp\lrp{-\frac{7\aq\Rq^2}{3}}}{16\Rq^2}} \Psi(r)\\
            \leq& -\frac{\exp\lrp{-\frac{7\aq\Rq^2}{3}}}{32\Rq^2} q(r)
            \end{align*}
            Where the last inequality is by $\Psi(r) \geq q(r)$.

            \textbf{Proof of \ref{f:q(r)_bounds}}
            Notice first that $\nu(r) \geq \frac{1}{2}$ for all $r$. Thus
            \begin{align*}
            q(r)
            :=& \int_0^r \psi(s) \nu(s) ds\\
            \geq& \frac{1}{2} \int_0^r \psi(s) ds\\
            \geq& \frac{\exp\lrp{-\frac{7\aq\Rq^2}{3}}}{2}\cdot r
            \end{align*}
            Where the last inequality is by Lemma \ref{l:tau}.

            \textbf{Proof of \ref{f:q'(r)_bounds}}
            By definition of $f$, $q'(r) = \psi(r) \nu(r) $, and
            \begin{align*}
            \frac{\exp\lrp{-\frac{7\aq\Rq^2}{3}}}{2}\leq \psi(r)\nu(r)\leq 1
            \end{align*}
            Where we use Lemma \ref{l:tau} and the fact that $\nu(r) \in [1/2,1]$

            \textbf{Proof of \ref{f:q''(r)_bounds}}
            Recall that
            \begin{align*}
            q''(r) = \psi'(r) \nu(r) + \psi(r) \nu'(r)
            \end{align*}

            That $q''\leq 0$ can immediately be verified from the definitions of $\psi$ and $\nu$.

            Thus
            \begin{align*}
            \lrabs{q''(r)}
            \leq& \lrabs{\psi'(r) \nu(r)} + \lrabs{\psi(r) \nu'(r)}\\
            \leq& \aq \tau'(r) + \lrabs{\psi(r) \nu'(r)}
            \end{align*}
            From Lemma \ref{l:tau}, we can upperbound $\tau'(r) \leq \frac{5\Rq}{4}$. In addition, $\Psi(r) = \int_0^r \psi(s) \geq r \psi(r)$, so that
            \begin{align*}
            \numberthis \label{e:l:psi(r)/psi(r)}
            \frac{\Psi(r)}{\psi(r)} \geq r
            \end{align*}
            (Recall again that $\psi(s)$ is monotonically decreasing).
            Thus $\Psi(r)/r \geq r$ for all $r$. In addition, using the fact that $\psi(r) \leq 1$,
            \begin{align*}
            \numberthis \label{e:l:psi(r)_upperbound}
            \Psi(r) = \int_0^r \psi(s)ds \leq r
            \end{align*}

            Combining the previous expressions,
            \begin{align*}
            \lrabs{\psi(r) \nu'(r)}
            =& \lrabs{ \frac{1}{2}\frac{\mu(r)\Psi(r)}{\int_0^{4\Rq}\frac{\mu(s)\Psi(s)}{\psi(s)} ds}}\\
            \leq& \lrabs{\frac{1}{2}\frac{\mu(r) r}{\int_0^{\Rq}\frac{\Psi(s)}{\psi(s)} ds}}\\
            \leq& \lrabs{\frac{1}{2} \frac{4\Rq}{\int_0^{\Rq} s ds}}\\
            \leq& \frac{4}{\Rq}
            \end{align*}
            Where the first inequality are by definition of $\mu(r)$ and \eqref{e:l:psi(r)_upperbound}, and the second inequality is by \eqref{e:l:psi(r)/psi(r)} and the fact that $\mu(r) = 0$ for $r\geq 4\Rq$. Combining with our bound on $\psi'(r) \nu(r)$ gives the desired bound.

            \textbf{Proof of \ref{f:q'''(r)_bounds}}
            \begin{align*}
            q'''(r) = \psi''(r) \nu(r) + 2\psi'(r) \nu'(r) + \psi(r) \nu''(r)
            \end{align*}
            We first bound the middle term:
            \begin{align*}
            \lrabs{\psi'(r) \nu'r)}
            =& \lrabs{\psi(r)(\aq \tau'(r)) \nu'r)}\\
            \leq& \aq \lrabs{\tau'(r)}\lrabs{\psi(r) \nu'r)}\\
            \leq& \frac{5\aq\Rq}{4} \cdot \frac{4}{\Rq}\\
            \leq& 5\aq
            \end{align*}
            Where the second last line follows form Lemma \ref{l:tau} and our proof of \ref{f:q''(r)_bounds}.

            Next,
            \begin{align*}
            \psi''(r) = \psi(r) \lrp{\aq^2 \tau'(r)^2 - \aq \tau''(r)}
            \end{align*}
            Thus applying Lemma \ref{l:tau}.1 and Lemma \ref{l:tau}.3,
            \begin{align*}
            \lrabs{\psi''(r) \nu(r)} \leq& 2\aq^2\Rq^2 + \aq
            \end{align*}

            Finally,
            \begin{align*}
            \nu''(r)
            =& \frac{1}{2\int_0^{4\Rq}\frac{\mu(s)\Psi(s)}{\psi(s)} ds}\cdot \frac{d}{dr} {\mu(r)\Psi(r)/\psi(r)}
            \end{align*}

            Expanding the numerator,
            \begin{align*}
            \frac{d}{dr} \frac{\mu(r) \Psi(r)}{\psi(r)}
            =& \mu'(r) \frac{\Psi(r)}{\psi(r)} + \mu(r) - \mu(r) \frac{\Psi(r) \psi'(r)}{\psi(r)^2}\\
            =& \mu'(r) \frac{\Psi(r)}{\psi(r)} + \mu(r) + \mu(r) \frac{\Psi(r) \psi(r)\aq \tau'(r)}{\psi(r)^2}
            \end{align*}

            Thus
            \begin{align*}
            \psi(r) \nu''(r) = \frac{1}{2\int_0^{4\Rq}\frac{\mu(s)\Psi(s)}{\psi(s)} ds}\cdot\lrp{\mu'(r) \Psi(r) + \mu(r) \psi(r) + \mu(r) \Psi(r) \aq \tau'(r)}
            \end{align*}
            Using the same argument as from the proof of \ref{f:q''(r)_bounds}, we can bound
            \begin{align*}
            \frac{1}{2\int_0^{4\Rq}\frac{\mu(s)\Psi(s)}{\psi(s)} ds}
            \leq& \frac{1}{2\int_0^\Rq s ds}\\
            \leq& \frac{1}{\Rq^2}
            \end{align*}
            Finally, from Lemma \ref{l:mu}, $\lrabs{\mu'(r)}\leq \frac{\pi}{6\Rq}$, so
            \begin{align*}
            \lrabs{\psi(r) \nu''(r)}\leq& \frac{\pi/6 + 1 + 5\aq \Rq^2/4}{\Rq^2}\\
            \leq& \frac{2(\aq\Rq^2 + 1)}{\Rq^2}
            \end{align*}
        \end{proof}

        \begin{lemma}\label{l:tau}
            Let $\tau(r): [0,\infty) \to \Re$ be defined as
            \begin{align*}
            \tau(r)=\fourcase
            {\frac{r^2}{2}}{r\leq \R}
            {\frac{\R^2}{2} + \R (r-\R) + \frac{(r-\R)^2}{2}- \frac{(r-\R)^3}{3\R}}{r\in[\R,2\R]}
            {\frac{5\R^2}{3} + \R(r-2\R) - \frac{(r-2\R)^2}{2} + \frac{(r-2\R)^3}{12\R}}{r\in [2\R,4\R]}
            {\frac{7\R^2}{3}}{r\geq 4\R]}
            \end{align*}
            Then
            \begin{enumerate}
                \item $\tau'(r) \in [0, \frac{5\R}{4}]$, with maxima at $r= \frac{3\R}{2}$. $\tau'(r) = 0$ for $r\in \lrbb{0}\bigcup [4\R, \infty)$
                \item As a consequence of 1, $\tau(r)$ is monotonically increasing
                \item $\tau''(r) \in [-1,1]$
            \end{enumerate}
        \end{lemma}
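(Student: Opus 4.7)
The statement is a purely mechanical verification of properties of a piecewise cubic/quadratic function, so the plan is to differentiate on each of the four intervals, check continuity of $\tau$ and $\tau'$ at the gluing points $\R$, $2\R$, $4\R$, and then extract the range statements.

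First I would compute derivatives piece by piece. On $[0,\R]$, $\tau'(r)=r$ and $\tau''(r)=1$. On $[\R,2\R]$, $\tau'(r)=\R+(r-\R)-(r-\R)^2/\R$ and $\tau''(r)=1-2(r-\R)/\R$. On $[2\R,4\R]$, $\tau'(r)=\R-(r-2\R)+(r-2\R)^2/(4\R)$ and $\tau''(r)=-1+(r-2\R)/(2\R)$. On $[4\R,\infty)$, $\tau'(r)=\tau''(r)=0$. Then I would check continuity at each seam: at $r=\R$, both pieces give $\tau=\R^2/2$ and $\tau'=\R$; at $r=2\R$, both pieces give $\tau=5\R^2/3$ and $\tau'=\R$; at $r=4\R$, piece 3 evaluates to $\tau(4\R)=5\R^2/3 + 2\R^2 - 2\R^2 + 2\R^2/3 = 7\R^2/3$ and $\tau'(4\R)=\R-2\R+\R=0$, matching the constant piece. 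This establishes that $\tau\in C^1$ on $[0,\infty)$ and gives the boundary value $\tau'(4\R)=0$ used in item 1.

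Next I would locate the maximum of $\tau'$. On $[0,\R]$, $\tau'$ is increasing with maximum $\R$ at the right endpoint. On $[\R,2\R]$, $\tau''$ vanishes when $r=3\R/2$, and plugging in gives $\tau'(3\R/2)=\R+\R/2-\R/4=5\R/4$. On $[2\R,4\R]$, $\tau''$ is negative on $[2\R,4\R)$ except vanishing at $4\R$, so $\tau'$ is decreasing from $\R$ to $0$ there. Combined with $\tau'\geq 0$ on each piece (which is immediate from the explicit formulas and the endpoint values just computed), this gives item 1, and item 2 follows instantly. For item 3, $\tau''\in\{1\}$ on piece 1, $\tau''\in[-1,1]$ on piece 2 (linear in $r$ from $1$ to $-1$), $\tau''\in[-1,0]$ on piece 3 (linear in $r$ from $-1$ to $0$), and $\tau''=0$ on piece 4, so $\tau''(r)\in[-1,1]$ throughout.

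There is no real obstacle here: the entire proof is a sequence of polynomial evaluations. The only mild subtlety is that $\tau''$ has jump discontinuities are absent (the construction was tuned so $\tau''$ is continuous: $\tau''(\R^-)=1=\tau''(\R^+)$, $\tau''(2\R^-)=-1=\tau''(2\R^+)$, $\tau''(4\R^-)=0=\tau''(4\R^+)$), so the bound $\tau''\in[-1,1]$ holds at the gluing points as well and the statement can be read pointwise.
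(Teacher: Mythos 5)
Your proposal is correct and follows the same route as the paper: write down $\tau'$ and $\tau''$ piecewise and verify the bounds and seam values by direct evaluation, which is exactly what the paper's proof does (it lists the same formulas for $\tau'$ and $\tau''$ and notes the claims then follow by inspection). Your writeup is simply a more explicit spelling-out of the endpoint checks and the critical point at $r=3\R/2$; no gaps.
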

        \begin{proof}[Proof of Lemma \ref{l:tau}]
            We provide the derivatives of $\tau$ below. The claims in the Lemma can then be immediately verified.
            \begin{align*}
            \tau'(r) =&
            \fourcase
            {r}{r\leq \R}
            {\R + (r-\R) - \frac{(r-\R)^2}{\R}}{r\in[\R,2\R]}
            {\R - (r-2\R) + \frac{(r-2\R)^2}{4\R} }{r\in [2\R,4\R]}
            {0}{r\geq 4\R]}
            \end{align*}

            \begin{align*}
            \tau''(r) =&
            \fourcase
            {1}{r\leq \R}
            {1-\frac{2(r-\R)}{\R}}{r\in[\R,2\R]}
            {-1 + \frac{r-2\R}{2\R}}{r\in [2\R,4\R]}
            {0}{r\geq 4\R]}
            \end{align*}

        \end{proof}

        \begin{lemma}\label{l:mu}
            Let
            \begin{align*}
            \mu(r) := \threecase{1}{r \leq \R}{\frac{1}{2} + \frac{1}{2} \cos \lrp{\frac{\pi (r-\R)}{3\R}}}{r\in[\R,4\R]}{0}{r\geq 4\R}
            \end{align*}
            Then
            \begin{align*}
            \mu'(r) := \threecase{0}{r \leq \R}{-\frac{\pi}{6\R} \sin \lrp{\frac{\pi (r-\R)}{\R}}}{r\in[\R,4\R]}{0}{r\geq 4\R}
            \end{align*}
            Furthermore, $\mu'(r) \in [-\frac{\pi}{6\R},0]$
        \end{lemma}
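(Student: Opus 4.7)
The plan is to verify the three claims of Lemma \ref{l:mu} by direct differentiation of the piecewise definition of $\mu(r)$, together with a short check on the range of the resulting expression. First I would differentiate each piece: on $[0,\R]$, $\mu\equiv 1$ so $\mu'(r)=0$; on $[\R,4\R]$, the chain rule applied to $\tfrac{1}{2}+\tfrac{1}{2}\cos(\pi(r-\R)/(3\R))$ yields $\mu'(r)=-\tfrac{\pi}{6\R}\sin(\pi(r-\R)/(3\R))$; on $[4\R,\infty)$, $\mu\equiv 0$ so $\mu'(r)=0$. (Note that the argument of the sine in the lemma's middle-piece formula should be $\pi(r-\R)/(3\R)$, matching the argument of the cosine in the definition of $\mu$, rather than $\pi(r-\R)/\R$; this appears to be a typo in the displayed formula and the derivative I have just written is what the chain rule actually produces.)

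Next I would verify continuity at the two breakpoints $r=\R$ and $r=4\R$, both for $\mu$ and for $\mu'$, so that the piecewise formulas are consistent. At $r=\R$, $\cos(0)=1$ gives $\mu(\R)=1$ (matching the constant piece) and $\sin(0)=0$ gives $\mu'(\R)=0$ (matching $\mu'\equiv 0$ on $[0,\R]$); at $r=4\R$, $\cos(\pi)=-1$ gives $\mu(4\R)=0$ and $\sin(\pi)=0$ gives $\mu'(4\R)=0$, so both $\mu$ and $\mu'$ extend continuously to the tails.

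Finally, for the range claim $\mu'(r)\in[-\pi/(6\R),0]$, I would observe that on $[\R,4\R]$ the argument $\pi(r-\R)/(3\R)$ sweeps the interval $[0,\pi]$, on which $\sin$ is nonnegative and bounded by $1$; therefore $-\tfrac{\pi}{6\R}\sin(\cdot)$ lies in $[-\pi/(6\R),0]$. On the complementary intervals $[0,\R]$ and $[4\R,\infty)$ we have $\mu'\equiv 0$, which also lies in $[-\pi/(6\R),0]$.

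There is essentially no obstacle here: the statement is a one-line differentiation plus a boundary-continuity check. The only minor care needed is (i) getting the chain-rule factor $\pi/(3\R)$ correct in the derivative, and (ii) noting the apparent typo in the stated formula for $\mu'$ on $[\R,4\R]$ so that downstream uses (for instance the bound $|\mu'(r)|\le \pi/(6\R)$ invoked in the proof of Lemma \ref{l:qproperties}) remain valid.
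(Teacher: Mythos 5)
Your proof is correct and follows exactly the route the paper intends (the paper itself just says ``this Lemma can be easily verified by algebra''), carrying out the differentiation, the boundary checks at $r=\R$ and $r=4\R$, and the range bound via $\sin\in[0,1]$ on $[0,\pi]$. You are also right that the displayed formula for $\mu'$ on $[\R,4\R]$ contains a typo: the argument of the sine should be $\pi(r-\R)/(3\R)$, matching the cosine in the definition of $\mu$, and your corrected derivative is what the chain rule gives and what the bound $|\mu'|\le \pi/(6\R)$ in Lemma~\ref{l:qproperties} actually relies on.
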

        This Lemma can be easily verified by algebra.
    \end{subsection}
\end{section}

\begin{section}{Miscellaneous}
The following Theorem, taken from \cite{eldan2018clt}, establishes a quantitative CLT.

\begin{theorem}
\label{t:zhai}
Let $X_1...X_n$ be random vectors with mean 0, covariance $\Sigma$, and $\lrn{X_i}\leq \beta$ almost surely for each $i$. Let $S_n= \frac{1}{\sqrt{n}}\sum_{i=1}^n X_i$, and let $Z$ be a Gaussian with covariance $\Sigma$, then
\begin{align*}
W_2(S_n, Z)\leq \frac{6\sqrt{d}\beta\sqrt{\log n}}{\sqrt{n}}
\end{align*}
\end{theorem}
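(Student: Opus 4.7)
The plan is to prove this quantitative CLT via a stochastic interpolation (Föllmer drift) argument, which is the standard route to optimal $W_2$ CLT rates in high dimension. Let $\mu$ denote the law of $S_n$ and write its density $\rho$ with respect to the standard Gaussian reference of covariance $\Sigma$; let $P_t$ denote the corresponding Ornstein--Uhlenbeck (or heat) semigroup. First I would construct the Föllmer process $X_t$ on $[0,1]$ solving $dX_t = v_t(X_t)\,dt + \Sigma^{1/2} dB_t$ with $X_0 = 0$ and drift $v_t(x) = \Sigma \nabla \log P_{1-t}\rho(x)$, so that $X_1 \sim \mu = \Law(S_n)$ while the driftless process $\Sigma^{1/2} B_1 \sim \N(0,\Sigma)$ is a valid coupling to $Z$. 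By the Cameron--Martin / Girsanov identity, this coupling yields the transport bound
\begin{align*}
W_2^2(S_n, Z) \;\le\; \E\!\left[\int_0^1 \|\Sigma^{-1/2} v_t(X_t)\|_2^2 \, dt\right],
\end{align*}
so the entire proof is reduced to controlling the expected squared norm of the Föllmer drift.

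Next I would obtain an explicit representation of $v_t$ using the independence structure of $S_n$. Writing $X_t \stackrel{d}{=} \sqrt{t}\,S_n + \sqrt{1-t}\,\Sigma^{1/2}\xi$ with $\xi \sim \N(0,I)$ independent, a direct Bayes-rule computation gives
\begin{align*}
v_t(x) \;=\; \tfrac{\sqrt{t}}{1-t}\,\Bigl(\E[S_n \mid X_t = x] - \tfrac{x}{\sqrt{t}}\cdot t\Bigr),
\end{align*}
or equivalently $v_t(X_t) = \tfrac{1}{\sqrt{1-t}}\E[\Sigma^{1/2}\xi \mid X_t]$. Using independence of the $X_i$'s, the conditional expectation decomposes as a sum of contributions from each summand, and the boundedness assumption $\|X_i\|_2 \le \beta$ together with a Gaussian Poincaré / covariance identity (applied through $\xi$) gives the pointwise bound $\E\|v_t(X_t)\|_2^2 \lesssim \tfrac{d\beta^2}{n(1-t)}$. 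Integrating $\int_0^{1-1/n} \tfrac{dt}{1-t} = \log n$ yields the main $\tfrac{d\beta^2 \log n}{n}$ contribution, while the leftover window $t \in [1-1/n, 1]$ is handled by a direct moment comparison (replacing the Föllmer drift there by a truncated/frozen drift whose cost is $O(\tfrac{d\beta^2}{n})$, which is absorbed).

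The main obstacle will be the drift bound $\E\|v_t(X_t)\|_2^2 \lesssim \tfrac{d\beta^2}{n(1-t)}$ with the correct dimension dependence $d$ rather than the naive $d^2$. A crude Cauchy--Schwarz through $\|\nabla \log P_{1-t}\rho\|_2 \le \tfrac{1}{\sqrt{1-t}}$ loses a factor of $d$; the point is that summing over the $n$ independent summands improves this to the ``effective variance'' scale. I would obtain the sharp bound by expanding $\E\|v_t(X_t)\|_2^2$ as a sum of conditional covariances of individual $X_i$'s given $X_t$, using that each such conditional covariance is operator-norm bounded by $\beta^2$ and trace bounded by a contribution that sums (over $i$) to $O(d\beta^2 / n)$ after accounting for the conditional regularization induced by the $\sqrt{1-t}\,\xi$ term. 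Constants would then be tracked to produce the factor $6\sqrt{d}\beta \sqrt{\log n}/\sqrt{n}$ asserted in the statement, completing the coupling bound and hence the $W_2$ estimate.
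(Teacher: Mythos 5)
You should first note that the paper does not actually prove this statement: Theorem~\ref{t:zhai} is imported verbatim from \cite{eldan2018clt}, whose proof proceeds via a martingale (Skorokhod-type) embedding, so your proposal is an attempted alternative derivation of a cited external result rather than a reconstruction of an argument in the paper.

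Your architecture is the right one for such an alternative route: couple $S_n$ to the Gaussian through a drifted Brownian interpolation, bound $W_2^2$ by $\int_0^1 \mathbb{E}\big[\|v_t(X_t)\|_2^2\big]\,dt$, integrate only up to $1-1/n$, and pay an extra $O(\beta/\sqrt{n})$ for the last smoothing window (this truncation is indeed essential, since for discrete $X_i$ the full Girsanov/entropy bound is infinite). However, there is a genuine gap: the entire content of the theorem sits in the one estimate you assert, $\mathbb{E}\big[\|v_t(X_t)\|_2^2\big] \lesssim d\beta^2/\big(n(1-t)\big)$, and the sketch you give for it does not go through as stated. The score (equivalently the F\"ollmer drift) of a sum of independent vectors is not an additive function of the summands, so it does not ``decompose as a sum of contributions from each summand''; the correct projection identity writes it as a conditional expectation of a single summand's score, and extracting the quantitative $1/n$ gain from that identity is exactly the hard step. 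The ``Gaussian Poincar\'e'' route requires a Poincar\'e constant for the law of $\sqrt{t}\,X_i/\sqrt{n}+\sqrt{1-t}\,G_i$, which for bounded (possibly discrete) $X_i$ degrades exponentially in $\beta^2/\big((1-t)\lambda_{\min}(\Sigma)\big)$ and would in any case introduce a dependence on $\lambda_{\min}(\Sigma)$ that the stated bound does not have (the theorem allows singular $\Sigma$; your insertion of $\Sigma^{-1/2}$ in the drift-energy bound has the same issue). Closing this gap is precisely what the analysis in \cite{eldan2018clt} (control of the Hilbert--Schmidt fluctuations of the instantaneous covariance of the embedding) or Bonis's semigroup/Stein-type expansions accomplish, so as written the proposal is a plausible outline of a known alternative strategy, not a proof. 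Minor further slips --- the marginal of the F\"ollmer bridge started at $0$ is $t S_n + \sqrt{t(1-t)}\,\Sigma^{1/2}\xi$, not $\sqrt{t}\,S_n + \sqrt{1-t}\,\Sigma^{1/2}\xi$, and the explicit constant $6$ would need careful tracking --- are fixable once the central estimate is in place.
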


\begin{corollary}
\label{c:clt_sum}
Let $X_1...X_n$ be random vectors with mean 0, covariance $\Sigma$, and $\lrn{X_i}\leq \beta$ almost surely for each $i$. let $Y$ be a Gaussian with covariance $n\Sigma$. Then
\begin{align*}
W_2\lrp{\sum_i X_i, Y}\leq 6\sqrt{d}\beta \sqrt{\log n}
\end{align*}
\end{corollary}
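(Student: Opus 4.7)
The plan is to reduce the corollary directly to Theorem~\ref{t:zhai} via a simple scaling argument. First I would observe that $\sum_{i=1}^n X_i = \sqrt{n}\, S_n$, where $S_n := \frac{1}{\sqrt{n}} \sum_{i=1}^n X_i$ is exactly the quantity bounded in Theorem~\ref{t:zhai}. Similarly, if $Z$ is a Gaussian with covariance $\Sigma$, then $\sqrt{n}\, Z$ is a Gaussian with covariance $n\Sigma$, and so has the same law as $Y$.

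Next I would invoke the positive-homogeneity of the $W_2$ distance under scalar multiplication: for any constant $c>0$ and any random vectors $A,B$, we have $W_2(cA, cB) = c\, W_2(A, B)$. This follows immediately from the definition of $W_2$ as the infimum over couplings, since scaling both marginals by $c$ scales all Euclidean distances in the coupling by $c$. Applying this with $c = \sqrt{n}$, $A = S_n$, $B = Z$ gives
\begin{align*}
W_2\!\left(\textstyle\sum_i X_i,\ Y\right) = W_2(\sqrt{n}\, S_n,\ \sqrt{n}\, Z) = \sqrt{n}\cdot W_2(S_n, Z).
\end{align*}

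Finally, I would substitute the bound from Theorem~\ref{t:zhai}, namely $W_2(S_n, Z) \leq 6\sqrt{d}\,\beta\,\sqrt{\log n}/\sqrt{n}$, into the right-hand side. The factor of $\sqrt{n}$ cancels and yields the claimed bound $6\sqrt{d}\,\beta\,\sqrt{\log n}$. There is no real obstacle here; the entire argument is a one-line rescaling, with the only thing to verify being that the hypotheses of Theorem~\ref{t:zhai} (zero mean, covariance $\Sigma$, uniform norm bound $\beta$) transfer verbatim to the setup of the corollary, which is immediate since we are using the same $X_i$'s.
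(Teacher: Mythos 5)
Your argument is correct and matches the paper's proof, which simply notes that the corollary follows from Theorem~\ref{t:zhai} by multiplying both sides of the inequality by $\sqrt{n}$ and using the scaling property of $W_2$. Nothing to add.
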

This is simply taking the result of Theorem \ref{t:zhai} and scaling the inequality by $\sqrt{n}$ on both sides.

The following Lemma is taken from \cite{cheng2019quantitative} and included here for completeness.
\begin{lemma}\label{l:xlogxbound}
For any $c> 0$, $x> 3 \max\lrbb{\frac{1}{c} \log \frac{1}{c},0}$, the inequality
\begin{align*}
\frac{1}{c}\log(x) \leq x
\end{align*}
holds.
\end{lemma}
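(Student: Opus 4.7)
The plan is to reformulate the claimed inequality as $\log(x) \leq c\,x$ and study the function $F(x) := c x - \log(x)$ on the positive reals, aiming to show $F(x)\geq 0$ on the range of $x$ specified by the hypothesis. Since $F'(x) = c - 1/x$, $F$ is strictly convex, decreasing on $(0,1/c)$ and increasing on $(1/c,\infty)$, with global minimum $F(1/c) = 1 + \log(c)$. I will split on the sign of $\log(c)$.

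First I will dispose of the easy regime $c \geq 1/e$, where the minimum value $1 + \log(c)$ is already nonnegative, so $F(x) \geq 0$ for \emph{all} $x > 0$; in particular for any $x$ satisfying the hypothesis. (Note that when $c\geq 1$, the hypothesis reduces to $x>0$, and when $1/e \leq c < 1$, $\tfrac{3}{c}\log(1/c)$ is finite but irrelevant since the inequality holds on the whole positive axis.)

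The substantive case is $0 < c < 1/e$. Here $\log(1/c) > 1$, so the threshold $x_0 := \tfrac{3}{c}\log(1/c)$ satisfies $x_0 > 3/c > 1/c$, which places it to the right of the minimizer of $F$. Since $F$ is increasing on $(1/c,\infty)$, it suffices to show $F(x_0) \geq 0$. Setting $u := \log(1/c) > 1$, a direct computation gives
\begin{align*}
F(x_0) \;=\; c x_0 - \log(x_0) \;=\; 3u - \bigl(\log 3 + u + \log u\bigr) \;=\; 2u - \log(3u).
\end{align*}
Thus everything reduces to the one-variable inequality $2u \geq \log(3u)$ for $u > 0$. This follows by minimizing $g(u) := 2u - \log(3u)$: $g'(u) = 2 - 1/u$ vanishes at $u = 1/2$, where $g(1/2) = 1 - \log(3/2) > 0$, so $g \geq g(1/2) > 0$ on $(0,\infty)$. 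Combined with the monotonicity of $F$ past $1/c$, this yields $F(x) \geq F(x_0) \geq 0$ for all $x \geq x_0$, completing the case.

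Since neither step involves any delicate estimate, I do not anticipate a real obstacle; the only subtlety is keeping track of the hypothesis $x > 3\max\{\tfrac{1}{c}\log(1/c), 0\}$ so that in the interesting regime $c < 1/e$ the threshold $x_0$ actually lies to the right of the minimizer $1/c$ of $F$, which is exactly what the factor $3$ (together with $\log(1/c) > 1$) guarantees. Assembling the two cases then yields the lemma.
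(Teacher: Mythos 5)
Your proof is correct, and it takes a genuinely different route from the paper's in the substantive regime $0 < c < 1/e$. The paper handles this case by inverting the relation through the lower branch $W_{-1}$ of the Lambert $W$ function and invoking an external quantitative bound (Theorem 1 of Chatzigeorgiou) to show that the hypothesis $x \geq \tfrac{3}{c}\log\tfrac{1}{c}$ implies $x \geq \tfrac{1}{c}\lrp{-W_{-1}(-c)}$, after which a monotonicity argument for $y e^y$ finishes. You instead work directly with $F(x) = cx - \log x$: since $\log(1/c) > 1$ places the threshold $x_0 = \tfrac{3}{c}\log\tfrac{1}{c}$ to the right of the minimizer $1/c$, monotonicity of $F$ on $(1/c,\infty)$ reduces everything to the clean one-variable check $2u \geq \log(3u)$ with $u = \log(1/c)$, settled by elementary calculus (your computations $F(x_0) = 2u - \log(3u)$ and $\min_u g = 1 - \log(3/2) > 0$ are both correct). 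Your argument is self-contained and avoids special functions and the cited external inequality, at no cost in generality; the paper's Lambert-$W$ route has the mild advantage of exhibiting the exact threshold $\tfrac{1}{c}\lrp{-W_{-1}(-c)}$ below which the factor $3$ is being compared, but for the lemma as stated your elementary proof is arguably preferable. The easy case $c \geq 1/e$ is handled identically in both arguments.
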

\begin{proof}
We will consider two cases:

\textbf{Case 1}: If $c\geq \frac{1}{e}$, then the inequality
$$\log(x) \leq c x$$ is true for all $x$.

\textbf{Case 2}: $c \leq \frac{1}{e}$.

In this case, we consider the Lambert W function, defined as the inverse of $f(x) = x e^x$. We will particularly pay attention to $W_{-1}$ which is the lower branch of $W$. (See Wikipedia for a description of $W$ and $W_{-1}$).

We can lower bound $W_{-1}(-c)$ using Theorem 1 from
\cite{chatzigeorgiou2013bounds}:
\begin{align*}
& \forall u>0,\quad W_{-1} (-e^{-u-1}) > -u-\sqrt{2u} -1\\
\text{equivalently}\quad &\forall c\in (0,1/e),\quad -W_{-1} (-c) < \log\lrp{\frac{1}{c}} + 1 + \sqrt{2\lrp{\log\lrp{\frac{1}{c}}-1}}-1 \\
&\qquad \qquad\qquad\qquad\qquad\ \ = \log\lrp{\frac{1}{c}} + \sqrt{2\lrp{\log\lrp{\frac{1}{c}}-1}}\\
&\qquad \qquad\qquad\qquad\qquad\ \ \leq 3\log \frac{1}{c}
\end{align*}

Thus by our assumption,
\begin{align*}
& x\geq 3\cdot \frac{1}{c}\log\lrp{\frac{1}{c}}\\
\Rightarrow & x\geq \frac{1}{c}\lrp{-W_{-1}(-c)}
\end{align*}

then $W_{-1}(-c)$ is defined, so
\begin{align*}
&x \geq \frac{1}{c}\max\lrbb{-W_{-1} (-c), 1}\\
\Rightarrow & (-cx) e^{-cx} \geq -c\\
\Rightarrow & x e^{-cx} \leq 1\\
\Rightarrow & \log(x) \leq cx
\end{align*}

The first implication is justified as follows:
$W_{-1}^{-1}: [-\frac{1}{\epsilon}, \infty) \to (-\infty, -1)$ is monotonically decreasing. Thus its inverse $W_{-1}^{-1}(y) = ye^y$, defined over the domain $(-\infty, -1)$ is also monotonically decreasing. By our assumption, $-cx \leq -3 \log \frac{1}{c} \leq -3$, thus $-cx \in (-\infty, -1]$, thus applying $W_{-1}^{-1}$ to both sides gives us the first implication.
\end{proof}

\end{section}

\section{Experiment Details}\label{apx:experiments}
In this section, we provide additional details of our experiments. In particular, we explain the CNN architecture that we use in our experiments. Denote a convolutional layer with $p$ input filters and $q$ output filters by $\mathsf{conv}(p, q)$, a fully connected layer with q outputs by $\mathsf{fully\_connect}(q)$, and a max
pooling operation with stride 2 as $\mathsf{pool2}$. Let $\mathsf{ReLU}(x) = \max\{x, 0\}$. Then the CNN architecture in our paper is the following:
\begin{align*}
&\mathsf{conv}(3, 32) \Rightarrow \mathsf{ReLU} \Rightarrow \mathsf{conv}(32, 64) \Rightarrow \mathsf{ReLU} \Rightarrow \mathsf{pool2} \Rightarrow \mathsf{conv}(64, 128) \Rightarrow \mathsf{ReLU} \Rightarrow
\mathsf{conv}(128, 128)  \\
&\Rightarrow \mathsf{ReLU} \Rightarrow \mathsf{pool2} \Rightarrow \mathsf{conv}(128, 256) \Rightarrow \mathsf{ReLU} \Rightarrow \mathsf{conv}(256, 256) \Rightarrow \mathsf{ReLU} \Rightarrow
\mathsf{pool2} \Rightarrow \mathsf{fully\_connect}(1024) \\
&\Rightarrow \mathsf{ReLU} \Rightarrow \mathsf{fully\_connect}(512) \Rightarrow \mathsf{ReLU} \Rightarrow \mathsf{fully\_connect}(10).
\end{align*}
\end{document}